\numberwithin{algorithm}{chapter}
\newtheorem{theorem}{Theorem}[section]
\newtheorem{cor}[theorem]{Corollary}
\newtheorem{prop}[theorem]{Property}
\newtheorem{lemma}[theorem]{Lemma}
\newtheorem{defn}[theorem]{Definition}
\newtheorem{definition}[theorem]{Definition}
\newtheorem{notation}[theorem]{Notation}
\newcommand{\setm}[2]{\{#1|#2\}}
\newcommand{\form}[1]{\left\langle #1 \right\rangle}
\begin{document}

\title{ \large{ Thesis for the degree Master of Science}  \\
\huge{ Theoretical and Experimental Analysis of the Canadian Traveler Problem}
\author{Submitted by: Doron Zarchy \\ \\Advisor: Prof. Eyal Shimony 
 \\ \\ \\ \\ \\ \\ 
 Department of Computer Science \\ Faculty of Science\\ Ben Gurion University of the Negev
 }}
\maketitle

\baselineskip=24pt
\pagenumbering{roman}

\begin{abstract}
Devising an optimal strategy for navigation in a partially observable environment is one of the key objectives in AI. 
One of the problem in this context is the {\em Canadian Traveler Problem} (CTP). CTP is a navigation problem where an agent is tasked to travel from source to target in a partially observable weighted graph, whose edge might be blocked with a certain probability and observing such blockage occurs only when reaching upon one of the edges end points. The goal is to find a strategy that minimizes the expected travel cost. The problem is known to be P$\#$ hard.
In this work we study the CTP theoretically and empirically.
First, we study the Dep-CTP, a CTP variant we introduce
which assumes dependencies between the edges status. We show that Dep-CTP is intractable, and further we analyze two of its subclasses on disjoint paths graph. 
Second, we develop a general algorithm that optimally solve the CTP called {\em General Propagating AO*} (Gen-PAO). Gen-PAO is capable of solving two other types of CTP called Sensing-CTP and Expensive-Edges CTP. Since the CTP is intractable, Gen-PAO use some pruning methods to reduce the space search for the optimal solution. We also define some variants of Gen-PAO, compare their performance and show some benefits of Gen-PAO over existing work.
\end{abstract}

\pagenumbering{arabic}
\tableofcontents
\pagestyle{headings}

\chapter{Introduction}
Planning under uncertainty is one of the most investigated problems in AI.
In the real world, efficient navigation requires operation in a partially unknown or dynamically changing environment.
Consider a situation where a taxi driver wants to reach his destination in the city in the shortest possible time. The experienced driver knows the road map, and length of each road. Still, the driver does not necessarily have a complete knowledge of the roads' current status. Some of the roads may be blocked due to traffic jams or police blockades. The driver needs to devise a strategy to reach the destination in the shortest expected time.

A formal model for this kind of problem is the {\em Canadian Traveler Problem} (CTP). CTP is a stochastic navigation problem, introduced by 
\cite{PAPA91} where an agent is aimed to travel in a weighted graph $G=(V,E)$ from a source vertex $s\in V$ to a target vertex $t\in V$. Each time the agent traverses an edge it pays a travel cost which is defined by the edge weight. The agent has complete knowledge of the graph structure and the edge costs. However, some of the edges may be blocked with a known probability. The agent observes such blockage only when the agent physically arrives a vertex that is incident to that edge. The goal is to find a strategy for reaching from $s$ to $t$ that minimizes the expected cost.

\cite{PAPA91}  showed that finding the optimal solution for the Canadian Traveler Problem was shown to be P$\#$ hard 
. However some special classes of CTP such as CTP on disjoint path graphs 

and CTP on directed acyclic graphs 
 are solvable in polynomial time \cite{NIKOLOVA08,BNAYA09}.


In this work, we explore certain variations of the CTP. The first variation introduced is {\em CTP with Dependencies} (Dep-CTP). In the original problem, the distribution over the edges is independent. Dep-CTP is a generalization of CTP where we assume that dependencies exist between the status of a particular edge with the status of other edges. Specifically, we are given a Bayesian network that defines the dependencies between the edges. The second variant is {\em CTP with remote sensing}(CTP with sensing), introduced by \cite{BNAYA09}.
In CTP with sensing, an agent may perform sensing on any edge, with a given sensing cost, in order to reveal its status. The third variant is {\em Expensive-Edge CTP}, a variant of CTP in which edges cannot be blocked, but are expensive and incurs a high travel cost when traversed.
      
This work contains two different approaches for studying the CTP, by theoretical analysis and by experimental analysis.  Regarding the theoretical aspect, we attempt to classify certain classes of Dep-CTP by their computational complexity using probabilistic models as belief-MDP and AND/OR graphs, and we show some general properties for CTP with sensing.
Regarding the empirical aspect, we introduce the {\em Gen-PAO algorithm}, a generalization of PAO*
 \cite{Ferguson2004} 
 that optimally solves the CTP, CTP with sensing and Exp-CTP. Gen-PAO uses several pruning methods to reduce the size of the state space search and running time. In addition, we explore the {\em value of clairvoyance} which represent the value of having full knowledge of the graph.     
  
The remainder of this work is organized as follows: Chapter 2 contains formal definitions of the Canadian Traveler Problem and its variants.  In addition it contains a background for decision and probabilistic models and reviews a number of related algorithms. Chapter 3 shows some proofs concerning the hardness of Dep-CTP and for two of its subclasses. In addition, some theoretical properties concerning the CTP with sensing are shown. Chapter 4 introduces the Gen-PAO algorithm and some of the pruning methods it uses. Chapter 5 provides empirical results, comparing the performance of Gen-PAO and some of its variants. In addition, results concerning the value of Clairvoyance are presented. Chapter 6 summarizes this work and discusses possible directions for future research. Appendix A presents some of the instances in which empirical analysis is used .
%
%

\chapter{Background}

\section{Markov Decision Process}
A Markov Decision Process(MDP) is a framework for sequential stochastic decision problems with a fully observable environment.
Formally, MDP is defined by the tuple $<S,A,T,R>$, where
\begin{itemize}
\item S is a finite set of states where $s\in S$ describes the environment at a specific time step.
\item A is a set of actions.
\item $T:(S\times A \times S)\rightarrow [0,1]$ is the transition function where 
$T(s,a,s')$ specifying the probability of entering a state $s'\in S$ given the previous state $s\in S$ and the action $a\in A$.
\item $R:(S\times A \times S)\rightarrow \mathbf{R}$ is the reward function which specifying the reward $R(s,a,s')$ that is received by transitioning from state $s\in S$ to state $s\in S'$ after performing action $a\in A$ in s. 
\end{itemize}

 At each time step, the agent is state $s\in S$ chooses an action $a\in A$, reaches the state $s'\in S$ with probability T(s,a,s'), and obtain reward R(s,a,s').
 A deterministic variant to MDP defines deterministic actions, where each pair of action $a$ and state $s$ specifies deterministically the result state $s'$ i.e. There exist a state $s'\in S$ in which $T(s,a,s')=1$
and for every $\hat{s}\neq s$ $T(s,a,\hat{s})=0$.
The model assumes that the transitions are Markovian in a sense that the probability of reaching a state depends only on the previous state and the action instead of a history of earlier states. The solution of the MDP is a policy. A policy $\pi:S\rightarrow A$ is a mapping from a set of states to a set of actions. At each time step, a given policy is executed, starting from an initial state $s_0$. By having a complete policy, the agent will always know what to do next. However, the stochastic nature of the environment will lead to a different environment history. 
The decision making problem may be a finite horizon or an infinite horizon.
A finite horizon constrains the time steps that the agent exists (or equivalently, considers the rewards after time N as zeros). 
In this case the utility function is usually an additive reward function:
\[
\sum_{t=0}^{t=\infty}{(s_t,a,s_{t+1})} \text{ Where } a=\pi(s_t)
\]
However, in infinite horizon the time sequence is unbounded.  
In infinite horizon the utility function is computed with discount rate $0<\gamma<1$:
\[
\sum_{t=0}^{t=\infty}{\gamma^t(s_t,a,s_{t+1})} \text{ Where } a=\pi(s_t)
\]
This utility is called discounted reward.
Usually, the performance of an agent is the utility of sequence of states, which is measured by the sum of rewards for the states visited. 
The utility of a policy $\pi(s)$ in state s is the expected cost over all possible state sequences, starting from s until the MDP terminates.
The utility of a policy $\pi(s)$ in finite horizon is computed using dynamic programming:
\begin{eqnarray}
V_{\pi,0}(s) &=& R(s,\pi(s),\acute{s})\nonumber\\
V_{\pi,n}(s) &=& \sum_{\acute{s}\in S} T(s,\pi (s),\acute{s}) \cdot(R(s,\pi (s),\acute{s})+ \gamma \cdot V_{\pi,n-1}(\acute{s}))\nonumber
\end{eqnarray}
Where the utility of a policy $\pi(s)$ in finite horizon is given by,
\begin{equation}\label{1} 
V_{\pi}(s)=\sum_{\acute{s}\in S}T(s,\pi (s),\acute{s}) \cdot (R(s,\pi (s),\acute{s})+ \gamma \cdot V_{\pi}(\acute{s}))
\end{equation}
In an infinite horizon we usually have a terminal state.
The optimal policy is a policy that yields the highest expected utility(or lowest, depends on the specification of the problem).
Given a policy $\pi$, the value of the policy in state s can be computed by an algorithm called value iteration.
The value iteration algorithm computes the value of every state s under policy $\pi$ using a reasoning process that goes backwards in time, from the end, in order to determine the optimal sequence of actions. Once choosing the last action, we can determine the best second-last action etc. This process continues until received a best action for all states. 
We compute the value of each state $s$ under the optimal policy $\pi^*$ using the Bellman equations:
\begin{equation}\label{2}
V^* (s)=MAX_{\pi (s) \in A} \sum_{\acute{s}\in S}T(s,\pi (s),\acute{s}) \cdot (R(s,\pi (s),\acute{s})+ \gamma \cdot V^*(\acute{s}))
\end{equation}
This is process is iterated until it reaches equilibrium which indicates the convergence of the algorithm.

\subsection{Policy Iteration}
Another approach for solving MDP is policy iteration. Policy iteration is a feedback strategy obtained by iterative search in the space of policies. 
The algorithm is based on two steps:  The first step is the evaluation where the algorithm evaluate the values of the states given a set of a action for each state is given by: 
\begin{equation}
V^{\pi_k}_{i+1}(s)\leftarrow \sum_{s'}{T(s,\pi_k(s),s')[R(s,\pi_k(s),s')+\gamma V_i^{\pi_k}(s')]}\nonumber
\end{equation}
this can be done by solving a set of linear equations. 
After the values are computed for the given actions, the algorithm makes the second step: improvement. The algorithm considers whether it can improve the policy by choosing a new action for the state. If such action exists, the policy execute the new action. 
\begin{equation}
\pi_{k+1}(s)=argmax_{a} \sum_s'{T(s,a,s')[R(s,a,s')+\gamma V^{\pi_k}(s')]}\nonumber
\end{equation}
The algorithm guarantees that each iteration strictly improves the value of the policy.
Therefore, the policy stops when there are no available actions that improve the policy cost.
The number of possible policies cannot be more than $|S|^|A|$ where $|S|$ is the number of states and $|A|$ is the number of actions. We know that the policy improves at each iteration and the number of possible policies is $|S|^|A|$, thus the algorithm finds the optimal policy within no more than $|S|^|A|$ iterations.

\section{Partially Observable Markov Decision Process}

A partially observable Markov decision process (POMDP) is a generalization of the standard MDP, such that the environment is not fully observable, and allows imperfect information about the current state of the environment. In the real world the input may not always be precise where the data may be received with a noise. In robot navigation for instance, the robot will receive its input through sensors which do not describe the environment precisely. Sonar or voice sensors most of the time will probably be a bit noisy and digital video lose information by using a discrete presentation to describe a continuous environment. 
The POMDP is used as a framework for theoretical decision making and reasoning under uncertainty. Such problems arise in a wide range of application domains including assisting technologies, mobile robotics and preference elicitation.
Many of the real POMDP problems are naturally modeled by a continuous states and observations. For instance, in a robot navigation task, the state will correspond to the coordinates in the space and the observations may correspond to the distance measured by the sonar. A common approach to a continuous model requires of discretization and approximation the continuous component of the grid. This usually leads to an important tradeoff between complexity and accuracy with the change of the coarsens of the discretization.
On discrete time POMDP, each time period the agent is in some state s, chooses an action a, and receive a reward with expected value. Performing the action, the agent makes a transition to a new state according to some state distribution and observes the environment with a given probability to each state.

Formally, POMDP is an extension of the MDP defined by the tuple $<S,A,T,\Omega,R>$ where:
S is a finite set of state that represents the current situation in the environment.
A is a set of actions where the agent choose in each state.
T(Transition function) is a function that maps $S \times A$ into a distribution over the states $\acute{S}$.
$T(\acute{s}|s,a)$ is the probability to reach $\acute{s}$ where the agent is at state s and perform action a.
R is the reward function. R maps any $S \times A \times \acute{S}$ into a number which represents the reward or the penalty.
The observation function $\Omega (s',a,o)$ describes the probability of observation o given that action a was performed in state s' was reached.

Generally, in POMDP  we do not know the current state. The only information that is given on the environment is the observations. Therefore, POMDP defines a vector of probabilities b(s) in the size of the state set, called belief state, which specify for each state s, the probability that the environment is in s.

Similarly to MDP, The goal of the POMDP is to construct a policy $\pi^*$ which maximizes the expected rewards 
$E[\sum^{T}_{t=o}{\gamma^tr(s_t,a_t,s'_t)}]$
where T is the number of time steps left in a finite horizon, or $T=\infty$ in an infinite horizon. Since the agent does not know the exact state of the environment, the reward function is given by the belief state i.e. $R(b,a)=\sum_{s\in S} b(s)R(s,a)$, or in the case of continuous belief space the sum becomes an integral.
The belief state of the environment is based on the previous belief state of the environment. Thus, the agent updates the belief b(s') after being at belief state b(s),choosing action $a$ and receiving an observation $o$ in the following way:
\begin{equation}
P(b'|a,b) = \sum_{o}P(b',o|a,b)=\sum_{o}P(b'|o,a,b)P(o|a,b)\nonumber
\end{equation}
 using the product rule we get 
\noindent
\begin{eqnarray}
P(o|a,b) &=&  \sum_{s \in S}P(o,s'|a,b)=\sum_{s \in S}P(o|s'a,b)P(s'|a,b)\nonumber\\ 
P(o|s'a,b) &=& P(o|s',a)=\Omega(o,s',a)\nonumber\\  
P(s'|a,b) &=& \sum_{s}P(s',s|a,b)\nonumber \\ &=& \sum_{s}P(s'|s,a,b)\overbrace{P(s|a,b)}^{belief state for s}=\sum_{s}P(s'|s,a)b(s)\nonumber
\end{eqnarray}
\noindent
When we put it all together we get:
\begin{equation}
P(b'|a,b)=\sum_{o}P(b'|o,a,b)\sum_{s'}\Omega(s',o,a)\sum P(s'|s,a)b(s)\nonumber
\end{equation}
$P(b'|o,a,b)=1$ where $P(b'|o,a,b)=forward(o,a,b)$

Let L be the number of time $P(b'|o,a,b)=1$

Therefore, 
\begin{equation}
P(b'|a,b)=L\cdot \sum_{s'}\Omega(s',o,a)\sum_{s} P(s'|s,a)b(s)\nonumber
\end{equation}

A generalization on the discrete POMDP is where the space of the belief state is continuous.
In this case, we still assume the the actions and observation are discrete, the propagation is defined by the integral
\begin{equation}
 P(s'|a,s)=\int_{s \in S}{P(s'|s,a)P(s)ds} \nonumber
\end{equation}

\subsection{Value Iteration}
Defining the probability update and the reward function for belief state we can can transform the POMDP into a belief state MDP by casting the POMDP problem into a fully observable MDP, where the belief state of the POMDP are reduced to simple state of the MDP. The MDP here is continuous and over $|S|$-dimensional state space.
The transformation allows applying a value function for each belief state according to the Bellman equation:
\begin{equation}
V^*(b)=MAX_{a \in A}[r(b,a)+\gamma \sum_{b \in B} \tau (b',a,b)V(b^{a}_{o})] \nonumber 
\end{equation}
This means that the value of belief state b is the reward of taking the best action in b plus the discounted expected reward of the resulting belief state $V(b^{a}_{o})$ where $b^{a}_{o}$ is the unique belief state computed based on b,a,o as in equation---. Solving the value iteration by dynamic programing will bring optimal solution at the limit, however, the space size over all the belief states that have to be backed up is enormous.
Because exact value iteration is intractable, a lot of work has focused on approximate algorithms.
One of the most promising approaches for finding an approximate solution point based value iteration (PBVI). In PBVI instead of optimizing the value function over the entire belief state, only specific reachable beliefs are considered. The belief points are selected heuristically and the values are computed only for these points. The heuristic simulate trajectories in order to find reachable beliefs.The success of PBVI depends on the selection of the belief points.  In particular the belief points should cover the space as evenly as possible. The set of belief state is expanded over time in order to cover more of the reachable belief state. Adding more point increases the accuracy of the value function.    


The key to practical implementation of a dynamic programming algorithm is a piecewise-linear and convex representation of the value function. The reward function $r(b,a)$ as defined above is linear. The exact solution of POMDP is based on Smallwood and Sondik(1973) proof which takes advantage of the fact that the exact solution is piecewise-linear convex functions and can be represented by $|S|$ hyperplanes in the space of beliefs. Each hyperplane is a value function V over $|S|$ real numbers represented by $V={v^1,v^2,...,v^k}$ where the value of each belief state is defined as follows: 
\begin{equation}
v(b)=MAX_{0\leq i \leq k}\sum{b(s)v^i(s)}\nonumber
\end{equation}
 Each hyperplane correspond to a single action, and the value iteration updates can be performed directly on these hyperplanes.

\section{The Canadian Traveler Problem}

In the {\em Canadian traveler problem}(CTP) [Papadimitriou and Yannakakis, 1991] 
a traveling
agent is given a tuple (G,P,w,s,t) as input where $G=(V,E)$ connected weighted graph that consists initial source vertex ($s \in V$), and a target vertex ($t \in V$).
The input graph $G$ may undergo changes, that are not known to the agent,
{\em before} the agent begins to act, but remains fixed subsequently.
In particular, some of the edges in $E$ may become {\em
blocked} and thus untraversable. Each edge $e$ in $G$ has a weight, or
cost, $w(e)$, and is blocked with a probability $P(e)$, where $P(e)$ is
known to the agent.\footnote{Note that it is sufficient to deal only with
blocking of {\em edges}, since a blocked vertex would have all of its
incident edges blocked.} The agent can perform {\em move} actions along an
unblocked edge which incurs a {\em travel cost} $w(e)$.
Traditionally, the CTP was defined such that the status of an
edge can only be revealed upon arriving at a node incident to that edge,
i.e., only {\em local sensing} is allowed. In this paper we call this
variant the {\em basic} CTP variant. The task of the agent is to travel
from $s$ to $t$ while aiming to minimize the total travel cost
$C_{travel}$. As the exact travel cost is uncertain until the end, the
task is to devise a traveling strategy which yields a small (ideally
optimal) {\em expected} travel cost.

A somewhat more general version of CTP is {\em CTP with sensing}. CTP with sensing
is a tuple $(G,P,SC,w,s,t)$, where in this variant, in addition to move actions (and local sensing), an agent
situated at a vertex $v$ can also perform a {\em Sense} action and query
the status of any edge $e \in G$. This action is denoted {\em
$sense(v,e)$}, and incurs a cost $SC(v,e)$, or just
$SC(e)$ when the cost does not depend on $v$. The cost function is
domain-dependent, as discussed below. The task of the agent is to
travel to the goal while minimizing a total cost $C_{total} =
C_{travel}+C_{sensing}$.

We further generalize CTP to allow dependencies between edges,
and non-binary edge weight distributions. In this general form,
CTP-Gen is a 5-tuple $(G, W, SC, s, t)$ where $G=(V,E)$ is a graph,
$W$ is a distribution over weights of the edges $E$, 
$SC:V\times E \rightarrow {\cal R}^+$ is a
sensing cost function, $s, t \in V$ are the start and goal vertices,
respectively. The distribution model $W$ is over random variables indexed
by the edges in $E$, abusing notation we will use the edges in
place of the respective random variables. The domain of these random variables
are arbitrary weights or cost sets. $W$ is usually specified as a structured
distribution model over the random variables $e\in E$. Henceforth we assume
that $W$ is specified as a Bayes network 
$(E, A, P)$ over these random variables, where $E$ is the set of random
variables, $A$ is a set of directed arcs so that $(E,A)$ is a directed acyclic
graph, and $P$ are the conditional probability tables, one for each $e\in E$.

We mostly limit ourselves to the
binary case where the edges can be blocked (``infinite weight'') or
open (some known weight, possibly different for each edge).
In these cases, and to simplify the resentation of
the distribution, we use
a uniform binary domain $\{ Blocked, Unblocked \} $
for the edges, and describe the weight of the (unblocked) edges separately,
by a weight function $w : E \rightarrow {\cal R}^+$.
In the degenerate binary case where $W$ is a
Bayes network with no arcs ($A = \emptyset $),
i.e. all random variables are independent, 
the problem reduces back to the basic CTP with sensing. In this case we
usually specify the distribution as a function $p : E \rightarrow [0,1] $,
the probability that each edge $e\in E$ is blocked.

\subsection{CTP with Dependencies in Disjoint Path Graphs}

As CTP-Gen is extremely complicated, we focus on some special cases
w.r.t. the topology of the graph $G$. Specifically, we examine the basic CTP
with no remote sensing where $G$ is a disjoint-path graph (w.r.t. $s, t$).
As this case is known to be solvable in closed form in polynomial time,
we generalize it to the case where edges are dependent, and edge weights
are binary (blocked/unblocked) random variables. Thus we consider CTP-DEP,
defined by the 5-tuple $(G, W, w, s, t)$ where $G$ is an undirected CTP
graph, $W$ is a Bayes network representing the edge blocking
distribution model, $w$ is a function denoting the edge weights
(for unblocked edges), and
$s, t$ are the start and goal vertices respectively, as usual.

As we will show, finding an optimal problem for CTP-DEP is intractable
even for special cases, and we will thus consider cases where $W$
has dependencies only between edges on the same path. Thus the Bayes network
$W$ representing the distribution model has one (or more) unconnected component,
for each set of edges composing a path. We call this simplified
variant CTP-PATH-DEP.

In disjoint path graph, we index the edges such that each edges has two indexes where the first index indicates the path and second index indicates the serial location of vertex in the path.
For instance $e_{i,1}, e_{i,2}, ...e_{i,ki}$ are the edges composing the $i$'th path.
Similarly to edges, we index the vertices such that the first vertex indicates the path and second index indicate the serial location of the vertex in the path.
$s,v_{i,1}, v_{i,2}, ...v_{i,mi},t$ are the vertices composing the $i$'th path. Note that each edge $e_{i,j}$ can be represented by $(v_{i,j-1},v_{i,j})$

\section{AND/OR Graphs}\label{sec:AO*}
Many problems in artificial intelligence can be formulated as a framework for problem solving in a state space search. The AND/OR graph is a directed graph that represent a problem solving process. The solution of AND/OR is a sub-graph of the AND/OR, called {\em solution graph}, that is a derivation for the optimal solution of the original problem. In this work, we use the AND/OR graph for finding optimal solutions to probabilistic reasoning problems and CTP in particular. With a slight abuse of notation we use the same notation \textit{graph} to indicate both CTP graph and AND/OR graph. 

Formally, an AND/OR graph is a tuple $G_{AO} = (N_{AND} \cup N_{OR}, E, T, c, p)$ defines as follows: 
\begin{itemize}
\item $N = N_{AND} \cup N_{OR}$ where $N_{AND},  N_{OR}$ are finite sets of nodes. 
\item $T \subset N_{OR}$ is the set of terminal leaf nodes. 
\item $E \subset (N_{AND}\times N_{OR}) \bigcup (N_{OR}\times T\backslash N_{AND})$ is a set of directed edges between 
the nodes.
\item $c : E \cup T \rightarrow \cal{R}$ is a cost function over the edges and terminal cost. 
\item The graph associate probabilities $p(n, n_0)$ over the  edges $(n,n_0)$ such that,  
 for every $n \in N_{AND}$  $\sum_{\form{n,n'}\in E}{p\form{n,n'}}=1$.
\end{itemize}
 The root node of $G_{AO}$ is denoted by $n_0$.
A policy graph of the AND/OR graph is a subgraph $H=(N_H,E_H)$ of $G_{AO}$ such that 

\begin{itemize}
\item $n_0\in N_H$
\item If $n \in N_H \backslash T$ is AND node, all its children are in H.
\item If $n \in N_H$ is OR node then only one of its children is in H. 
\item Every leaf node (node with no children) in $G_{AO}$ is terminal.
\end{itemize}
and,
\begin{itemize}
\item If $n \in N_H \backslash T$ is AND node, all outgoing edges are in $E_H$.
\item If $n \in N_H$ is OR node, and $n'\in H$ is a child of n, then $\form{n,n'}\in E_H$. 
\end{itemize}

Define $subpolicy$ of node $n$ denoted by $SH_n$ to be a subgraph of $G_{AO}$ that satisfies all properties of $H$ except that the root of $SH_n$ is an arbitrary node $n$ instead of $n_0$. 

The value of each node $n\in N_H$ is defined as follows:
\[
C_H(n) = \left\{ 
\begin{array}{l l}
  c(n) & \quad n\in T\\
  \min\limits_{\form{n,n'}\in E_H} c(\form{n,n'})+C_H(n') & \quad n\in N_{OR} \\
	\sum\limits_{\form{n,n'}\in E_H} p(\form{n,n'})(c(\form{n,n'})+C_H(n'))  & \quad n\in N_{AND}\\
\end{array} \right.
\]
The child of OR node with the minimal value is called called {\em preferred son}.
The cost of the policy graph is defined as $C_H(n_0)$. The policy graph is optimal if there are no other policy graphs with lower cost.
\newline We define a policy subgraph to be a subgraph of $N_H$ such that  
\begin{itemize}
\item $n_0\in N_H$
\item If $n \in N_H \backslash T$ is AND node, all its children are in H.
\item If $n \in N_H$ is OR node then only one of its children is in H. 
\end{itemize}
We define policy subgraph $H_s$ to be a subgraph of $G_{AO}$ that satisfies all properties except that the leafs are not necessarily terminal. 
The cost of the policy subgraph is defined as $C_{H_s}(n_0)$. The best policy subgraph is a policy subgraph with the minimal cost in the AND/OR graph.
 \subsection{AO*}
The AO* is an heuristic based search algorithm that performs a search in the AND/OR graph for finding the optimal policy graph. The AO* performs a search in the AND/OR graph, gradually building up a partial policy graph, assigning heuristic values to the leaves, and propagating the heuristic values up to the root. 
 The heuristics, used to evaluate the real cost of the nodes in AND/OR graph, are admissible, and therefore, finding the optimal solution is guaranteed. The AO* is beneficial when solving problems with a large state space. The AO* algorithm assumes that the AND/OR graph that represents the problem is not given, however the algorithm construct the AND/OR graph by expanding it each iteration, and thereby develop the optimal policy graph subgraph  each iteration. The process ends when all the leaf nodes of the partial policy graph are terminal.   
 \newline The AO* takes advantage of the fact that once a node is known to be in the optimal policy graph it does not required any further expansion. 
Thus, the algorithm maintains a boolean parameter called "SOLVED" for each node in the AND/OR graph which signs the algorithm if node is a part of the optimal policy, i.e. a node n is set {\em SOLVED}, performed by the operation MarkSolved(n), if $n$ is known to be in the optimal policy subgraph. Once a node is SOLVED, it remains SOLVED. A node $n$ is SOLVED if and only if all the nodes in the subpolicy $SH_n$ spanned from $n$, are solved. Hence, when a node $n$ is set SOLVED, the subpolicy $SH_n$ spanned from this node does not require any further update or expansion.
Implementing the ``solving'' process, the AO* performs MarkSolved(n), if node $n$ satisfies one of the following: 
\begin{itemize} 
\item $n$ is a terminal node
\item $n$ is an AND node and all its children are are set {\em SOLVED}.
\item	$n$ is an OR node and its preferred son is set {\em SOLVED}.
\end{itemize}

Basically, each iteration of the AO* algorithm has two phases: expansion and propagation, described as follows:
\begin{itemize}
\item Expansion phase:
\begin{enumerate}
\item Trace down the marked edges (directed edges) from $n_0$ and go downwards until reaching a non-terminal leaf node n  and expands it.
(Finding the expansion nodes requires recurrence exploration through the AND/OR graph since the partial policy graph is changing each iteration.) 

\item For each child $n_i\in n_1,n_2,...,n_k$ of n, if $n_i$ has not been generated, then add it to the policy graph and assign it admissible heuristic. If $n_i$ is a terminal node then assign 0 to its heuristic value, and perform $MarkSolved(n_i)$.
\end{enumerate}
\item propagation phase: 
 In the propagation phase, the heuristic values and marked edges of the expansion nodes are propagated from the leaves onward up to the root. The propagation processed as follows:
\begin{enumerate}
			\item[1] If n is OR node then its heuristic value is updated by,
				\begin{equation}\label{updateOR}
					h(n)=\min\limits_{i} c(\form{n,n_i})+C_H(n_i)
				\end{equation}
					The marked edge is directed from n to the child $n_i$ which achieves the minimum in equation \ref{updateOR}, and n is set SOLVED if and 					only if $n_i$ is set SOLVED.
			\item[2] If n is AND node then its heuristic value is updated by, 
				\begin{equation}\label{updateAND}
						\sum_{i} p(\form{n,n_i})(c(\form{n,n_i})+C_H(n_i)) 
				\end{equation}
				The node n is set SOLVED if and only if all its children are set SOLVED.
\end{enumerate}
The procedure of updating the heuristic values and marking the edges is repeated for all nodes ancestors of n.
 \end{itemize}

\begin{algorithm}
\caption{AO*}
\label{alg1}
procedure main(Graph $G_{AO}$, Node s)
\begin{algorithmic}[1]
\STATE \textbf{if} (s is terminal) return $G_{AO}$;
\WHILE {s is not marked SOLVED}
\STATE Trace down the marked edges in $G_{AO}$ from s, until reaching to all the non terminal leaf node $n_1,n_2,...,n_k$;
\STATE $L\gets \{n_1,n_2,...,n_k\}$;
\WHILE{$L\neq \phi$}
\STATE $n_i\gets Expand(G_{AO},s)$
\STATE remove $n_i$ form L;
\STATE $Z\gets ExtractAnsectors(n_i)\cup n_i$;
\STATE Propagate($G_{AO}$,Z)
\ENDWHILE
\ENDWHILE 
\newline
\end{algorithmic}

vertex function Expand(Graph $G_{AO}$, Node s)
\begin{algorithmic}[1]\label{expansionProcedure}
\STATE select $n_i\in L$  expand it and add its children $C=\{c_1,c_2,...,c_m\}$ to $G_{AO}$; 
\FOR {each $c_i\in C$}
\IF {$c_i$ is terminal}
\STATE $f(c_i)\gets 0$;
\STATE MarkSolved($c_i$);
\ELSE
\STATE $f(c_i)\gets h(c_i)$;
\ENDIF
\ENDFOR
\STATE return $n_i$
\newline
\end{algorithmic}

procedure Propagate(Graph $G_{AO}$, NodeSet Z) 
\begin{algorithmic}[1]
\WHILE{$Z\neq \phi$}
\STATE select $z_i\in Z$ such that $z_i$ has no children in Z;
\IF {$z_i$ is AND node}
\STATE $ f(z_i)\gets \sum\limits_{\substack{j\in successor(z_i)\\ a=(z_i,j)}}[tr(z_i,a,j)h(j)+c(a)]$; 
\IF {all successors of $z_i$ are marked SOLVED}
\STATE MarkSolved($z_i$)
\ENDIF
\ENDIF
\IF {$z_i$ is OR node}
\STATE $f(z_i)\gets \min\limits_{\substack{z_j\in successor(z_i)\\ a=(z_i,z_j)}}{[tr(z_i,a,z_j)h(z_j)+c(a)]}$;
\IF {$z_j$ is marked SOLVED}
\STATE mark the edge represents the chosen action a;
\STATE MarkSolved($z_i$);
\ENDIF
\ENDIF
\STATE remove $z_i$ from Z;
\ENDWHILE
\end{algorithmic}
\end{algorithm}
 
%
Properties of the AO*:
\begin{itemize}
\item	The heuristic values are optimistic estimations (lower bound) to the real value of the state, where each update raises up the heuristic value and reduces its imprecision relatively to the real value. 
\item	The AO* is beneficial when it applied to a large state space. One reason for this is that AO* considers only states that are reachable from the initial state. Secondly, the informative heuristic function directs the focus on states that are in the course of a good policy graph(partial policy graph). As a result, the AO* may find an optimal solution by exploring a small fraction of the entire state space.
\end{itemize}

\subsection{CTP and AND/OR graphs}
AND/OR graph is a natural structure for representing the state space of CTP, where the policy of CTP is represented by the policy graph. The problem solving process is a search for an optimal policy graph in a policy graph space. Here, the OR nodes represents the agent's decision in a current state out of all its available actions. Where, in basic CTP, the available actions are all the moves available from a certain vertex, while in Sensing-CTP, the available actions are all the available remote sensing actions, in addition to all available moves from a certain vertex(the remote sensing is defined available if it is performed on an unknown edge). The AND nodes represent the actions. Since the CTP is a stochastic problem, each action may result several possible states, which is represented by the AND node's children. The states of the environment in CTP are represented by the OR nodes. The states are the belief states of the agent in a current time step, where each belief state is represented by its $form$ (i.e every belief state $b$ is represented by the tuple $\form{b}=\form{Loc(b),stb(e,b_1),...,stb(e,b_n)}$).
Henceforth, all functions,predicate and lemmas presented in section $\ref{secHeuristics}$ can be applied to the states in the AND/OR graph.
We call the set of states that appears in the AND/OR graph {\em the expanded states} and denote it by $Z$.
Although the AND nodes do not represent the states(they are called ``semi state''), they maintain heuristic values as described in AO* algorithm which is specified for propagation. Since the environment is static, once an agent observes an edge, its status is remained unchanged. A terminal state $b$ is a state in which its location variable is the target($Loc(b)=t$). A node is a terminal leaf node if the state with which it associate is terminal. 
\begin{defn}
A belief state $b$ is expanded belief state if there is an OR node in the AND/OR graph that is associated with $b$.
\end{defn}

\begin{figure}[h!]
        \centering
        \centerline{\includegraphics[width=7.5 in,height=3.5in]{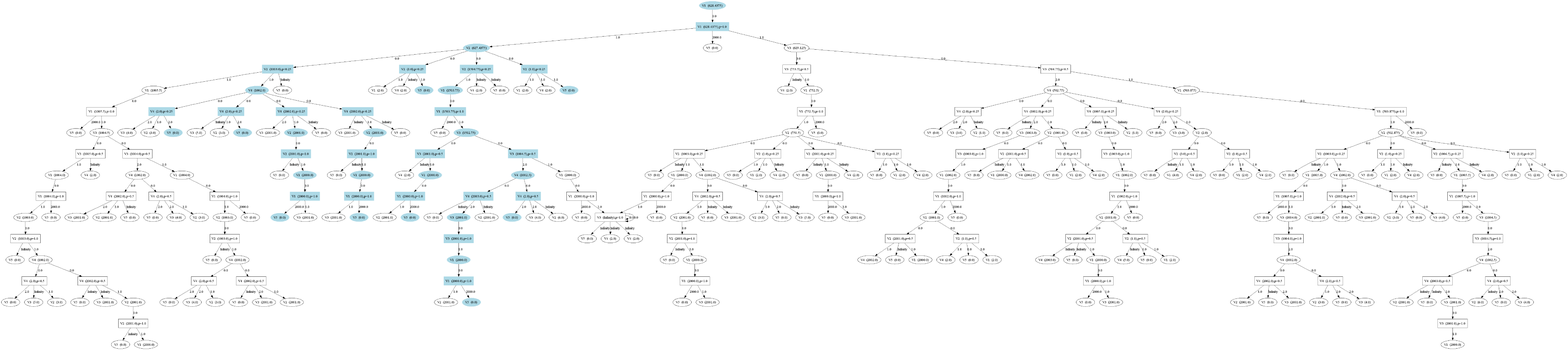}}
        \caption{Demonstration of SIMBlocked}
        \label{fig:ContBfig}
\end{figure}

\section{Models for the Canadian Traveler Problem}
\subsection{POMDP for CTP}
In this section we show that CTP can be modeled by POMDP.
Let $I=(G,P,w,\hat{s},\hat{t})$ be an instance of of basic CTP, and $I'=(G,P,w,SC,\hat{s},\hat{t})$ be an instance of CTP-with-sensing, where $G=(V,E)$. Given POMDP $M=(S,A,Tr,R,Z,O,s_0)$, we show how I and I' can be modeled by M as follows: 
\begin{itemize}
\item The state space S. The states space S of I (or I') represent the possible environment of the world. Each state $s$ indicates the location of the agent, and the status of all edges in E. The location of the agent in state s is denoted by $Loc_S(s)$ where $v=Loc_S(s)$ is the vertex that the agent is situated. Each edge $e\in E$ is associated with a status variable $st_e\in \{O_e,B_e\}$ where $st_e(s)=O_e$ indicates that $e$ is open in s, and $st_e(s)=Blocked$ indicates that $e$ is blocked in s. Thus, we define the state space S to be $V\times \prod_{e\in E}{\{O_e,B_e\}}$.
\item The action set A. In the basic CTP the set of actions A includes only one type of action $a=Move(e)$ in which agent that performs $a$, moves along an edge $e\in E$ if $e$ is open. While in the CTP-with-sensing, the set of actions A includes in addition to the Move actions, the sensing actions, in which agent that performs $a=Sense(e)$, senses an edge $e\in E$. This action can be performed from any vertex $v\in V$. 
\item The transition function Tr. Given $s,s'\in S$, and $a=move(v_i,v_j)$, we define Tr by the following:
$Tr(s,a,s')=1$ if it satisfies the following:
\begin{itemize}
\item For all edges $e\in E$ the status of the edge in s is equal to the state in s', i.e $st_e(s)=st_e(s')$.
\item $v_i=Loc_S(s)$ and $v_j=Loc_S(s')$.
\item The edge e=($v_i,v_j$) is open in s, i.e. $st_e(s)=O_e$.
\end{itemize}
Otherwise Tr(s,a,s')=0.\newline
If $a=Sense(e)$ where $e\in E$ we get $Tr(s,a,s')=1$ if and only if $s=s'$ since the Sense action does not change the state of the environment.
\item The reward(cost) function R. Given $s,s'\in S$, we define R as follows:
In case that $a=move(e)$ then, 
\[
R(s,a,s')=\left\{
\begin{array}{l l}
w(e) & \quad \text{if }Tr(s,a,s')=1 \\
0\; & \quad otherwise
\end{array}
\right.
\]
In case that $a=Sense(e)$ then,
\[
R(s,a,s')=\left\{
\begin{array}{l l}
SC(e) & \quad \text{ if }Tr(s,a,s')=1 \\
0 & \quad otherwise
\end{array}
\right.
\]

\begin{notation}
Let X be a set. Denote the power set of X by $\cal{P}(X)$.
\end{notation}

\item The observation set Z. Let $Z'=\{O_e,B_e\}^E$. We define Z to be the power set of Z', Namely $Z=\cal{P}(Z')$ .  
\item The observation function O. Given $s\in S$ ,$a\in A$ and $z\in Z$ we define O as follows:
In case that $a=move(e)$ where $e=(v_i,v_j)$, the only observation that received are the edges incident to vertex $v_j$, then, 
\[
O(s,a,z)=\left\{
\begin{array}{l l}
1 & \quad \text{ if } z=\setm{st_e(s)}{e\in Inc(v_j)} \\
0& \quad  otherwise
\end{array}
\right.
\]

Where in case that $a=sense(e)$, the only observation that received is the sensed edge e, then, 
\[
O(s,a,z)=\left\{
\begin{array}{l l}
1 & \quad \text{ if } z=st_e(s)  \\
0& \quad  otherwise
\end{array}
\right.
\]

\item $s_0$ is the initial state.

\end{itemize}

\begin{notation}
The optimal policy of $M_S$ is denoted by $\pi^*$.
\end{notation}

\begin{notation}
Let X be a set. Denote the power set of X by $\cal{P}(X)$.
\end{notation}

\subsection{Belief State for Representing the Environment of CTP}
A belief state, which is defined as a distribution over all possible states, is a representation of the agent's knowledge about the environment. 
In CTP, the belief states can be represented by 
the location of the agent and the status of each edge in the graph.

\begin{defn}
We say that status of edge $e$ is:
\begin{itemize}
	\item  ``known to be blocked'' if $e$ has been already sensed and found to be blocked.
	\item  ``known to be open'' if $e$ has been already sensed and found to be open.
	\item  ``unknown'' if $e$ has not been sensed.
\end{itemize}

\end{defn}

\begin{defn}
Define $stb:E\times B\rightarrow \{Open,Blocked,Unkown\}$ as follows:\newline
 $stb(e,b)$ is the edge status of $e$ in belief state $b$, where
\[
stb(e,b)=\left\{
\begin{array}{l l}
Open &\quad $if edge $e$ is known to be Open in $b \\
Blocked &\quad $if edge $e$ is known to be Blocked in $b\\
Unknown &\quad\; otherwise $ (if the status of edge $e$ is Unknown in $b) \\
\end{array}
\right.
\]
\end{defn}

\begin{defn}\label{locb}
Define $Loc:B\rightarrow V$ as the location of an agent in a belief state, where $Loc(b)$ outputs the physical location of an agent that is in belief state $b$ ,i.e. $Loc(b)=Loc_S(s)$ where $s$ is an arbitrary state $s\in S$ which satisfies $b(s)>0$.
\end{defn}
Note that definition $\ref{locb}$ assumes that there cannot be two state $s_1,s_2$ which satisfy $b(s_1)>0$, $b(s_2)>0$ such that $Loc_S(s_1)\neq Loc_S(s_2)$ since by definition, the agent always knows its own location, and thus, for every belief state $b$, if exists $s\in S$ in which $Loc(b)\neq Loc_S(s)$ then $b(s)=0$.

Thus, we can define an alternative way for representing a belief state,
\begin{defn}\label{formB}
Let $n=|E|$. The form of $b$, denoted by $\form{b}$, is defined to be the tuple $\form{b}=\form{Loc(b),stb(e_1,b),stb(e_2,b),...,,stb(e_n,b)}$, 
\end{defn}

\begin{definition}
Let $b$ be a belief state, we define the following sets:
\begin{enumerate}
	\item $Unknown(b)$ is the set of all edges $e\in E$ in which $stb(e,b)=Unknown$  
	\item $Blocked(b)$ is the set of all edges $e\in E$ in which $stb(e,b)=Blocked$  
	\item $Open(b)$ is the set of all edges $e\in E$ in which $stb(e,b)=Open$ 
\end{enumerate}

Let $b$ be a belief state. Then, there is a mapping from $\form{b}$ to $b$. Namely, for every $s\in S$ there is a mapping from $\form{b}=\form{Loc(b),stb(e_1,b),stb(e_2,b),...,,stb(e_n,b)}$ to $b(s)$ as follows: 
\[
b(s)=\left\{
\begin{array}{l l}
0 & \quad\text{ if } G(e,s,b)=0 \\
\prod\limits_{\setm{e\in Unknown(b)}{st(e,s)=Blocked}}p(e)\prod\limits_{\setm{e\in Unknown(b)}{st(e,s)=Open}}{1-p(e)} & \quad Otherwise\\
\end{array}
\right.
\] 
\end{definition}

Where $G:E\times S \times B\rightarrow \{0,1\}$ is defined as follows: \newline
$G(e,s,b)=0$ if one of the following is satisfied:
\begin{enumerate}
\item $st(e,s)=Open,stb(e,b)=Blocked$
\item $st(e,s)=Blocked,stb(e,b)=Open$
\item $Loc(b)\neq Loc_S(s)$
\end{enumerate}
otherwise $G(s,b)=1$

\begin{cor}\label{mappingb}
Since there is a mapping from $\form{b}$ to $b$, we can use the {\em form} $\form{b}$ instead of the belief state $b$ itself, for representing the belief state of an agent.
\end{cor}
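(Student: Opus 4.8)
The plan is to note that the display immediately preceding the corollary already exhibits an explicit reconstruction function: call it $\Phi$, sending a form $\form{b}=\form{Loc(b),stb(e_1,b),\dots,stb(e_n,b)}$ to the function $b:S\to[0,1]$ defined there. To justify the corollary it suffices to check two things: (i) $\Phi$ is \emph{well defined}, i.e. its output is a genuine belief state (a probability distribution on $S$); and (ii) $\Phi$ is a two-sided inverse of the map $b\mapsto\form{b}$, so that passing to the form loses no information.

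For (i), fix a form and let $b=\Phi(\form{b})$. Non-negativity is immediate, since every value of $b(s)$ is either $0$ or a product of numbers $p(e),\,1-p(e)\in[0,1]$. For normalization I would partition $S$ into the states with $G(e,s,b)=1$ for all $e$ (``consistent with $\form{b}$'') and the rest; the inconsistent states contribute $0$. The consistent states are exactly those with $Loc_S(s)=Loc(b)$ and $st(e,s)$ matching $stb(e,b)$ for every $e\in Open(b)\cup Blocked(b)$, so they are in bijection with the $\{Open,Blocked\}$-assignments to the edges of $Unknown(b)$. Summing the product formula over these assignments factorizes — by the independence assumption of the basic CTP treated in this subsection — into $\prod_{e\in Unknown(b)}\bigl(p(e)+(1-p(e))\bigr)=1$. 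Hence $\Phi(\form{b})$ is a legitimate belief state.

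For (ii), going from $b$ to $\form{b}$ and back: by Definition~\ref{locb} and the definition of $stb$, the entries $Loc(b)$ and $stb(e_i,b)$ are functions of $b$ alone, so $\form{b}$ is well defined, and applying $\Phi$ to it recovers $b$ by Bayes' rule (the prior product distribution conditioned on the agent's location and on the revealed edge statuses is precisely the renormalized product over $Unknown(b)$ written in the definition). Going the other way, if two belief states have the same form they have the same location and the same $stb$-value for every edge, hence the same sets $Open(\cdot),Blocked(\cdot),Unknown(\cdot)$, hence identical reconstruction formulas, hence are equal; so $b\mapsto\form{b}$ is injective. The two directions together give the bijective correspondence the corollary asserts.

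The only non-routine step is the normalization in (i): one must make explicit that the basic-CTP prior treats the edge-blocking variables as mutually independent with marginals $p(e)$, so that conditioning on the event ``$Loc=Loc(b)$ and the edges of $Open(b)\cup Blocked(b)$ have their stated statuses'' merely renormalizes by the probability of that event, whose reciprocal cancels the factors for the revealed edges and leaves exactly the product over $Unknown(b)$. Once independence is invoked, the factorization of the sum over the unknown edges is a one-line computation, and everything else is bookkeeping with the definitions of $G$, $stb$, and $Loc$.
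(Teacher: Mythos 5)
Your proposal is correct and follows the same route the paper takes: the corollary is justified by the explicit reconstruction formula displayed immediately before it, which is exactly your map $\Phi$. The paper supplies no further argument, so your verification of non-negativity, the normalization $\prod_{e\in Unknown(b)}\bigl(p(e)+(1-p(e))\bigr)=1$ via independence, and the two-sided-inverse property simply makes explicit the bookkeeping the paper leaves implicit; all of it checks out in the basic-CTP setting this subsection assumes.
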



\begin{defn}\label{probB}
Let $P_b(e,b)$ be the probability that edge $e$ is blocked given that the agent is in belief state $b$. Namely 
$P_b(e,b)=\sum_{i}{b(s_i)}$ such that $st(e,s_i)=Blocked$. 
\end{defn}

In the basic variant of CTP, the probabilities associated with the edges are independent, and hence, as long as $stb(e,b)=Unknwon$, we have $P_b(e,b)=P(e)$.
\begin{cor}
From definition \ref{probB} we get that the probability that edge $e$ is blocked given the agent is in belief state $b$ is given by:
\[
P_b(e,b)=\left\{
\begin{array}{l l}
0 &\quad $if $stb(e,b)=Open \\
1 &\quad $if $stb(e,b)=Blocked\\
P(e) &\quad\; $if $stb(e,b)=Unknwon\\
\end{array}
\right.
\]
\end{cor}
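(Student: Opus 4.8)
The plan is to evaluate the sum appearing in Definition~\ref{probB}, namely $P_b(e,b)=\sum_i b(s_i)$ taken over the states $s_i$ with $st(e,s_i)=Blocked$, separately in each of the three cases, using the explicit mapping from $\form{b}$ to $b$ together with the auxiliary consistency predicate $G$.

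First I would dispose of the two deterministic cases. If $stb(e,b)=Open$, then every state $s$ with $st(e,s)=Blocked$ satisfies item~2 in the definition of $G$ (namely $st(e,s)=Blocked$, $stb(e,b)=Open$), so $G(e,s,b)=0$ and hence $b(s)=0$; the sum is therefore $0$. Symmetrically, if $stb(e,b)=Blocked$, then every state $s$ with $st(e,s)=Open$ has $G(e,s,b)=0$ by item~1, so such states contribute nothing; since in addition $b(s)=0$ whenever $Loc_S(s)\neq Loc(b)$ by item~3, the states of positive probability all satisfy $st(e,s)=Blocked$, and therefore $\sum_{s:\, st(e,s)=Blocked} b(s)=\sum_{s\in S} b(s)=1$ because $b$ is a probability distribution.

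The substantive case is $stb(e,b)=Unknown$, i.e.\ $e\in Unknown(b)$. Here I would restrict attention to the states $s$ consistent with $b$ (those with $G\equiv 1$), since the rest have $b(s)=0$. Such a state is determined by a choice of status in $\{Open,Blocked\}$ for each edge in $Unknown(b)$, together with the forced values on $Open(b)$, $Blocked(b)$ and the fixed location $Loc(b)$, and its probability is the product $\prod_{e'\in Unknown(b)} q(e',s)$, where $q(e',s)=p(e')$ when $st(e',s)=Blocked$ and $q(e',s)=1-p(e')$ otherwise. Summing over the consistent states with $st(e,s)=Blocked$, the factor associated with $e$ is always $p(e)$ and can be pulled out of the sum, leaving the sum over all status assignments to $Unknown(b)\setminus\{e\}$ of the product of the remaining factors. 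This residual sum equals $1$: grouping the terms edge by edge, each $e'\in Unknown(b)\setminus\{e\}$ contributes $p(e')+(1-p(e'))=1$, and the product distributes over these binary sums — this is exactly the point where the independence of the edge variables in the basic CTP is used. Hence $P_b(e,b)=p(e)\cdot 1=P(e)$, which also justifies the remark made just before the statement.

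The only real obstacle is the bookkeeping in the unknown case: one must be careful that the $b$-consistent states are in bijection with the status assignments to $Unknown(b)$, and that the product form of $b(s)$ factors cleanly enough that the marginal $p(e)$ separates out from the sum over the other unknown edges. Once that correspondence is set up the computation is routine, and the two deterministic cases are immediate from the definition of $G$.
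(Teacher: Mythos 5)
Your proposal is correct and follows exactly the route the paper intends: the two deterministic cases fall out of the consistency predicate $G$ in the definition of $b(s)$, and the unknown case is the independence observation the paper makes in the remark immediately preceding the corollary ($P_b(e,b)=P(e)$ because the edge variables are independent), which you have simply written out as the explicit marginalization with the residual sum telescoping to $1$. The paper gives no written proof for this corollary, so your computation is a faithful, fully detailed version of the argument it leaves implicit.
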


\newpage
\subsection{Belief MDP for CTP}\label{sec:beliefMDP}
Given a POMDP $M=(S,A,Tr,R,Z,O,s_0)$ of instance $I=(G,P,w,\hat{s},\hat{t})$ of CTP (or $I'=(G,P,w,SC,\hat{s},\hat{t})$ of CTP-with sensing). Let $B$ be the belief state space of M, we define a belief MDP $M_S=(B,A,Tr,R,b_0)$ of I, based on M, where the states space B is over the state space S. CTP is a special case of POMDP(called Det-POMDP) where transition function Tr and reward function R can be simplified here as follows:
\begin{itemize}
\item The transition function Tr. In general belief MDP, given $b,b'\in B$, $a\in A$, Tr is given by:
\[
Tr(b',a,b)=P(b'|a,b) = \sum_{z}P(b',z|a,b)=\sum_{z}P(b'|z,a,b)P(z|a,b)
\]
Given $a=move(v_i,v_j)$, we define $\hat{E}$ to be the set of all edges incident to $v_j$ that are unknown in $b$ and known in $b'$ (the edges that are revealed by the local sensing), i.e.
\[
\hat{E}=\setm{e\in E}{stb(e,b)=Unknown, stb(e,b')\neq Unknown, e\in Inc(v_j),b\in B,b'\in B}
\]
Then $Tr(b',a,b)>0$ if and only if,
\begin{itemize}
\item For all $e\in E\backslash \hat{E}$ $stb(b,e)=stb(e,b')$. The status of the edges do not change as well as the information about any unraveled edge that is not sensed.
\item $v_i=Loc(b)$ and $v_j=Loc(b')$.
\item The edge e=($v_i,v_j$) is open in b, i.e. $stb(e,b)=Open$.The edge has to be open in order to traverse it.
\end{itemize}
In this case,
\[
Tr(b',a,b)=\prod_{e\in \hat{E},stb(b',e)=Blocked}{p(e)}\prod_{e\in \hat{E},stb(b',e)=Open}{1-p(e)}
\]
Given $a=Sense(e')$, $Tr(b',a,b)>0$ if and only if,
\begin{itemize}
\item $stb(b',e')\neq Unknown$. The edge e' is known after the performing Sense(e'). 
\item For all $e\in E\backslash e'$ $stb(b,e)=stb(e,b')$. The state is not effected by Sense action and the only information that received is the status of e'.
\item $Loc(b)=Loc(b')$. The location is not effected by Sense action.
\end{itemize}
In this case,
\[
Tr(b',a,b)=\left\{
\begin{array}{l l}
p(e) & \quad \text{ if } stb(e,b)=Blocked\\
1-p(e)& \quad  otherwise
\end{array}
\right.
\]

\item The reward function R. In general, the reward function is defined by $R(b,a,b)=\sum_{b'\in B}{b(b')}\sum_{b\in B}{b(b)R(b,a,b')}$.
Denote the action cost of $a$ by $C(a)$, where $C(a)=w(e)$ if $a=Move(e)$ and $C(a)=SC(e)$ if $a=Sense(e)$. Hence, 
\begin{equation}\label{eq:rewardFunction}
R(b,a,b')=\left\{
\begin{array}{l l}
C(a) & \quad \text{ if and only if  }Tr(b,a,b')>0 \\
0 & \quad otherwise
\end{array}
\right.
\end{equation}

We define $R(b,a)=\sum_{b'\in B}{R(b,a,b')Tr(b,a,b')}$. Therefore, $R(b,a)=C(a)$ if there exist ${b'\in B}$ such that $Tr(b,a,b')>0$.
Otherwise $R(b,a)=0$.
Note that in case that $a=Sense(e)$, there always exist $b'$ reachable from $b$ in which $Tr(b,a,b')>0$, thus $R(b,a,b')=SC(e)$ always holds.
\item $b_0$ is the initial belief state.
\end{itemize}

\begin{defn}
We say that action $a$ can be performed in belief state $b$ if there is a belief state $b'$ such that $Tr(b,a,b')>0$
\end{defn}


\section{Related Work}
\subsection{Different Variation of CTP}
The Canadian traveler problem is known to be $p\#$ hard \cite{PAPA91}. In the lack of approximation solutions, different versions of special classes of graphs have been suggested where the exact solution can be found in polynomial time. \cite{BARNOY91}  have investigated the case of Recoverable CTP , where each vertex is associated with a specific recovery time to reopen any blocked edge that is incident to it. When an agent finds a blocked edge $e$ it can either traverse another edge or wait a period of time and check if $e$ has been opened. The basic CTP is a special case of the Recoverable CTP where all the recovery times are infinitely large. There are two variation to the Recoverable CTP, deterministic and stochastic.  In the deterministic variation the assumption is that the number of edges that may be blocked is bounded. In the stochastic variation, each edge is associated with a probability of being blocked while it assumes that the recovery time is not long relative to the travel time. The two cases were proved to be polynomial in the number of edges and vertexes and in the maximal number of blocked edges.
\cite{NIKOLOVA08} investigated a CTP variant where the environment is dynamic, in a sense that the status of each edge $e\in E$ is generated randomly with a given probability $P(e)$ whenever the agent reaches an incident vertex of $e$. This variant can be modeled by MDP, where the states represent only the current location of the agent. Since MDP is solvable in polynomial time, this variant is solvable in polynomial time as well. 
Notice that basic CTP is much harder to solve, since the edges status is remained fixed and thus the state space is exponentially larger (in the number of edges). 
Nikolova et al. have shown that CTP on \textit{directed acyclic graph} (DAG) can be solved in polynomial time by using a dynamic programming.

\subsection{Disjoint Path Graphs}
\textit{Disjoint path graph} is an undirected graph $G=(V,E)$ with source $s\in V$ and destination $t\in V$ such that all paths $p_1,...,p_k$ in G are between $s$ and $t$, and these paths are pairwise disjoint.
\cite{BNAYA09} have shown that CTP on disjoint path graph is solvable in polynomial time. The proof is based on the property the the optimal policy is committing . This guarantees that whenever an agent follows a path, the optimal action is to continue the path until reaching the target unless it hits a blocked edge. 
The optimal policy of CTP on disjoint path is to follow the paths by their order of $D_i$ ($D_i$ is parameter associated with each path in G) Meaning,  the optimal policy is to travel the path with the minimal $D_i$ till reaching the target unless the path is blocked. If the path is blocked then return to $s$ and travel the path with second minimal $D_i$ and so on.
$D_i$ is defined as,
\begin{equation} 
D_i=\frac{E[BC_i]}{Prob(path\; i\; is\; traversable)}+{W_{i,k_i}} \nonumber\\
\end{equation} 
Where $BC_i$ denotes the backtracking cost of path i which is the cost of traversing path i until hitting a blocked edge and then returning back to the $s$ when the path is not traversable , or 0 when the path is traversable. The expected cost of $BC_i$ is
\begin{equation}
 E[BC_i]=2\sum_{j=2}^{k_i-1}{W_{i,j-1}p(e_{ij})}\prod_{m=1}^{j-1}{1-p(e_{im})}\nonumber\\
 \end{equation}
 Where $W_{i,j}=\sum_{j}^{m=1}{w\;(e_{im})}$ . 

Another variation of CTP on disjoint path graphs is when the edges cannot be blocked but instead have two possible finite costs: a \textit{cheap} and and \textit{expensive} \cite{NIKOLOVA08}. A simple case of this variation is when the value edges is binary, i.e., 0 or 1. In this case the optimal policy would be to explore all the edges with cost 0 of each path until it reaches an edge with cost 1 on the path, and then return to the path with the fewest unexplored edges and follow it until reaching the target. 
A more general case of this variation is when the edges are associated with the cost 1 or K. In this case the optimal policy has the property that once an edge with cost $k$ has been crossed, it is optimal to continue along the same path until reaching the target. Taking advantage of the special structure of the policy induced by this property, allows to define an MDP with concise representation that decides in what order to explore the paths and how many, before committing a path. This two cases were proved to be solvable in polynomial time.

\subsection{CTP with Sensing}
The CTP with sensing is a harder problem than the basic CTP since a simple reduction can be constructed from any instance of CTP: The graph of the basic CTP is the graph of the CTP with sensing, however, the sensing cost of all edge are large enough, such that sensing an edge is never worthwhile. As such, the expected cost of the two optimal policies is equal.
\subsubsection{Heuristic search algorithms}\label{sec:FSSN}
In order to facilitate the search for solution of CTP with sensing, some heuristic based algorithm have been suggested. The algorithms do not provide an optimal solution, however, they may be much simpler. 
\cite{BNAYA09} have suggested the FSSN algorithm that is based on the \textit{free space assumption} heuristic.
The free space assumption 
\cite{KOENIG03}
assumes that edges are traversable unless specifically known otherwise. 
 The FSSN plans a path $p$ from some vertex $v\in V$ to $t$ with the shortest path under the free space assumption. The agent can either attempt to traverse P without sensing or may decide to interleave sensing    
actions into the movement actions, according to a sensing policy that is embedded in the algorithm.

Number of sensing policies to the FSSN have been suggested:

\textsl{Never Sense} is a brute force policy that never senses any remote edge. This policy never incurs any sensing cost but it may lead to an increase travel cost. 
 
\textsl{Always Sense} is a brute force policy that senses all the unknown edges in the path before it moves along it.

\textsl{Value of information} a policy that decides what edges to be sensed according to their value of information.
\subsection{Propagating AO*}
AO* harness the benefits of the heuristic search to avoid searching states that are undesirable. However, in many situations AO* examines far more states than necessary. 
Propagating AO*(PAO*) \cite{Ferguson2004} is an extension of the AO* that takes one step forward for facilitating the search. PAO* propagates the heuristic values on a larger scale in which minimizes the expansion of non-terminal nodes.
PAO* is based on a specific variation of the AO* algorithm;
Ferguson et al. constructed an algorithm that solves a variation of the CTP where most of the graph (edges) is observable such that only a single unknown edge (called pinch point) can be incident to a vertex (In the original paper the pinch points are called ``faces'' ).
As such, any chance node (AND node) has at most two children that represent a traversable edge and a blocked edge.
PAO* is described as follows:
The expansion phase is processed exactly as the AO* where the PAO* grows the best partial policy graph by expanding the non terminal leaf nodes, and assigning heuristic values to its children. 
Similarly to AO*, PAO* propagates the heuristic values onward up to the root. However, PAO* propagate the heuristic values sideways and downwards to the children as well.
Furthermore, the algorithm takes advantage of the fact that the AND node has only two children (traversable and blocked) such that the parent node heuristic value should never be less than the traversable child value. Thus, PAO* propagate the heuristic value of the parent to the traversable child 
if the heuristic value of the traversable child is higher. Similarly, the heuristic value of the parent should never be greater than the value of the blocked child. Therefore, PAO* propagate the value to the non-traversable child in case that the heuristic value of the parent is higher.

\chapter{Theoretical Analysis of CTP}

\section{CTP with Dependencies}

\begin{theorem}\label{path-dep}
CTP with dependencies is at least as hard as CTP with sensing.
\end{theorem}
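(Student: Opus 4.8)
The plan is to give a polynomial-time reduction from CTP-with-sensing to CTP-DEP. Given a sensing instance $(G,P,SC,w,s,t)$ with $G=(V,E)$, I would build a Dep-CTP instance $(G',W,w',s,t)$ that ``hard-codes'' every potential remote-sensing action as a short detour whose blocking status is perfectly correlated, through the Bayes network $W$, with the edge being sensed. Concretely, for each pair $(v,e)$ with $v\in V$, $e\in E$ (the candidate $sense(v,e)$ actions), I add to $G$ two fresh vertices $u^1_{v,e},u^2_{v,e}$, a fresh ``connector'' edge $(v,u^1_{v,e})$ of weight $SC(v,e)/2$, and a fresh ``probe'' edge $g_{v,e}=(u^1_{v,e},u^2_{v,e})$, with $u^2_{v,e}$ a dead end; the original edges keep their weights ($w'|_E = w$). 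In the Bayes network $W$ (whose random variables are the edges of $G'$), each original edge $e\in E$ keeps its independent blocking probability $P(e)$; each connector edge is deterministically \emph{Unblocked}; and each probe edge $g_{v,e}$ has the single parent $e$ with a deterministic CPT: $g_{v,e}$ is \emph{Blocked} iff $e$ is \emph{Blocked}. The crucial point is that $g_{v,e}$ is incident to $u^1_{v,e}$ but \emph{not} to $v$, so standing at $v$ does not reveal $st(e)$ for free --- the agent must pay the detour first. (If $SC(v,e)=0$ one instead attaches $g_{v,e}$ directly at $v$; this degenerate case is trivial.)

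Second, I would establish a cost-preserving correspondence between strategies. The detour $v\to u^1_{v,e}\to v$ is always available (the connector is always open), costs $2\cdot SC(v,e)/2 = SC(v,e)$, and on reaching $u^1_{v,e}$ the agent locally senses $st(g_{v,e})=st(e)$; hence it delivers exactly the information, and exactly the cost, of $sense(v,e)$. Coupling the randomness of the two instances (identical draws on $E$, and $g_{v,e}\equiv e$ for every gadget), any strategy for the sensing instance maps to a strategy for the Dep-CTP instance with the same expected cost. For the converse I would argue that an optimal Dep-CTP strategy may be taken, without loss of generality, never to traverse a probe edge and never to enter a gadget except to read off the correlated status (entering $u^1_{v,e}$ always reveals $st(e)$, and the only non-wasteful continuation from $u^1_{v,e}$ is to return to $v$), so it projects onto a legal sensing strategy of equal expected cost. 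Thus the two instances have the same optimal expected cost, and an optimal policy for the Dep-CTP instance yields, by reading off this correspondence, an optimal policy for the sensing instance.

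Third, I would check that the reduction is polynomial and structurally harmless: there are at most $|V|\cdot|E|$ gadgets, each adding $O(1)$ vertices, edges and arcs together with a constant-size (deterministic) conditional probability table, so $|G'|$ and $|W|$ are polynomial in the input size; and since every gadget vertex other than its attachment point $v$ is a dead end, no gadget introduces a new path between two original vertices, so all shortest-path structure and all optimal behaviour inside $G$ are unchanged.

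The step I expect to be the main obstacle is the converse direction of the correspondence: rigorously arguing that an optimal policy for the constructed Dep-CTP instance uses every gadget only as a ``sensing device'' --- an in-and-out detour that reveals $st(e)$ at cost $SC(v,e)$ --- and never squanders cost by walking a probe edge into its dead end, and, relatedly, being careful that the gadget leaks nothing about $e$ for free, which is precisely why the probe edge sits one hop away from $v$ rather than incident to it. The remainder is bookkeeping: fixing the connector weights at $SC(v,e)/2$, declaring the deterministic CPTs of $W$, and coupling the two stochastic processes so that expected costs match term by term.
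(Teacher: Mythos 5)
Your reduction is essentially the paper's own: the same two-edge dead-end gadget per $(v,e)$ pair with a deterministically open connector of weight $SC(v,e)/2$ and a probe edge perfectly correlated with $e$ via the Bayes network, the same meta-action correspondence (sense $=$ in-and-out detour), and the same polynomial-size accounting; the only cosmetic difference is that the paper assigns the probe edges weight $\infty$ to make traversing them manifestly never worthwhile, whereas you leave their weight unspecified and argue non-wastefulness directly. The proposal is correct and matches the paper's argument.
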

\textbf{Proof outline:}
By reduction from CTP-with-sensing to CTP-with-dependencies.

Proof: 
%
%
%
%
%
%
%
%
%
%
%
%
%

\textbf{Proof.}
Let I=(G,W,C,SC,s,t) be an instance of CTP-with-sensing. We construct an equivalent instance I'=(G',W',C',s',t') of CTP-with-dependencies and show that there is a
one-to one equivalence between I and I'. Construction of I' is as follows, G' contains G entirely, and in addition, each vertex in G is attached to two-edge dead-end path, that simulate the sensing operation of I. One path for each possible sensing operation in I.

Formally, the construction of I' is as follows:\newline
First, we construct $\hat G(\hat{V},\hat{E})$ by copying the graph G(V,E) using the following functions: 
\begin{itemize}
\item $g_v:V \rightarrow \hat{V}$ is a bijection function that copies V into $\hat{V}$ such that for each $v_i\in V$, $g_v(v_i)$ is the copied vertex of $v_i$.
\item $g_e:E \rightarrow \hat{E}$ is a bijection function that copies E into $\hat{E}$ such that for each $e_j\in E$, $g_e(e_j)$ is the copied edge of $e_j$.
\end{itemize}
Let $\hat{V}$ be the set of all the vertices that were copied from V, meaning $\hat{V}=\bigcup_{1\leq i\leq n}{g_v(v_i)}$. 
Let $\hat{E}$ be the set of all the edges that were copied from E, meaning $\hat{E}=\bigcup_{1\leq j\leq m}{g_e(e_j)}$.
\begin{notation}
$\hat{v}_i\in \hat{V}$ denote the copied vertex $f(v_i)$.
\end{notation}

\begin{notation}
$\hat{e}_i\in \hat{E}$ denote the copied edge $g_e(e_i)$
\end{notation}

We construct a new graph G'(V',E') by extending $\hat{V}$ and $\hat{E}$ using the following functions:
\begin{itemize}
\item $f_{v1}:\hat{E}\times \hat{V}\rightarrow V$ and $f_{v2}:\hat{E}\times \hat{V}\rightarrow V$ are one to one functions that generates a vertex for each element in $\hat{E}\times \hat{V}$. Meaning, given $v_i\in \hat{V}$ and $e_i\in \hat{E}$, $f_{v1}(v_{i},e_j)=v_{ij1},f_{v2}(v_{i},e_j)=v_{ij2}$.
\item $f_{e1}:\hat{E}\times \hat{V}\rightarrow E$ and $f_{e2}:\hat{E}\times \hat{V}\rightarrow E$ are one to one functions that generates an edge for each element in $\hat{E}\times \hat{V}$ such that given $v_i\in \hat{V}$ and $e_i\in \hat{E}$,  $f_{e1}(v_i,e_j)=e_{ij1},f_{e2}(v_i,e_j)=e_{ij2}$ and in addition, $e_{ij1}=(v_i,v_{ij1})$ and $e_{ij2}=(v_{ij1},v_{ij2})$.
\end{itemize}

Let graph $G'(V',E')$ defined as follows:
\begin{eqnarray}
V' &=& \left\{\hat{V}\cup V'_{ij1}\cup V'_{ij2}\right\} \nonumber\\
E' &=& \left\{\hat{E}\cup E'_{ij1}\cup E'_{ij2}\right\} \nonumber\\
\end{eqnarray}

where,
\begin{eqnarray}
V'_{ij1}=\bigcup_{1\leq i\leq n,1\leq j\leq m}{v'_{ij1}} \nonumber \\
V'_{ij2}=\bigcup_{1\leq i\leq n,1\leq j\leq m}{v'_{ij2}} \nonumber \\
E'_{ij1}=\bigcup_{1\leq i\leq n,1\leq j\leq m}{e'_{ij2}} \nonumber \\ 
E'_{ij2}=\bigcup_{1\leq i\leq n,1\leq j\leq m}{e'_{ij2}} \nonumber \\
\end{eqnarray}
\begin{notation}
Given $e_{ij1}\in E_{ij1}, e_{ij2}\in E_{ij1}$, we define a two edge dead end path $p_{ij}=\left\langle e_{ij1},e_{ij2}\right\rangle$. 
\end{notation}
Note that $G'(V',E')$ can be viewed as "attachment" of paths $\bigcup_{1<j\eqslantless|E|}{P_{ij}}$ to each $g(v_i) \in \hat{V}$.

$W(X,Y)$ is the Bayesian network that is associated with edges in G where X is the set of nodes and Y is the set of arcs. Similarly $W'(X',Y')$ is the Bayesian network that is associated with edges in G' where X' is the set of nodes and Y' is the set of arcs. Let x be a node in X, and x' be a node in X'. We define W' by W as follows:

\begin{itemize}\label{bayesianDef}
\item For each $x\in X$, $x'_{g_e(e_i)}=0 \Leftrightarrow x_{e_i}=0$($st_{e_i}=st_{g_e(e_i)}$).
\item For each $e'_{ij1}\in E_{ij1}$, $P(x'_{e'_{ij1}}=0)=1$, i.e. all edges in $E_{ij1}$ are open.
\item For each j, $x'_{e'_{1j2}}=0,x'_{e'_{2j2}}=0,...,x'_{e'_{nj2}}=0\Leftrightarrow x'_{g_e(e_j)}=0$, i.e. for each j, all edges $e'_{ij2}\in E'_{ij2}$ are open if and only if $x_{g_e(e_j)}$ is open.
\end{itemize}

The weight function $C'$ is defined by:
\begin{enumerate}
\item $\forall e_i\in E, C'(g_e(e_i))=C(e_i)$. 
\item $\forall e_{ij1}\in E'_{ij1}, C'(e_{ij1})=\frac{SC(e_j)}{2}$ where $SC(e_j)$ denote the sensing cost of $e_j$.
\item $\forall e_{ij2}\in E'_{ij2},C'(e_{ij2}) = \infty $. 
\end{enumerate}

The computational time that takes to generate this reduction is polynomial, since the size of $|E'|=|E'_{ij1}|+|E'_{ij2}|+|\hat{E}|$, where $|E'_{ij1}|=|E'_{ij2}|=|E|\times|V|$ and $|\hat{E}|=|E|$, therefore $|E'|=|E|+2|E|\cdot|V|=|E|(1+2|V|)$. Furthermore, the size of $|X'|$ is $|E'|$  since each node in X' is associated with an edge in E' , and $|Y|=|V||E|$ since each node in X' that is associated with edge in $\hat{E}$ is connected to $|E|$ nodes.


\begin{figure}[h]
        \centering
        \centerline{\includegraphics[width=5.5in,height=3.4in]{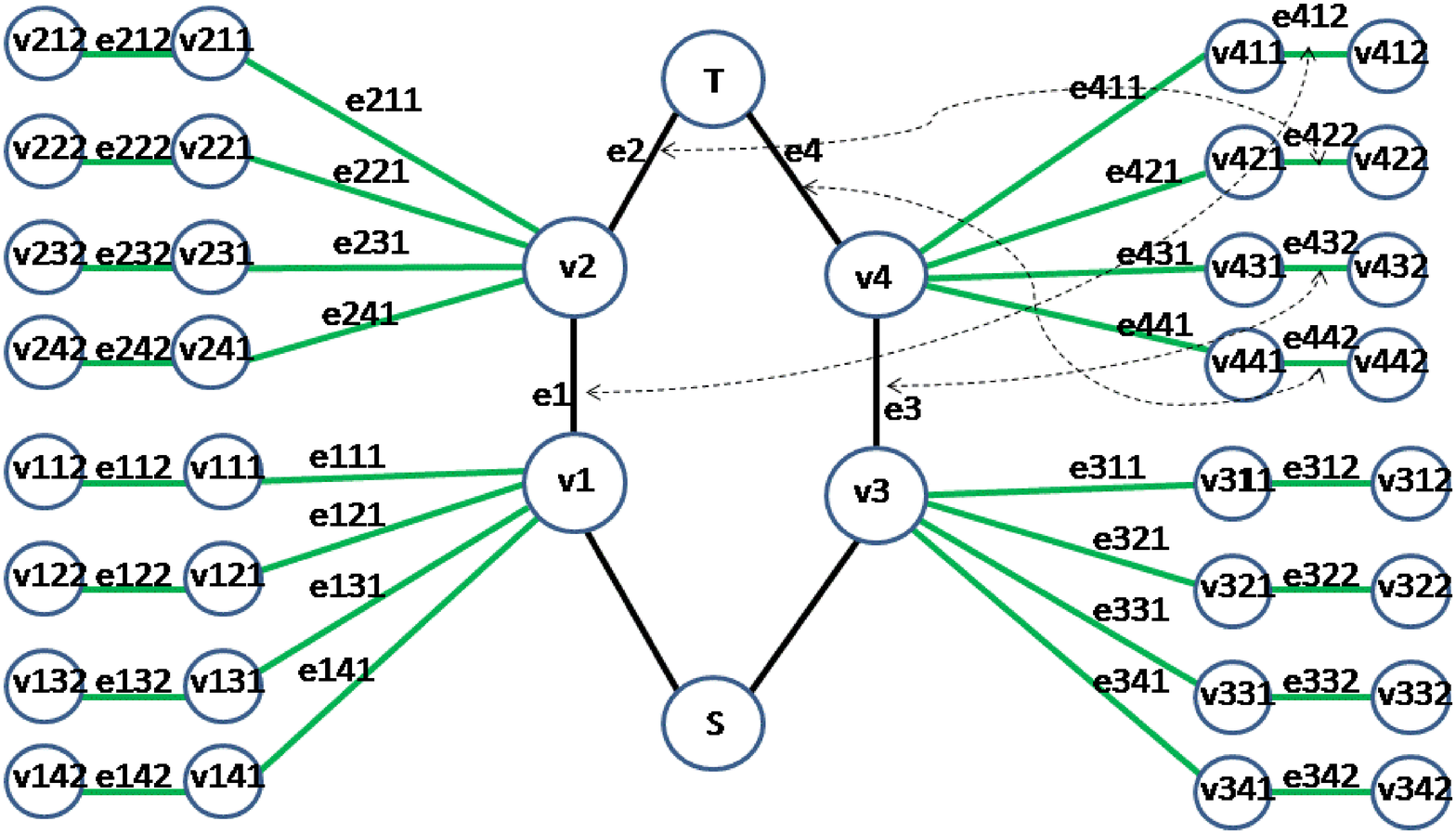}}
        \caption{Presented here graph G'(V',E') (CTP with dependencies) which is reduced from from graph G(V,E)(CTP with sensing). Each vertex $v'\in V'$ is attached to 4 paths which are correlated with each of the 4 preknown edges.  The agent in I' can "simulate" the sensing action in I, by moving in one of these paths. For instance, an agent that simulate a sensing action on $e_1$ from $v_3$ will perform $move(v_1,v_{131})$ and then $move(v_{131},v_1)$.
         The indexes of the $V'_{i,j,k},E'_{i,j,k}$ on the paths attached to vertexes in $\hat{V}$ indicates, i: the vertex that the path is attached to, j: the edge that is correlated with the path, and k: the id of the vertex/edge, i.e. the first vertex/edge in the path or the second.
        The dotted arrows indicate the dependency between edges.}
        \label{fig:ReductionToSenisng}
\end{figure} 

Let $M_1=(S_1,A_1,Z_1,Tr_1,O_1,R_1)$ be a POMDP that modes I, where $S_1,A_1,Z_1,Tr_1,O_1,R_1$ are finite sets of states, actions, observations, transition functions, observation functions and reward functions respectively. Similarly, let $M_2=(S_2,A_2,Z_2,Tr_2,O_2,R_2)$ be a POMDP that models I' where $Z_2$ is the set of observations in I', $S_2$ is special subsets of the states set in I', and $A_2$ is a special meta-action set in I'(a set of series of action in I') which will be defined later.
$Tr_2,O_2,R_2$  are the transition functions, observation functions and reward functions in I'. Let $\pi_1^*$ be the optimal policy of I and $\pi_2^{'*}$ be the optimal policy of I'. In order to prove theorem \ref{path-dep}, it is suffice to show that $Exp(\pi_1^*)=Exp(\pi_2^{*})$. In the remainder of this proof we prove this property by showing that $M_1$ is equivalent to $M_2$ and that $M_2$ actually models I'.

%
%

We want to define the subset $S_2\subset S'$ that contains all states in which the agent is located in a "`copied"' vertex. Formally,
\begin{definition}\label{tildes}
Given $S'$, we define $S_2$ to be the subset of S' such that $s_i\in S_2$ if and only if $s_i\in S'$ and $Loc(s_i)\in \hat{V}$. Meaning $S_2=\setm{s_i}{s_i\in S',Loc(s_i)\in \hat{V}}$.
\end{definition}
\begin{lemma} 
Let $\tilde{V}$ be the location space of $S_2$, (i.e. $\tilde{V}=\setm{v}{v=Loc(s_2),s_2\in S_2}$) then $\tilde{V}=\hat{V}$. 
\end{lemma}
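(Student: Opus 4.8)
The plan is to prove the set equality $\tilde V = \hat V$ by double inclusion, both directions being essentially bookkeeping against Definition~\ref{tildes} together with the product structure of the POMDP state space introduced in the POMDP-for-CTP construction.

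First I would establish $\tilde V \subseteq \hat V$. Take any $v \in \tilde V$. By the definition of $\tilde V$ there is some $s_2 \in S_2$ with $v = Loc(s_2)$. But $S_2 = \setm{s_i}{s_i \in S',\ Loc(s_i) \in \hat V}$ by Definition~\ref{tildes}, so in particular $Loc(s_2) \in \hat V$, i.e.\ $v \in \hat V$. This direction requires nothing beyond unfolding the two definitions.

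The reverse inclusion $\hat V \subseteq \tilde V$ is where the one genuine observation enters: we must exhibit, for each $\hat v_i \in \hat V$, at least one state of $S'$ located at $\hat v_i$. Recall from the POMDP model of a CTP instance that the state space of $M_2$ (the POMDP modelling $I'$) is $V' \times \prod_{e \in E'}\{O_e, B_e\}$, i.e.\ a full Cartesian product of vertex locations with edge-status assignments. Since $\hat V \subseteq V'$ by the construction of $G'$ (recall $V' = \hat V \cup V'_{ij1} \cup V'_{ij2}$), for any $\hat v_i \in \hat V$ we may pick an arbitrary edge-status vector $\sigma \in \prod_{e \in E'}\{O_e, B_e\}$ and form the state $s = (\hat v_i, \sigma) \in S'$; then $Loc(s) = \hat v_i \in \hat V$, so $s \in S_2$ by Definition~\ref{tildes}, and hence $\hat v_i = Loc(s) \in \tilde V$. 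Combining the two inclusions yields $\tilde V = \hat V$.

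I do not expect a real obstacle here; the only point to be careful about is justifying non-emptiness of the fibre over each $\hat v_i$ --- that is, that $S'$ genuinely contains states at every vertex of $V'$ --- which is exactly what the product form of the POMDP state space guarantees (note that Definition~\ref{tildes} carves $S_2$ out of all of $S'$, with no reachability restriction, so we need not argue that such states are reachable from $b_0$). A secondary point worth one sentence in the write-up is the containment $\hat V \subseteq V'$, which is immediate from the definition of $V'$.
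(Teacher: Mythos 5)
Your proof is correct and follows essentially the same route as the paper's: both directions are established by double inclusion, with $\tilde V \subseteq \hat V$ read off directly from Definition~\ref{tildes} and $\hat V \subseteq \tilde V$ obtained by exhibiting, for each vertex of $\hat V$, a state of $S'$ located there. Your write-up is in fact more careful than the paper's rather terse argument, since you explicitly justify the non-emptiness of the fibre over each $\hat v_i$ via the product structure of the state space, a point the paper only gestures at.
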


\begin{proof}

 $=>$ $\tilde{V}\subset \hat{V}$. According to $\tilde{V}$, for every $s_i\in S'$ $Loc(s_i)\in \tilde{V}$ if $Loc(s_i)\in \hat{V}$.
\newline
 $<=$ $\hat{V}\subset \tilde{V}$.  V' contains all possible locations that agent can be in G', where $\tilde{V}$ is subset of V' which contains all possible location in $\hat{G}$. Thus every element in $v_i\hat{V}$ is in $\tilde{V}$
\end{proof}

\begin{cor} \label{coro}
$\hat{V}$ is the location space of $S_2$.
\end{cor}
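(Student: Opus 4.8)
The plan is to observe that Corollary~\ref{coro} is nothing more than a verbal restatement of the lemma immediately preceding it. In the statement of that lemma the set $\tilde V$ was \emph{defined} to be the location space of $S_2$, i.e. $\tilde V=\setm{v}{v=Loc(s_2),\,s_2\in S_2}$; hence the phrase ``the location space of $S_2$'' and the symbol $\tilde V$ name the very same set. Since the lemma establishes $\tilde V=\hat V$, substituting one name for the other yields immediately that $\hat V$ is the location space of $S_2$. So the shortest route is simply to cite the lemma.

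If a self-contained argument is preferred, I would verify the two inclusions directly. For ($\subseteq$): if $v$ is the location of some $s_2\in S_2$, then by Definition~\ref{tildes} membership of $s_2$ in $S_2$ already forces $Loc(s_2)\in\hat V$, so $v\in\hat V$. For ($\supseteq$): every $\hat v_i\in\hat V\subseteq V'$ is a genuine vertex of $G'$, so $S'$ contains at least one environment state $s'$ with $Loc(s')=\hat v_i$ (take any assignment of blocked/unblocked to the edges of $G'$); by Definition~\ref{tildes} such a state lies in $S_2$, whence $\hat v_i$ belongs to the location space of $S_2$. The two inclusions together give the corollary.

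There is essentially no obstacle here: the mathematical content is already carried by the lemma, and the corollary exists only as a naming convenience, so that later parts of the reduction may refer to $\hat V$ as ``the location space of $S_2$'' without further comment. The only point that needs a modicum of care is the bookkeeping --- that the forward inclusion is immediate from Definition~\ref{tildes}, while the reverse inclusion uses that $Loc$, restricted to states whose agent sits on a copied vertex, maps onto all of $\hat V$.
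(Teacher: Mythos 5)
Your proposal is correct and matches the paper exactly: the paper gives no separate proof for Corollary~\ref{coro}, since it is an immediate restatement of the preceding lemma, which defines $\tilde V$ as the location space of $S_2$ and proves $\tilde V=\hat V$ by the same two inclusions you spell out. Your self-contained version of the two inclusions is, if anything, cleaner than the lemma's own wording in the paper.
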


Now, we want to show that $S_2$ and $S_1$ are equivalent in a sense that there exist a one-to-one correspondence between $S_2$ and $S_1$ . 
In order to show that we need to make the following definitions and statements,

\begin{defn}
We define EStatusSet(S) be the set of $Estatus(s_i)$ of all elements $s_i \in S$, meaning $EstatusSet(S)=\setm{Estatus(s_i)}{s_i\in S}$.
\end{defn}
In fact, $EstatusSet(S)$ is the set of all the possible status vectors of E, as such $EstatusSet(S)=\{Open,Blocked\}^{|E|}$. 


\begin{lemma}\label{Vcor}
There exist a one-to-one correspondence between V and $\hat{V}$
\end{lemma}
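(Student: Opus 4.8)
The plan is to exhibit the bijection explicitly: it is precisely the map $g_v$ used in the construction. Recall that when building $\hat G(\hat V,\hat E)$ we declared $g_v : V \to \hat V$ to be a bijection copying $V$ into $\hat V$, and we set $\hat V = \bigcup_{1\le i\le n} g_v(v_i)$; that is, $\hat V$ is by definition the image of $g_v$. So the entire content of the lemma is to check that $g_v$, viewed as a map from $V$ onto $\hat V$, is both injective and surjective.

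First I would verify surjectivity: every $\hat v \in \hat V$ is of the form $g_v(v_i)$ for some $v_i \in V$, which is immediate from $\hat V = \bigcup_i g_v(v_i)$. Next, injectivity: if $g_v(v_i) = g_v(v_j)$ then $v_i = v_j$, since $g_v$ was chosen to be a bijection (in particular injective) on $V$. Combining the two, $g_v : V \to \hat V$ is a one-to-one correspondence, which is the claim. For later use I would also record that the inverse $g_v^{-1} : \hat V \to V$ is well defined and that the vertex copy is compatible with the edge copy, in the sense that $g_e\bigl((v_i,v_j)\bigr) = (g_v(v_i), g_v(v_j))$ for every edge $(v_i,v_j)\in E$; this compatibility is what lets the correspondence be lifted from vertices and edges to full states.

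There is essentially no obstacle here — the statement is definitional, a bookkeeping lemma recording that the ``hatted'' copy of the graph is genuinely isomorphic (as a vertex set) to the original. The only point deserving a word of care is that the freshly introduced vertices in $V'_{ij1}$ and $V'_{ij2}$, appended via $f_{v1}, f_{v2}$, are disjoint from $\hat V$, so that $\hat V$ sitting inside $V'$ really is just the copy of $V$ and the correspondence is not accidentally spoiled by identifications; this follows from $f_{v1}, f_{v2}$ being one-to-one functions that generate new vertices for each element of $\hat E \times \hat V$. With that observed the lemma is complete, and it serves as the base case for the analogous one-to-one correspondence between the state sets $S_1$ and $S_2$ — matching agent locations through $g_v$ and edge-status vectors through $g_e$ — that the proof of Theorem \ref{path-dep} ultimately requires.
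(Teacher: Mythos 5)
Your proof is correct and is essentially the paper's own argument: the paper simply observes that $g_v$ is by construction a bijection from $V$ onto $\hat{V}$, which is exactly the map you exhibit. Your additional remarks (surjectivity from $\hat V=\bigcup_i g_v(v_i)$, disjointness of the appended vertices, compatibility with $g_e$) are harmless elaborations of the same definitional fact.
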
 
\begin{proof}
Since $g_v$ is a bijection, the exist a one to one correspondence between $v_i\in V$ and $g_v(v_i)\in \hat{V}$
\end{proof}

%
%

%

\begin{lemma}\label{Ecor}
There exists a one-to-one correspondence between $EstatusSet(S_1)$ and $EstatusSet(S_2)$. 
\end{lemma}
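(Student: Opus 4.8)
The plan is to construct an explicit bijection $\phi \colon EstatusSet(S_1) \to EstatusSet(S_2)$ directly from the way $G'$ and the Bayes network $W'$ were built, so that Lemma~\ref{Ecor} follows by checking that $\phi$ is well defined, injective, and surjective. Recall that a member of $EstatusSet(S_1)$ is nothing but an assignment $\sigma$ of a status in $\{Open,Blocked\}$ to each edge of $G$, since $EstatusSet(S_1)=\{Open,Blocked\}^{|E|}$; and a member of $EstatusSet(S_2)$ is an attainable status vector over $E'=\hat E\cup E'_{ij1}\cup E'_{ij2}$. Given $\sigma$, I would define $\phi(\sigma)$ by: $\phi(\sigma)(g_e(e_i))=\sigma(e_i)$ on $\hat E$; $\phi(\sigma)(e'_{ij1})=Open$ on $E'_{ij1}$; and $\phi(\sigma)(e'_{ij2})=\sigma(e_j)$ on $E'_{ij2}$.

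First I would verify that $\phi$ is well defined. By the defining constraints of $W'$, the $\hat E$ edges mirror $E$ exactly, the $E'_{ij1}$ edges are deterministically open, and, for each fixed $j$, the family $\{e'_{ij2}\}_i$ is perfectly correlated with $g_e(e_j)$; hence the assignment $\phi(\sigma)$ is consistent with $W'$ and is therefore realized by some world. Since in such a world the edge statuses are chosen independently of the agent's position, that world is also realized by a state whose location is any prescribed vertex, in particular a vertex of $\hat V$; by Definition~\ref{tildes} (together with Corollary~\ref{coro}) this state lies in $S_2$, so $\phi(\sigma)\in EstatusSet(S_2)$. Injectivity is then immediate: if $\phi(\sigma)=\phi(\sigma')$ then they agree on the coordinates indexed by $\hat E$, and since $g_e$ is a bijection by construction this forces $\sigma=\sigma'$. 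For surjectivity I would take an arbitrary $\tau\in EstatusSet(S_2)$; the same constraints of $W'$ force $\tau$ to be $Open$ on all of $E'_{ij1}$ and to agree on each $e'_{ij2}$ with the status it assigns to $g_e(e_j)$, so $\tau$ is completely determined by its restriction to $\hat E$, and setting $\sigma(e_i):=\tau(g_e(e_i))$ yields an element of $\{Open,Blocked\}^{|E|}=EstatusSet(S_1)$ with $\phi(\sigma)=\tau$. Thus $\phi$ is a bijection, which is the asserted one-to-one correspondence.

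I expect the only real difficulty to be the bookkeeping around the defining constraints of $W'$: one has to read the terse biconditional $x'_{e'_{1j2}}=0,\dots,x'_{e'_{nj2}}=0\Leftrightarrow x'_{g_e(e_j)}=0$ as asserting that, for each $j$, the whole family $\{e'_{ij2}\}_i$ is slaved to $g_e(e_j)$, so that no spurious edge configurations over $E'$ can arise; once this is pinned down, the statuses of all the newly added edges are completely determined by those of $\hat E$, and the bijection $\phi$ is essentially forced. A secondary point worth stating explicitly is that restricting from $S'$ to $S_2$ (confining the agent to $\hat V$) does not shrink the set of attainable edge-status vectors, because in any world of $W'$ the agent's location and the edge statuses are independent; this is precisely where Corollary~\ref{coro} enters the argument.
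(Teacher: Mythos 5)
Your proposal is correct and follows essentially the same route as the paper: the paper likewise argues that the statuses on $\hat E$ mirror those on $E$ via the bijection $g_e$, that the statuses of $E'_{ij1}$ and $E'_{ij2}$ are forced by the constraints of $W'$ (open, respectively slaved to $g_e(e_j)$), and that passing from $S'$ to $S_2$ loses nothing because location is independent of edge status. Your version merely packages this as one explicit map $\phi$ rather than composing through $EstatusSet(S')$, which is a presentational difference only.
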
    
\begin{proof}  
There exist a one to one correspondence between $EstatusSet(S_1)$ and $EstatusSet(S')$ since for all $s_1\in S_1, s'\in S'$, each element $Estatus(s_1)\in EstatusSet(S_1)$ can be mapped into a different element $Estatus(s')\in EstatusSet(S')$. This is due to the following facts:
\begin{enumerate}
\item[1] (injective)According to definition of W', $\forall{i}\;x'_{g_e(e_i)}=0 \Leftrightarrow x_{e_i}=0$, in other words every edge $e_i\in E$ has equal status as its copied edge $g_e(e_i)$ and thus there exist a one to one correspondence between each set of edge status $Estatus(s_1)$ and each set of edge status $ste(\hat{E},s_1)$ (In fact for each $s_1\in S_1$ $Estatus(s_1)=ste(\hat{E},s_1)$).  Since for every $s'\in S'$, $ste(\hat{E},s')$ is a subset of $EstatusSet(s')$, there exist a one to one mapping between $EstatusSet(S_1)$ and $EstatusSet(S')$
\item[2] (surjective)  The status of edges $ste(E'-\hat{E},s')$ is completely determined and unique, given edges status of edges $ste(\hat{E},s')$, \\ i.e. $ste(\hat{E},s')=\{st(g_e(e_1),s'),st(g_e(e_2),s'),..,st(g_e(e_n),s')\}$. In particular, there exist exactly one element in $EstatusSet(s')\in EstatusSet(S')$ with a given edges status $ste(\hat{E},s')$, since each variable associated with edges in $E'_{ij2}$ depends completely on variables associated with edges in $\hat{E}$ ($\forall{i,j}\;x'_{e'_{ij2}}=0\Leftrightarrow x'_{g_e{e_j}}=0$) and the status of all edges in $E'_{ij1}$ are predetermined to be open ($\forall{i,j}\; P(x'_{e'_{ij1}}=0)=1$). 
 \end{enumerate}
We are left to show that there exist a one to one correspondence between $EstatusSet(S')$ and $EstatusSet(S_2)$. 
Since the location of the agent is independent to the edges status, we can represent $S_2$ as a cartezian product $S_2=\hat{V}\times EstatusSet(S_2)$ (According to corollary \ref{coro} $\hat{V}$ is the location space of $S_2$) but 
according to definition $\ref{tildes}$ we can represent $S_2$ as $S_2=\hat{V}\times EstatusSet(S')$, hence $EstatusSet(S')=EstatusSet(S_2)$.
Therefore, there exists a one-to-one correspondence between $EstatusSet(S_1)$ and $EstatusSet(S_2)$.
\end{proof}

\begin{lemma}
There exists one-to-one correspondence between $S_1$ and $S_2$. 
\end{lemma}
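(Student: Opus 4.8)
The plan is to build the correspondence componentwise out of the two bijections already in hand. Since in both POMDP models the location of the agent is independent of the edge-status vector, I would first record each state space as a Cartesian product: $S_1 = V \times EstatusSet(S_1)$ and, using Corollary \ref{coro} together with Definition \ref{tildes}, $S_2 = \hat{V} \times EstatusSet(S_2)$. This is exactly the decomposition already exploited in the proof of Lemma \ref{Ecor}, so nothing new has to be argued to obtain it.

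Next I would define a map $\Phi : S_1 \to S_2$ on a state $s_1 \in S_1$, determined by its location $Loc(s_1) \in V$ and its edge-status vector $Estatus(s_1) \in EstatusSet(S_1)$, by setting $\Phi(s_1)$ to be the state of $G'$ with location $g_v(Loc(s_1))$ and edge-status vector $\psi(Estatus(s_1))$, where $g_v$ is the vertex-copy bijection of Lemma \ref{Vcor} and $\psi : EstatusSet(S_1)\to EstatusSet(S_2)$ is the bijection furnished by Lemma \ref{Ecor}. To see that $\Phi$ is well defined I would check that its image lies in $S_2$: its location component is in $\hat{V}$, and its status component is a legal status vector for $G'$, which is precisely the content of Lemma \ref{Ecor}.

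Finally, bijectivity of $\Phi$ follows from the fact that a Cartesian product of two bijections is a bijection: injectivity is inherited coordinatewise from $g_v$ and $\psi$, and surjectivity holds because every $s_2 \in S_2 = \hat{V}\times EstatusSet(S_2)$ has a preimage obtained by applying $g_v^{-1}$ and $\psi^{-1}$ in the respective coordinates, which also gives an explicit formula for $\Phi^{-1}$.

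The only step that needs any care — and it is not really an obstacle — is justifying the product decomposition of the two state spaces and verifying that $\Phi$ respects the defining constraint $Loc(s_2)\in\hat{V}$ of $S_2$; everything else is a routine composition of results already established. The genuinely substantial work, deferred to later stages of the proof of Theorem \ref{path-dep}, is to promote this bijection on states to a full equivalence of the POMDPs $M_1$ and $M_2$ (matching the meta-actions, transitions, observations and rewards), but that lies outside the present lemma.
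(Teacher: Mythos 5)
Your argument is exactly the one the paper gives: decompose $S_1 = V \times EstatusSet(S_1)$ and $S_2 = \hat{V} \times EstatusSet(S_2)$, then take the product of the vertex bijection from Lemma \ref{Vcor} with the status-set bijection from Lemma \ref{Ecor}. The additional checks you mention (well-definedness, explicit inverse) are harmless elaborations of the same proof.
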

\begin{proof}
According to lemma \ref{Vcor}, there exist a one-to-one correspondence between V and $\hat{V}$. According to lemma \ref{Ecor}, there exists a one-to-one correspondence between $EstatusSet(S_1)$ and $EstatusSet(S_2)$. Since $S=V\times EstatusSet(S_1)$ and  $S_2=\tilde{V}\times EstatusSet(S_2)$, there exists one-to-one correspondence between $S_1$ and $S_2$.
\end{proof}


\begin{defn}
Let $v_i,v_j\in V$, $v'_{ij1}\in V_{ij1}$, $\hat{v}_i,\hat{v}_j\in \hat{V}$ and $e_j\in E$.
We define $a_2$ in I' to be the equivalent meta-action to action $a_1\in A_1$($a_1\sim a_2$) if and only if:\newline
$
a_2 = \left\{ 
\begin{array}{ll}
move (\hat{v}_i,\hat{v}_j) & \quad if\; a \in move(v_i,v_j)\\
move (\hat{v}_i,v'_{ij1}), move(v'_{ij1},\hat{v}_i) & \quad if\; a\in sense(v_i,e_j)
\end{array}
\right. 
$
\end{defn}

\begin{defn}
We define $A_2$ to be the set of all equivalent actions of actions in $A_1$. Meaning $A_2=\setm{a2}{a2\sim a1, a1\in A1}$
\end{defn}
\begin{defn}
We define the set $\tilde{st}_{e_i}$ to be the following:\newline
$
\tilde{st}_{e_i} = \left\{ 
\begin{array}{ll}
\{O_{g_e({e}_i)},O_{e_{1i2}},O_{e_{2i2}},...,O_{e_{ni2}}\}& \quad if\; st_{e_i}=O_{e_i}\\
\{B_{g_e({e}_i)},B_{e_{1i2}},B_{e_{2i2}},...,B_{e_{ni2}}\}& \quad if\; st_{e_i}=B_{e_i}\\
\end{array}
\right.
$
\end{defn}

\begin{defn}\label{observationDef}
Let $Z_2$ be a set of observations in I' and $Z_1$ be a set of observations in I. We define $z_2 \in Z_2$ is the equivalent observation of $z_1\in Z_1$(denoted by $z_1\sim z_2$) if and only if:\newline
\begin{equation}
z_2=\{\tilde{st}_{e_{i_1}}\cap \tilde{st}_{e_{i_2}}\cap...\cap\tilde{st}_{e_{i_m}}\} \quad if\; z_1=\{st_{e_{i_1}},st_{e_{i_2}},...,st_{e_{i_m}}\}
\end{equation}
\end{defn}

\begin{lemma}\label{lemma 2}
The cost of action in $A_1$ is equal to the cost of the equivalent meta-action in $A_2$. 
\end{lemma}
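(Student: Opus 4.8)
The plan is to prove the lemma by a straightforward case analysis on the type of action $a_1 \in A_1$, of which there are exactly two: a move action and a sense action. For a meta-action that is a sequence of move actions, its cost is understood to be the sum of the travel costs of the component moves, so in each case it suffices to identify which edges of $G'$ the meta-action $a_2$ traverses and to add up their $C'$-values, then compare with the cost of $a_1$ in $I$.

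First I would handle the case $a_1 = move(v_i, v_j)$, which traverses the edge $e = (v_i, v_j) \in E$ at cost $C(e)$. By the definition of the correspondence $a_1 \sim a_2$, we have $a_2 = move(\hat v_i, \hat v_j)$, which traverses the copied edge $g_e(e) \in \hat E \subseteq E'$; by clause (1) in the definition of $C'$ we get $C'(g_e(e)) = C(e)$, so the cost of $a_2$ equals the cost of $a_1$. (We may note in passing that this is consistent even when $e$ is blocked: by the definition of $W'$, $g_e(e)$ is open iff $e$ is open, so $a_2$ is performable exactly when $a_1$ is.)

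Next I would handle the case $a_1 = sense(v_i, e_j)$, with cost $SC(e_j)$. Here $a_2$ is the two-step meta-action $move(\hat v_i, v'_{ij1}),\ move(v'_{ij1}, \hat v_i)$, which crosses the edge $e_{ij1}$ once in each direction, so its total cost is $2\,C'(e_{ij1})$. By clause (2) in the definition of $C'$, $C'(e_{ij1}) = \tfrac{SC(e_j)}{2}$, whence the cost of $a_2$ is $2 \cdot \tfrac{SC(e_j)}{2} = SC(e_j)$, again matching the cost of $a_1$. Two small points deserve a remark for completeness: the meta-action is always a legal sequence of moves because $e_{ij1}$ is unblocked with probability $1$ (exactly the clause $P(x'_{e'_{ij1}} = 0) = 1$ in the definition of $W'$), and the dead-end edge $e_{ij2}$, which carries infinite cost, is never traversed by $a_2$ and so contributes nothing.

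Since these two cases exhaust $A_1$, and $A_2$ is by definition the set of meta-actions equivalent to actions of $A_1$, the equality of costs holds for every pair $a_1 \sim a_2$. I do not anticipate any real obstacle here beyond bookkeeping; the only place where a moment's care is needed is the sense case, where one must remember that the meta-action crosses $e_{ij1}$ twice — out and back — so that the factor of $\tfrac12$ deliberately built into $C'(e_{ij1})$ is precisely what is required to recover $SC(e_j)$.
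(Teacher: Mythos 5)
Your proof is correct and follows essentially the same route as the paper's: a two-case analysis in which the move case is settled by clause (1) of the weight function $C'$, and the sense case by observing that the meta-action crosses $e_{ij1}$ twice at cost $\frac{SC(e_j)}{2}$ each way, summing to $SC(e_j)$. The extra remarks on performability and on $e_{ij2}$ never being traversed are harmless additions not present in (but consistent with) the paper's argument.
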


\textbf{Proof.} 
\begin {enumerate}
\item[1] $C(move(v_i,v_j))=C(move (\hat{v}_i,\hat{v}_j))$ (by definition of the weight function).
\item[2] $C(sense(v_i,e_j))=SC(e_j)$, and $C(move(\hat{v}_i,v'_{ij1}))=C(move(v'_{ij1},\hat{v}_i))=\frac{SC(e'_j)}{2}$ . Therefore, $C(sense(v_i,e_j))=C(move(\hat{v}_i,v'_{ij1}))+C(move(_{ij1},\hat{v}_i)))$.
\end {enumerate}

\begin{lemma}\label{lemma 1}
Given $s_1\in S_1,a_1\in A_1,z_1\in Z_1$, and $s_2\in S_2,a_2\in A_2,z_2\in Z_2$ such that $s_1\sim s_2, a_1\sim a_2, z_1\sim z_2$, then 
$O_1(s_1,a_1,z_1)=O_2(s_2,a_2,z_2)$.
\end{lemma}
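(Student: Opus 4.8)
The plan is to prove the claim by a direct case analysis on the type of the action $a_1$, matching each observation in $I$ with its equivalent observation in $I'$ via Definition \ref{observationDef}, and showing the observation functions agree in each case. Since $O_1$ and $O_2$ are deterministic $\{0,1\}$-valued functions (by the POMDP construction for CTP given earlier), it suffices to show that $O_1(s_1,a_1,z_1)=1$ if and only if $O_2(s_2,a_2,z_2)=1$, where $s_1 \sim s_2$, $a_1 \sim a_2$, $z_1 \sim z_2$.

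First I would split on whether $a_1$ is a move action or a sense action. If $a_1 = move(v_i,v_j)$, then $a_2 = move(\hat v_i, \hat v_j)$, and $O_1(s_1,a_1,z_1)=1$ exactly when $z_1 = \{st_e(s_1) \mid e \in Inc(v_j)\}$. I would then use the correspondence $s_1 \sim s_2$ (which via Lemma \ref{Ecor} gives $st_{e_i}(s_1) = st_{g_e(e_i)}(s_2)$ for all $i$) together with the fact that $\hat E$ restricted to the incident edges of $\hat v_j$ is exactly the copy of $Inc(v_j)$ — no new edges of $E'_{ij1}$ or $E'_{ij2}$ are incident to $\hat v_j$ in a way that is observed by a move into $\hat v_j$, or rather those that are incident are the dead-end path edges $e'_{jk1}$, which are deterministically open and whose status is already fixed. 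The key point is that $z_2$, defined as the intersection $\bigcap \tilde{st}_{e_{i_\ell}}$, encodes precisely the same information content about $\hat E$ as $z_1$ does about $E$, after translating through $\tilde{st}$. So $O_2(s_2,a_2,z_2)=1$ iff $z_2$ matches the observed incident-edge statuses at $\hat v_j$, which by the $\sim$ relations holds iff $O_1(s_1,a_1,z_1)=1$.

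Next, for $a_1 = sense(v_i,e_j)$, the meta-action is $a_2 = move(\hat v_i, v'_{ij1}), move(v'_{ij1}, \hat v_i)$. Here the agent in $I'$ walks into the dead-end path $p_{ij}$ and back. Upon arriving at $v'_{ij1}$, the observed incident edges are $e'_{ij1}$ (deterministically open, carries no information) and $e'_{ij2}$. By the definition of $W'$, the edge $e'_{ij2}$ is open iff $g_e(e_j)$ is open (since $x'_{e'_{1j2}} = 0, \dots, x'_{e'_{nj2}} = 0 \Leftrightarrow x'_{g_e(e_j)} = 0$; and the status of $e'_{ij2}$ individually is determined by $x'_{g_e(e_j)}$ together with the dependency structure). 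Thus the observation received in $I'$ reveals exactly the status of $g_e(e_j)$, which corresponds under $s_1 \sim s_2$ to the status of $e_j$ — precisely the information $sense(v_i,e_j)$ reveals in $I$. I would then check that $\tilde{st}_{e_j}$ is exactly the observation vector $z_2$ produced by this meta-action when $z_1 = \{st_{e_j}\}$, matching Definition \ref{observationDef}, so again $O_1 = O_2$ on the equivalent triples.

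The main obstacle I anticipate is the bookkeeping around what exactly the local-sensing observation at $v'_{ij1}$ yields: one must argue carefully that observing $e'_{ij2}$ at $v'_{ij1}$ determines the status of $g_e(e_j)$ \emph{unambiguously} (the surjectivity/injectivity argument of Lemma \ref{Ecor} is what makes this work — the $E'_{ij2}$ edges are perfectly correlated across all $i$ with $g_e(e_j)$), and that no spurious additional information about other edges leaks through. A secondary subtlety is handling the meta-action as a two-step sequence: strictly, $O_2$ as defined is a single-step observation function, so one either interprets $O_2(s_2,a_2,z_2)$ as the observation accumulated over the meta-action, or one verifies that the second step $move(v'_{ij1},\hat v_i)$ contributes only already-known information (the incident edges of $\hat v_i$, whose relevant statuses were seen at step one or are the deterministically-open $e'_{ij1}$). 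I would state this interpretation explicitly before the case analysis to keep the argument clean.
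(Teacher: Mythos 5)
Your proposal is correct and follows essentially the same route as the paper: a case split on move versus sense actions, using the direct local observations at the destination vertex together with the perfect correlations in $W'$ (the biconditional tying $x'_{e'_{ij2}}$ to $x'_{g_e(e_j)}$) to show the accumulated observation in $I'$ is exactly the equivalent observation $z_2$ of Definition \ref{observationDef}. If anything you are slightly more careful than the paper on two points it glosses over --- the direction of inference at $v'_{ij1}$ (you correctly note the agent directly sees $e'_{ij2}$ and infers $g_e(e_j)$, whereas the paper's wording reverses this) and the need to fix an interpretation of $O_2$ for a two-step meta-action --- but the substance is the same.
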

\textbf{Proof.}
\begin{enumerate}
\item[1] In case that $a_1=move(v_j,v_i)$. Let $e_{i_1},e_{i_2},...,e_{i_n}\in E_{v_i}$(the edges incident to $v_i$) and let $\tilde{E}_{v_i}=\{\tilde{st}_{e_{i_1}}\cap \tilde{st}_{e_{i_2}}\cap...\cap\tilde{st}_{e_{i_m}}\}$. By definition of CTP, if $a_1=move(v_j,v_i)$ then the agent observes $z_1=st_{e_{i_1}},st_{e_{i_2}},...,st_{e_{i_n}}$ (the pre-known edges incident to $v_i$ which are revealed by the action). Therefore  $O_1(s_1,a_1,z_1)=1$ if $z_1=ste(E_{v_i},s_1)$ and $O_1(s_1,a_1,z_1)=0$ otherwise. The equivalent action of $a_1$ is $a_2=move (\hat{v}_i,\hat{v}_j)$, hence, taking $a_2$, the agent directly observes $st_{\hat{e}_{i_1}},st_{\hat{e}_{i_2}},...,st_{\hat{e}_{i_n}}$, but in addition, according to definition \ref{bayesianDef} $x'_{e'_{1i2}}=0,x'_{e'_{2i2}}=0,...,x'_{e'_{ni2}}=0\Leftrightarrow x'_{\hat{e_i}}=0$, hence 
the agent also indirectly observes edges $e_{1i2},e_{2i2},...,e_{ni2}$ . Thus, the agent's overall observation is $z_2=\tilde{E}_{v_i}$.
Since $z_1\sim z_2$, by definition \ref{observationDef} if $z_1=ste(E_{v_i},s_1)$ then $z_2=\tilde{E}_{v_i}$. Hence $O_2(s_2,a_2,z_2)=1$ if $z_1=ste(E_{v_i},s_1)$ and $O_2(s_2,a_2,z_2)=0$ otherwise. Thus, in this case $O_1(s_1,a_1,z_1)=O_2(s_2,a_2,z_2)$
\item[2] In case that $a_1=sense(e_j)$ the agent observes $z_1=st_{e_j}$ therefore $O_1(s_1,a_1,z_1)=1$ if $z_1=st_{e_j}$ and $O_1(s_1,a_1,z_1)=0$ otherwise.
Since $a1\sim a_2$, $a_2=(move(\hat{v}_i,v'_{ij1}),move(v'_{ij1},\hat{v}_i))$ where agent observes $st_{\hat{e}_j}$ directly
and observes $st_{e_{1j2}},st_{e_{2j2}},...,st_{e_{nj2}}$ indirectly(the same cause as in case 1). Thus, the agent's overall observation is $z_2=\tilde{st}_{e_j}$.
Similarly to case 1, since $z_1\sim z_2$, by definition \ref{observationDef} if $z_1=st_{e_j}$ then $z_2=tilde{E}_{v_i}$. Hence $O_2(s_2,a_2,z_2)=1$ if $z_1=st_{e_j}$ and $O_2(s_2,a_2,z_2)=0$ otherwise. Thus, in this case $O_1(s_1,a_1,z_1)=O_2(s_2,a_2,z_2)$ as well.
\end{enumerate}

\begin{lemma}\label{lemma 1}
Given $s_{1a}\in S_1,s_{1b}\in S_1,a_1\in A_1,z_1\in Z_1$, and $s_{2a}\in S_2,s_{2b}\in S_2,a_2\in A_2,z_2\in Z_2$ such that action $a_1$ is taken in $s_{1a}$ and meta-action $a_2$ is taken in $s_{2a}$, then if $s_{1a}\sim s_{2a},s_{1b}\sim s_{2b}, a_1\sim a_2$ then $Tr_1(s_{1a},a_1,s_{1b})=Tr_2(s_{2a},a_2,s_{2b})$.
\end{lemma}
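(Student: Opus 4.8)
The plan is to split on the type of the primitive action $a_1$, exactly mirroring the two clauses in the definition of $a_1\sim a_2$, and in each case to reduce the asserted equality of transition probabilities to the state correspondences $s_{1a}\sim s_{2a}$, $s_{1b}\sim s_{2b}$ together with the defining properties of the Bayes network $W'$. Since the underlying POMDPs store the full edge-status vector in every state, both $Tr_1$ and $Tr_2$ are $\{0,1\}$-valued, so it suffices to show that the Boolean conditions characterizing a transition of value $1$ are equivalent under $\sim$.

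First I would treat the case $a_1=move(v_i,v_j)$, so that $a_2=move(\hat v_i,\hat v_j)$ is a single move in $G'$. By the POMDP definition, $Tr_1(s_{1a},a_1,s_{1b})=1$ iff (i) the edge statuses agree in $s_{1a}$ and $s_{1b}$, (ii) $Loc(s_{1a})=v_i$ and $Loc(s_{1b})=v_j$, and (iii) the edge $(v_i,v_j)$ is open in $s_{1a}$; while $Tr_2(s_{2a},a_2,s_{2b})=1$ iff the analogous conditions hold with $\hat v_i,\hat v_j$ and the copied edge $g_e((v_i,v_j))$ in place of the originals. I would then invoke Lemma \ref{Vcor} for the location conditions, the identity $x'_{g_e(e)}=0\Leftrightarrow x_e=0$ from the construction of $W'$ both for condition (iii) and for the "agree" condition restricted to $\hat E$, and the fact (established inside the proof of Lemma \ref{Ecor}) that the statuses of $E'\setminus\hat E$ are a deterministic function of the statuses of $\hat E$, to conclude that the "agree" clauses are equivalent as well. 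Hence the two transition values are simultaneously $0$ or simultaneously $1$.

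Next I would treat the case $a_1=sense(v_i,e_j)$, so that $a_2$ is the two-step meta-action $\bigl(move(\hat v_i,v'_{ij1}),\,move(v'_{ij1},\hat v_i)\bigr)$ that runs down and back up the dead-end path $p_{ij}$. Here $Tr_1(s_{1a},a_1,s_{1b})=1$ iff $s_{1b}=s_{1a}$, since a sense action does not change the state. For $M_2$ I would compute $Tr_2$ on the meta-action as the composition (a sum over the single intermediate state) of the two primitive move transition functions; the crucial point is that $e'_{ij1}$ is open with probability $1$ in $W'$, so the first move is always enabled, lands the agent at $v'_{ij1}$ without touching any edge status, and the second move is likewise always enabled and returns the agent to $\hat v_i$. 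Therefore the net effect of $a_2$ on $S_2$ is the identity, i.e. $Tr_2(s_{2a},a_2,s_{2b})=1$ iff $s_{2b}=s_{2a}$, which matches $Tr_1$ under the correspondence.

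I expect the sense case to be the main obstacle, for two reasons. First, one must be careful that $Tr_2$ evaluated on a \emph{meta-action} means the composition of the primitive transition functions, and must check that this composition stays deterministic and maps $S_2$ back into $S_2$ — the intermediate vertex $v'_{ij1}$ lies outside $\hat V$, so it is the endpoint of the meta-action, not an intermediate one, that matters for membership in $S_2$. Second, one must explicitly appeal to the clause $P(x'_{e'_{ij1}}=0)=1$ of $W'$ to guarantee that the dead-end path is always traversable, so that the simulated sensing can never "fail" before it completes. The move case, by contrast, is essentially bookkeeping once the state-correspondence lemmas are available.
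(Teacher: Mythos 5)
Your proof is correct and follows essentially the same route as the paper's: a case split on whether $a_1$ is a move or a sense action, reducing the move case to the edge-copying correspondence and the sense case to the observation that the two-step detour through $v'_{ij1}$ returns the agent to $\hat{v}_i$ without altering the edge-status vector, so the meta-action acts as the identity on $S_2$. If anything, your version is more careful than the paper's own terse argument, which in the move case checks only that the edge exists (not that it is open) and does not spell out the composition of the two primitive transitions for the meta-action.
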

\begin{proof}
WLOG, let $Loc(s_{1a})=v_i$, and $Loc(s_{1b})=v_j$. 
Since $s_{1a}\sim s_{2a}, s_{1b}\sim s_{2b}$ we get $Loc(s_{1a})=\hat{v}_i$ and $Loc(s_{1b})=\hat{v}_j$.
\begin{itemize}
\item WLOG, in case that $a_1=move(v_i,v_j)$. If $(v_1,v_2)\in E$ we get $Tr_1(s_{1b},a,s_{1a})=1$ otherwise $Tr_1(s_{1b},a,s_{1a})=0$. Furthermore, if $(v_i,v_j)\in E$ then $(g_v(v_i),g_v(v_j))\in \hat{E}$ which incurs  $Tr_2(s_{2b},a,s_{2a})=1$, otherwise $Tr_2(s_{2b},a,s_{2a})=0$. Thus, in case that $a_1=move(v_i,v_j)$ we get $Tr_1(s_{1b},a,s_{1a})=Tr_2(s_{2b},a,s_{1a})$.
\item WLOG, in case that $a_1=sense(e_j,v_i)$. Since the sense action does not change the location of the agent we get $s_{1a}=s_{1b}$. Since $a_1\sim a_2$ $a_2=move(\hat{v}_i,v_{ij1}),move(v_{ij1},\hat{v}_i)$. In this case $Loc(s_{2a})=Loc(s_{2b})$ since the agent return to it original location $\hat{v}_i$. This incurs $s_{2a}=s_{2b}$ and thus $Tr_1(s_{1b},a,s_{1a})=Tr_2(s_{2b},a,s_{2a})=1$.
\end{itemize}
\end{proof}

\begin{lemma}\label{lemma 1}
Given states $s_1\in S_1,s_2\in S_2,a_1\in A_1,a_2\in A_2$ then if $s_1\sim s_2$ and $a_1\sim a_2$ then $R_1(s_1,a_1)=R_2(s_2,a_2)$.
\end{lemma}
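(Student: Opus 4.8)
The plan is to prove this last lemma by a direct case analysis on the type of action $a_1$, mirroring the structure used in the preceding lemmas on observations and transitions. Recall that the reward (cost) of an action in the belief-MDP is $R(b,a)=C(a)$ whenever $a$ can actually be performed, and $0$ otherwise; so it suffices to show that (i) the action cost $C(a_1)$ equals the meta-action cost $C(a_2)$, and (ii) $a_1$ is performable in $s_1$ if and only if the meta-action $a_2$ is performable in $s_2$. Part (i) is exactly Lemma \ref{lemma 2}, which I would invoke directly: $C(\textit{move}(v_i,v_j)) = C(\textit{move}(\hat v_i,\hat v_j))$ by definition of the weight function $C'$, and $C(\textit{sense}(v_i,e_j)) = SC(e_j) = \frac{SC(e_j)}{2} + \frac{SC(e_j)}{2} = C(\textit{move}(\hat v_i,v'_{ij1})) + C(\textit{move}(v'_{ij1},\hat v_i))$.

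For part (ii) I would split into the two cases of the equivalence definition. If $a_1 = \textit{move}(v_i,v_j)$, then $a_1$ is performable in $s_1$ iff the edge $(v_i,v_j)$ is open in $s_1$; since $s_1\sim s_2$ and by the definition of $W'$ (every copied edge $g_e(e)$ has the same status as $e$, cf.\ Lemma \ref{Ecor}), the edge $(\hat v_i,\hat v_j) = g_e((v_i,v_j))$ is open in $s_2$ iff $(v_i,v_j)$ is open in $s_1$, so $a_2 = \textit{move}(\hat v_i,\hat v_j)$ is performable in $s_2$ iff $a_1$ is performable in $s_1$, and in that case $R_1(s_1,a_1) = C(a_1) = C(a_2) = R_2(s_2,a_2)$; otherwise both rewards are $0$. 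If $a_1 = \textit{sense}(v_i,e_j)$, then $a_1$ is always performable (a sense action has no precondition), and $a_2 = (\textit{move}(\hat v_i,v'_{ij1}), \textit{move}(v'_{ij1},\hat v_i))$ is likewise always performable because all edges in $E'_{ij1}$ are predetermined to be open ($P(x'_{e'_{ij1}}=0)=1$), so the first leg $\textit{move}(\hat v_i,v'_{ij1})$ is always traversable and the return leg $\textit{move}(v'_{ij1},\hat v_i)$ uses the same edge; hence $R_1(s_1,a_1) = SC(e_j) = C(a_2) = R_2(s_2,a_2)$.

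The main subtlety — and the step I would be most careful about — is the cost of the meta-action in the sense case: one must confirm that the second edge $e'_{ij2}$ of the dead-end path (which has cost $\infty$ and may be blocked) is never actually traversed by the equivalent meta-action, so that the blocking status of $e'_{ij2}$ contributes nothing to $R_2(s_2,a_2)$. This is immediate from the definition of $a_2$, which only moves onto $v'_{ij1}$ and back, never onto $v'_{ij2}$; the status of $e'_{ij2}$ is observed (indirectly revealing $st_{e_j}$, as established in the observation lemma) but the edge is not crossed, so it incurs no travel cost. With that observation in hand the two cases close and $R_1(s_1,a_1) = R_2(s_2,a_2)$ in all cases, completing the proof.
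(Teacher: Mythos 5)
Your proof is correct and follows essentially the same route as the paper's: a case split on whether $a_1$ is a move or a sense action, reducing each case to the cost equalities $C(move(v_i,v_j))=C(move(\hat{v}_i,\hat{v}_j))$ and $C(sense(v_i,e_j))=C(move(\hat{v}_i,v'_{ij1}))+C(move(v'_{ij1},\hat{v}_i))$ already established for equivalent meta-actions. The only difference is that you additionally verify performability (that the reward is not forced to $0$ on one side but not the other, and that the infinite-cost edge $e'_{ij2}$ is never traversed), a point the paper's own proof passes over silently; this is extra rigor rather than a different argument.
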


\begin{proof}
WLOG, let $Loc(s_1)=v_1$. Given that $s_1\sim s_2$ then $Loc(s_2)=\hat{v_1}$. 
\begin{itemize}
\item In case that $a_1=move(v_i,v_j)$, $R_1(s_1,a_1)=C(move(v_i,v_j))$. If $a_1\sim a_2$ then $a_2=move(\hat{v}_i,\hat{v}_j)$.
Since $C(move(v_i,v_j))=C(move(\hat{v}_i,\hat{v}_j))$ we get $R_1(s_1,a_1)=R_2(s_2,a_2)$.
\item In case that $a_1=sense(v_i,e_j)$, $R_1(s_1,a_1)=C(sense(v_i,e_j))$. If $a_1\sim a_2$ then $a_2=move(v_i,v_{ij1}),move(v_{ij1},v_i)$.
Since $C(sense(v_i,e_j))=C(move(\hat{v}_i,v'_{ij1}))+C(move(v'_{ij1},\hat{v}_i)))$ we get $R_1(s_1,a_1)=R_2(s_2,a_2)$.
\end{itemize}
\end{proof}

\begin{lemma}
$M_1$ is equivalent to $M_2$.
\end{lemma}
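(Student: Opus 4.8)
The plan is to exhibit an isomorphism of POMDPs between $M_1$ and $M_2$ assembled from the three correspondences already established, and then to transport it through the belief-MDP construction of Section~\ref{sec:beliefMDP} so as to obtain $Exp(\pi_1^*)=Exp(\pi_2^{*})$, which is what Theorem~\ref{path-dep} ultimately requires. First I would fix the notion precisely: call $M_1$ and $M_2$ \emph{equivalent} if there are bijections $\sigma:S_1\to S_2$, $\alpha:A_1\to A_2$ and $\zeta:Z_1\to Z_2$ such that the induced map on belief states sends the initial belief of $M_1$ to that of $M_2$ and, for all $s,s'\in S_1$, $a\in A_1$, $z\in Z_1$, one has $Tr_1(s,a,s')=Tr_2(\sigma(s),\alpha(a),\sigma(s'))$, $O_1(s,a,z)=O_2(\sigma(s),\alpha(a),\zeta(z))$ and $R_1(s,a)=R_2(\sigma(s),\alpha(a))$.

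The second step is to name the three bijections and check their types. For $\sigma$ I take the relation $s_1\sim s_2$, which is a bijection $S_1\to S_2$ by the one-to-one correspondence lemma for $S_1$ and $S_2$ (itself assembled from Lemma~\ref{Vcor} and Lemma~\ref{Ecor}). For $\alpha$ I take $a_1\mapsto a_2$ with $a_1\sim a_2$; this is a bijection onto $A_2$ straight from the definition of $A_2$ as the set of equivalent meta-actions, once one notes that distinct move and sense actions of $I$ are sent to distinct meta-actions of $I'$, because the targets $\hat v_j$ and the dead-end vertices $v'_{ij1}$ are pairwise distinct. For $\zeta$ I take $z_1\mapsto z_2$ with $z_1\sim z_2$ of Definition~\ref{observationDef}; this is a bijection since $\tilde{st}_{e_i}$ is determined by $st_{e_i}$ and, conversely, $\tilde{st}_{e_i}$ contains $O_{g_e(e_i)}$ exactly when $st_{e_i}=O_{e_i}$, so the map on single-edge observations is invertible and extends componentwise to the products. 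The initial-belief requirement holds because in both problems the agent starts at $s$ (resp.\ $\hat s=g_v(s)\in\hat V$) with the prior on the original edges $\hat E$ equal to that on $E$ and everything else in $W'$ determined. With the bijections in place, the three defining identities are exactly the transition-preservation, observation-preservation and reward-preservation lemmas proved above, the reward identity additionally invoking Lemma~\ref{lemma 2} on the equality of action and meta-action costs. This already establishes the lemma as stated.

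Third, toward Theorem~\ref{path-dep}, I would lift the isomorphism to the belief level: $\sigma$ induces a bijection $\bar\sigma$ of the belief simplices by $\bar\sigma(b)(\sigma(s))=b(s)$, and substituting the state-level equalities for $Tr$ and $O$ into the belief-update formula of Section~\ref{sec:beliefMDP} yields $Tr_1(b,a,b')=Tr_2(\bar\sigma(b),\alpha(a),\bar\sigma(b'))$ and $R_1(b,a)=R_2(\bar\sigma(b),\alpha(a))$ for the induced belief-MDPs. A one-line induction on the horizon in the value-iteration recursion then gives $V^*_{M_1}(b)=V^*_{M_2}(\bar\sigma(b))$ for every reachable $b$, and in particular $V^*_{M_1}(b_0)=V^*_{M_2}(b_0)$, i.e.\ $Exp(\pi_1^*)=Exp(\pi_2^{*})$.

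The step I expect to be the main obstacle is not any individual algebraic equality but verifying cleanly that $\alpha$ is well-defined and bijective \emph{onto} $A_2$ together with the fact that the ``indirect'' observations picked up on entering a copied vertex --- the statuses of the dead-end edges $e'_{ij2}$ --- are precisely what $\zeta$ records and carry no information beyond the status of $\hat e_j$. This is exactly where the shape of the Bayes network $W'$ matters (each $e'_{ij2}$ functionally determined by $\hat e_j$, each $e'_{ij1}$ deterministically open): a careless treatment would either double-count information or let the agent in $I'$ learn strictly more than in $I$. Everything else --- matching initial beliefs, costs, and the belief-MDP lift --- is routine bookkeeping layered on the lemmas already in hand.
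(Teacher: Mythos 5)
Your proof takes essentially the same route as the paper's: it assembles the previously established one-to-one correspondence between $S_1$ and $S_2$, the equivalence of actions and observations, and the three preservation lemmas for $Tr$, $O$ and $R$ into a single equivalence statement. You are considerably more explicit than the paper (which only gathers these facts in a few lines) about the bijectivity of the three maps and the subsequent belief-MDP lift, but the underlying argument is the same.
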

\begin{proof}
We have shown that there exist a one to one correspondence between $S_1$ and $S_2$. 
By defining the set $A_2$ which consist of equivalent action in $A_1$, and by defining the set $Z_2$ which consist of equivalent observations in $Z_1$,
we have shown that functions $Tr_1=Tr_2$, $O_1=O_2$, and $R_1=R_2$ when generated on equivalent set of states, observation and actions.
\end{proof}

\begin{lemma}
$M_2$ models the problem of I'. 
\end{lemma}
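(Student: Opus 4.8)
The goal is to show that the optimal expected cost of the macro-level POMDP $M_2$ coincides with the optimal expected travel cost of the CTP-with-dependencies instance $I'$; this is precisely what ``$M_2$ models $I'$'' asserts, and, combined with the previous lemma ($M_1$ is equivalent to $M_2$) and the fact that $M_1$ models $I$, it gives $Exp(\pi_1^*)=Exp(\pi_2^{'*})$ and closes Theorem~\ref{path-dep}. I would compare $M_2$ with the ``full'' POMDP $M'$ induced by $I'$ through the construction of Section~\ref{sec:beliefMDP} (CTP-with-dependencies is still a POMDP; only the belief update uses the Bayes network $W'$ instead of independent edges) and prove the two inequalities between $Exp(\pi_2^{'*})$ and the optimal cost of $M_2$ separately.

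The first step is to record the topology of $G'$ that makes the reduction faithful: each gadget vertex $v'_{ij1}$ has degree two (its neighbours are $\hat v_i$ and $v'_{ij2}$), each $v'_{ij2}$ is a dead end, the edge $e'_{ij2}$ has infinite cost, and $e'_{ij1}$ is open with probability $1$; hence the attached paths $p_{ij}$ create no new route between copied vertices, shortest-path distances between vertices of $\hat V$ in $G'$ equal those in $\hat G$ (hence in $G$), and $s',t'\in\hat V$. The ``unrolling'' inequality is then routine: a meta-action $move(\hat v_i,\hat v_j)\in A_2$ is already a primitive move of $I'$; a meta-action $\form{move(\hat v_i,v'_{ij1}),move(v'_{ij1},\hat v_i)}$ is a legal two-step walk (since $e'_{ij1}$ is always open), it costs $2\,C'(e'_{ij1})=SC(e_j)$ by the definition of $C'$ and Lemma~\ref{lemma 2}, and on reaching $v'_{ij1}$ local sensing reveals $st_{e'_{ij2}}$, which by the dependency $x'_{e'_{ij2}}=0\Leftrightarrow x'_{g_e(e_j)}=0$ pins down $st_{e_j}$ exactly. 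Thus the belief reached after the meta-action is exactly the $S_2$-state predicted by $Tr_2$ and the information gained is the equivalent observation of Definition~\ref{observationDef}, so iterating this step converts any policy of $M_2$ into a legal policy of $I'$ with the same expected cost, giving $Exp(\pi_2^{'*})\le(\text{optimal cost of }M_2)$.

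The converse inequality is the crux, and I expect it to be the main obstacle. From any optimal policy of $I'$ I would produce, by a chain of cost-non-increasing surgeries, a policy whose executions use only meta-actions from $A_2$ and whose decision points lie only in $S_2$: a finite-cost policy never traverses an $\infty$-cost edge $e'_{ij2}$, so once it steps onto a gadget vertex $v'_{ij1}$ the only non-wasteful continuation is to step straight back to $\hat v_i$ (CTP has no \emph{wait} action, and oscillating on $e'_{ij1}$ only adds cost), and re-entering a gadget path to ``sense'' an edge whose status is already known can be excised outright. After these surgeries the policy makes all its genuine choices at vertices of $\hat V$, in belief states whose edge-status component is --- by the structure of $W'$, exactly as exploited in Lemma~\ref{Ecor} --- determined by the statuses of the $\hat E$-edges and hence representable in $S_2$; its transitions between such states are meta-actions of $A_2$; and the information gathered at each macro-step is an observation of $Z_2$. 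Reading this policy off at macro granularity yields a policy of $M_2$ with the same expected cost, so $(\text{optimal cost of }M_2)\le Exp(\pi_2^{'*})$, which completes the proof. The delicate points are (i) verifying that the surgeries never raise expected cost despite the stochastic branching --- a standard ``replace a sub-policy rooted at a belief state by a cheaper one'' argument, using that a gadget excursion is cost-additive and leaks nothing beyond $st_{e_j}$ --- and (ii) checking that under an optimal macro-policy the reachable beliefs never leave the special family encoded by $S_2$, which is exactly where the Bayes-network constraints imposed on $W'$ in the reduction (and used in Lemma~\ref{Ecor}) do the work.
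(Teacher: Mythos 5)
Your proposal is correct and rests on the same core observation as the paper's proof: a finite-cost policy can never traverse the infinite-cost edge $e'_{ij2}$, so an agent entering a gadget vertex $v'_{ij1}$ can only return to $\hat v_i$, which lets the two primitive moves collapse into a single meta-action and the state set $S'$ collapse onto $S_2$. The paper states this in a three-sentence sketch, whereas you spell out both directions of the cost equality and the policy surgeries explicitly; the substance is the same.
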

\begin{proof}
Here we show that although $M_2$ models a subproblem of I' ($M_2$ is defined on subsets of states, actions of I'), it actually models the exact problem of I'. 
For every state $s\in S'-S_2$, $Loc(s)=v_{ij1}$ where $v_{ij1}\in V_{ij1}$. An agent located in $v_{ij1}$ can only move to $v_i$. In addition, in order that agent would be located in $v_{ij1}$ it has to move from $v_i$. Thus $A_2$ replace the two move actions in to one meta-action and thus we can reduce the state set of S' into the subset $S_2$. Therefore, $M_2$ models I'.
\end{proof}

\label{Dependency Graph}
\newpage
\section{CTP-Forward-Arcs} 
\begin{defn}
Let $G=(V,E)$ be a disjoint paths graph of CTP-PATH-DEP and $W=(X,Y)$ be its associated Bayesian network. Let $x_{e_{ij}},x_{e_{ik}}\in X$ be the associated node of edges $e_{ij}$ and $e_{ik}$ (note that the edges are in the same path $i$ in G). Then the arc $\form{x_{e_{i,j}},x_{e_{i,k}}}\in Y$ is \textit{Forward-Arc} if $j<k$, i.e. if $e_{ij}$ is closer to s than $e_{ik}$.
\end{defn}

\begin{defn}\label{defForward}
 \textit{CTP-Forward-Dependency}(CTP-FOR-DEP) is a special case of CTP-PATH-DEP such that all the arcs in W are Forward-Arcs.
\end{defn}

\begin{theorem}\label{theoremForward}
CTP-FOR-DEP is solvable in polynomial time.
\end{theorem}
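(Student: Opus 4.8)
The plan is to show that the optimal policy for a CTP-FOR-DEP instance retains the committing, fixed-priority structure known for the independent disjoint-path case, and that the priority parameters can be computed in polynomial time despite the dependencies. First I would argue that the optimal policy is committing: because the Bayes network $W$ has one connected component per path and all of its arcs are Forward-Arcs, the information an agent holds while walking out along path $i$ is always a function of a prefix $e_{i,1},\dots,e_{i,j}$ of that path's edges, and moreover every ancestor (in $W$) of an as-yet-unrevealed edge $e_{i,l}$ has already been observed, and observed to be open — otherwise the agent would already have turned back. Re-examining the exchange argument of Bnaya et al.~\cite{BNAYA09}, the two facts it relies on still hold here: (i) the distribution over the remaining edges of a partially traversed path is exactly $W$ conditioned on ``the traversed prefix is open,'' a well-defined static distribution depending only on that prefix; and (ii) distinct paths are probabilistically independent, being separate components of $W$. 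Hence the conclusions carry over: once the agent steps onto a path it continues until it either reaches $t$ or meets a blocked edge, and after a failure it is back at $s$ with the only relevant state being the set of failed paths (the prefix of the failed path it learned is useless, since a disjoint path offers no other route to $t$).

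Granting this structure, the optimal policy is a permutation of the paths, and I would adapt the same interchange argument to show it is optimal to try them in increasing order of a parameter $D_i$ defined exactly as in the independent case but with the blocking quantities replaced by their conditional counterparts:
\[
D_i=\frac{E[BC_i]}{\Pr(\text{path }i\text{ fully open})}+W_{i,k_i},
\]
where $\Pr(\text{path }i\text{ fully open})=\Pr(e_{i,1}=\mathrm{open},\dots,e_{i,k_i}=\mathrm{open})$ and $E[BC_i]=2\sum_{j} W_{i,j-1}\cdot\Pr(e_{i,j}\text{ is the first blocked edge of path }i)$ is the expected cost of walking out along path $i$ to the first blocked edge and back.

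The heart of the polynomiality claim is that each of these probabilities is a product of single entries of the conditional probability tables of $W$. Since every arc of $W$ on path $i$ is a Forward-Arc, the edge order $e_{i,1},\dots,e_{i,k_i}$ is a topological order of the $i$-th component of $W$, so $\mathrm{Pa}(e_{i,l})\subseteq\{e_{i,1},\dots,e_{i,l-1}\}$. Summing out the variables $e_{i,k_i},e_{i,k_i-1},\dots,e_{i,j+1}$ in that (reverse-topological) order each contributes a factor $1$, leaving
\[
\Pr(e_{i,1}=\mathrm{open},\dots,e_{i,j}=\mathrm{open})=\prod_{l=1}^{j}\Pr\bigl(e_{i,l}=\mathrm{open}\,\big|\,\mathrm{Pa}(e_{i,l})=\mathrm{open}\bigr),
\]
and likewise $\Pr(e_{i,j}\text{ is the first blocked edge})=\Pr(e_{i,j}=\mathrm{blocked}\mid\mathrm{Pa}(e_{i,j})=\mathrm{open})\prod_{l=1}^{j-1}\Pr(e_{i,l}=\mathrm{open}\mid\mathrm{Pa}(e_{i,l})=\mathrm{open})$, where the conditioning event assigns ``open'' to every parent precisely because of the Forward-Arc property. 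Each factor is a lookup in a CPT of $W$, so every $D_i$ — and the expected cost of the resulting sorted policy — is computed in time polynomial in the sizes of $G$ and $W$, which proves the theorem. This is exactly where CTP-FOR-DEP parts ways with general CTP-PATH-DEP: with backward arcs the conditioning event no longer pins down all ancestors of the variable in question, and computing these prefix probabilities becomes full Bayes-net inference.

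I expect the main obstacle to be the structural step — rigorously lifting the committing and priority-ordering proofs to the dependent setting. One must verify that no step of the original argument secretly uses full independence among all edges; the claim is that only per-path independence together with the ``information is a function of a traversed prefix'' property are needed, but checking this for the interchange argument and the ``never turn back'' argument (including the handling of the last, forced path) requires care. The computational step, by contrast, is a short consequence of Definition~\ref{defForward} once the structural result is in hand; an alternative would be to package the priority-ordered policy as a concise, polynomial-size MDP whose transition probabilities are the CPT products above and solve it directly.
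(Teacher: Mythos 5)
Your proposal is correct in its essential content but takes a genuinely different route from the paper. Both arguments hinge on the same key observation: because $G$ is a disjoint-path graph and every arc of $W$ is a Forward-Arc, whenever the agent is about to traverse an unknown edge $e_{i,k}$ it has necessarily already traversed, and hence observed to be open, all of $e_{i,1},\dots,e_{i,k-1}$, so the only probability that ever matters for $e_{i,k}$ is $q_{i,k}=P\bigl(x_{e_{i,k}}=1\mid x_{e_{i,1}}=0,\dots,x_{e_{i,k-1}}=0\bigr)$. The paper exploits this by building an explicit independent instance $I'$ of CTP-PATH-IND with blocking probabilities $q_{i,k}$, proving that the belief MDPs restricted to \emph{reachable} belief states are isomorphic (its Lemma~\ref{lemma:TrEq} and Corollary~\ref{eqPolicies}), and then invoking the known polynomial algorithm for the independent disjoint-path case as a black box; the committing property and the $D_i$-ordering are thus inherited for free and never have to be re-proved. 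You instead propose to re-derive the structure of the optimal policy (committing, sorted by a generalized $D_i$) directly in the dependent setting by adapting the interchange argument of Bnaya et al. That can be made to work, and your computational step is actually slightly sharper than the paper's: the telescoping $\Pr(e_{i,l}=\mathrm{open}\mid e_{i,1},\dots,e_{i,l-1}\ \mathrm{open})=\Pr(e_{i,l}=\mathrm{open}\mid \mathrm{Pa}(e_{i,l})=\mathrm{open})$ reduces each quantity to a single CPT lookup, whereas the paper computes the conditionals iteratively in $O(|E|^2)$. The cost of your route is exactly the step you flag as the obstacle: conditioning on an open prefix does \emph{not} render the remaining edges of a path mutually independent, so the lifting of the exchange argument must be argued through the weaker fact that a committing policy's cost depends only on the first-blocked-edge distribution $q_{i,j}\prod_{l<j}(1-q_{i,l})$ and the full-open probability, which coincide with those of the independent surrogate instance. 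At that point you are implicitly reconstructing the paper's reduction, which suggests the isomorphism-of-reachable-belief-MDPs formulation is the cleaner way to discharge the structural step; with that substitution your plan closes completely.
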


\textbf{Proof outline.}
CTP on disjoint paths graph with independent distribution over the edges(CTP-PATH-IND) is shown to be solvable in polynomial time [Bnaya, Felner and Shimony, 2009]. We show that we can transform  CTP-FOR-DEP into an instance of CTP-PATH-IND with new distribution over the edges such that the optimal policy of the new CTP-PATH-IND can be applied to CTP-FOR-DEP.

\begin{proof}
Let $I=(G,W,w,s,t)$ be an instance of CTP-FOR-DEP. We construct a new instance $I'=(G,W',w,s,t)$ of CTP-PATH-IND by constructing a new Bayesian network W'(X',Y') of I' such that 
\begin{itemize}
\item $Y'=\left\{\right\}$. In other words W' is ``arc free'' where each node is an independent component in the BN. 
\item $\forall x'_{e_{ik}}\in X'$ $P(x'_{e_{ik}}=1)=P(x_{e_{ik}}=1|x_{e_{i1}}=0,x_{e_{i2}}=0,...,x_{e_{i(k-1)}}=0)$.
\end{itemize}
Let $M=(B,A,Tr,R)$ be a belief state MDP of I, where B is the set of belief states , A is the set of actions, Tr is a set of  transition probabilities, R is the reward function. 
We construct a new belief state MDP $M'=(B',A,Tr',R')$ of I' where  
B' is the set of belief states, A is a set of actions which is common to the set of action in I (since it refers to the same graph G), Tr' is a set of  transition probabilities, and R' is the reward function .
\begin{defn}
Let $f:B\rightarrow B'$ be a function defined as follows:
Let $b\in B$ and $b'\in B'$ such that $f(b)=b'$ then $\form{b}=\form{b'}$.
\end{defn}
Notice that $f(b)$ is well defined since there is a one to one mapping from $\form{b}$ to $b$ and from $\form{b'}$ to $b'$.

\begin{lemma}\label{lemma:TrEq}
Let $b,\hat{b}$ be reachable belief states in B and let $a\in A$ be an action. Then $Tr(\hat{b}|a,b)=Tr(f(\hat{b})|a,f(b))$ 
\end{lemma}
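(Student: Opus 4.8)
\textbf{Proof proposal for Lemma~\ref{lemma:TrEq}.}

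The plan is to prove the transition-probability equality by case analysis on the type of action $a$, exactly mirroring the structure of the belief-MDP transition function defined in Section~\ref{sec:beliefMDP}. First I would fix reachable belief states $b,\hat b \in B$ and an action $a\in A$, set $b' = f(b)$ and $\hat b' = f(\hat b)$, and recall that by definition of $f$ we have $\form{b}=\form{b'}$ and $\form{\hat b}=\form{\hat b'}$; hence $Loc(b)=Loc(b')$, $Unknown(b)=Unknown(b')$, and the edge-status labels agree componentwise, and similarly for $\hat b,\hat b'$. The structural (nonzero-probability) conditions in the definition of $Tr$ — agreement of statuses off the revealed set $\hat E$, the location constraints, and the openness of the traversed edge — depend only on $\form{b},\form{\hat b}$, so $Tr(\hat b|a,b)>0$ if and only if $Tr(\hat b'|a,b')>0$, and the revealed edge set $\hat E$ is the same in both instances. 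It then remains to check that the numerical values coincide when they are positive.

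For the move action $a=move(v_i,v_j)$, the value is $Tr(\hat b|a,b)=\prod_{e\in\hat E,\,stb(e,\hat b)=Blocked}p(e)\prod_{e\in\hat E,\,stb(e,\hat b)=Open}(1-p(e))$, and the analogous product for $I'$ uses the probabilities $P'$ of the new arc-free network $W'$. Here is the crux: the edges in $\hat E$ are precisely the unknown edges incident to $v_j$ that get revealed, and in a disjoint-path graph arriving at $v_j$ along the path means the preceding edges of that path have already been observed open; moreover the vertex $v_j$ has at most one further ``forward'' edge on the path whose status is being revealed. So the conditional probability that $W$ assigns to that edge's status, given the statuses recorded in $b$ (which by reachability include all earlier edges of the path being open), is exactly $P(x_{e_{ik}}=1\mid x_{e_{i1}}=0,\dots,x_{e_{i(k-1)}}=0)$ — and this is by construction the independent marginal $P'(x'_{e_{ik}}=1)$ used in $M'$. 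Because $W$ has only Forward-Arcs, no edge outside this already-open prefix can be a parent of $e_{ik}$, so no additional conditioning is needed and the two products agree factor by factor. For the sense action $a=Sense(e')$ the situation is even simpler: reachability of $b$ together with the Forward-Arc structure means that either the parents of $x_{e'}$ have all been observed open (so again the relevant conditional equals the $P'$-marginal) or the edge is not yet reachable-relevant; in either case $Tr(\hat b|a,b)=p(e')$ or $1-p(e')$ computed under $W$ matches the corresponding value under $W'$. (If the intended statement restricts attention to the committing/path-following policies of Bnaya--Felner--Shimony, one only ever senses or moves along edges whose entire predecessor-prefix on the path is known open, which is exactly the regime in which the conditioning collapses.)

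The main obstacle I anticipate is making the phrase ``given that the agent is in a reachable belief state $b$'' carry the full weight it needs: one must argue that in a disjoint-path, forward-dependency instance, every reachable $b$ has the property that for each path, the set of known-open edges forms an initial prefix of the path, and that whenever an edge $e_{ik}$'s status is about to be revealed by a move (or is the target of a sense that can matter), all of $e_{i1},\dots,e_{i(k-1)}$ are already recorded as open in $b$. This is an induction on the length of the action sequence producing $b$, using the disjoint-path topology (you can only reach $v_{i,k-1}$ from $s$ along path $i$, having traversed $e_{i1},\dots,e_{i(k-1)}$) and the absence of backward arcs (so the conditioning set $\{x_{e_{i1}}=0,\dots,x_{e_{i(k-1)}}=0\}$ already determines all parents of $x_{e_{ik}}$). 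Once that invariant is in hand, the case analysis above is routine bookkeeping with products of $p(e)$ and $1-p(e)$, and the lemma follows; it will presumably be used next to show $R=R'$ and then that $f$ maps the optimal policy of $M'$ back to an optimal policy of $M$.
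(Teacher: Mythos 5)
Your proposal is correct and follows essentially the same route as the paper's proof: a case split on whether the transition probability is zero or positive, with the positive case resolved by observing that reachability in the disjoint-path graph forces the prefix $e_{i1},\dots,e_{i(k-1)}$ to be known open, so that the Forward-Arc structure collapses the conditional probability of the next edge to exactly the marginal $P(x'_{e_{ik}}=1)$ defined for $W'$. You are in fact slightly more explicit than the paper about the reachability invariant requiring an induction on the action sequence, and you additionally treat sensing actions, which do not arise in CTP-FOR-DEP; neither point changes the substance of the argument.
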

\begin{proof}
Let $a=Move(e)$ where $e=\form{v_{i(k-1)},v_{i(k)}}$ and let $e_f=for(e)$
\begin{enumerate}
\item[1] In case that $Tr(\hat{b}|a,b)=0$ (i.e. action $a$ not performable in $b$), then $Tr'(f(\hat{b})|a,f(b))=0$.
If $Tr(\hat{b}|a,b)=0$ then one of the following cases must satisfied:
\begin{itemize}
	\item Edge $e$ is not adjacent to the location of the agent in $b$, i.e. $e\notin Inc(LocB(b))$. If $e\notin Inc(LocB(b))$ then $e\notin Inc(LocB(f(b))$ since $LocB(b)=LocB(f(b))$. Thus $Tr'(f(\hat{b'})|a,f(b))=0$.
	\item Edge $e$ is not adjacent to location of the agent in $\hat{b}$, i.e. $e\notin Inc(LocB(\hat{b}))$. If $e\notin Inc(LocB(\hat{b}))$ then $e\notin Inc(LocB(f(\hat{b}))$, since $LocB(\hat{b})=LocB(f(\hat{b}))$. Thus $Tr'(f(\hat{b})|a,f(b))=0$.
	\item Edge $e$ is blocked in belief state $b$, i.e. $stb(b,e)=B_e$. If $stb(b,e)=B_e$ then $stb(f(\hat{b}),e)=B_e$ since $stb(b,e)=stb(f(\hat{b}),e)$. Thus $Tr'(f(\hat{b})|a,f(b))=0$.
	\item There exist an edge $e'\neq For(e)$ such that $stb(b,e')\neq stb(\hat{b},e')$. Since $stb(b,e)=stb(f(b),e)$ and  $stb(f(\hat{b}),e)=stb(\hat{b},e)$
	, there exist an edge $e'\neq For(e)$ Since is blocked in belief state $b$, i.e. $stb(b,e)=B_e$. If $stb(b,e)=B_e$ then $stb(\hat{b},e)=B_e$, since $stb(b,e)=stb(f(\hat{b}),e)$. Thus $Tr'(f(\hat{b})|a,f(b))=0$.
	\end{itemize}
	\item[2] In case that $Tr(\hat{b}|a,b)>0$ (i.e action $a$ is performable in $b$) then edge $e$ has to be open and one of the following cases must satisfied:
	\begin{itemize}
		\item Edge $e_f$ is Open in b (i.e $stb(b,e_f)=O_{e_f}$). If $stb(b,e_f)=O_{e_f}$ then the status of all edges in $b$ must be the same as in $\hat{b}$, i.e. ($\forall e\in E stb(e,b)=stb(e,\hat{b})$) since the agent does not sense any unknown edge when performing $a$ and hence $Tr(\hat{b}|a,b)=1$. If $stb(b,e_f)=O_{e_f}$ then $stb(f(b),e_f)=O_{e_f}$ and the status of all edges in $b$ must be the same as in $\hat{b}$ from the same reasons as before. Hence $Tr'(f(\hat{b})|a,f(b))=1$,  and we have $Tr(\hat{b}|a,b)=Tr'(f(\hat{b})|a,f(b))$.
		\item Edge $e_f$ is Unknown in b (i.e $stb(b,e_f)=U_{e_f}$). 
		Since W is the belief network of CTP-FOR-DEP and b is reachable from $b_0$, the status of all edge $e_{i1},e_{i2},...,e_{ik}$ have to be Open (In order to reach $v_{ik}$ all edges in path i from s to $v_{ik}$ must be traversable).
		Thus, 
		\[
		Tr(\hat{b}|a,b)=P(x_{e_{f}}=1|x_{e_{i1}}=0,x_{e_{i2}}=0,...,x_{e_{ik}}=0)
	 	\]
		In addition $stb(f(b),e_f)=U_{e_f}$ (since $stb(f(b),e_f)=stb(b,e_f)$).
		There is no dependencies in W' (i.e. $Y'=\{\}$). Therefore,
		\[
		Tr'(\hat{b}|a,b))=P(x'_{e_{f}}=1).
		\]
		 However, by definition of X' we have, 
		 \[
		 P(x'_{e_{f}}=1)=P(x_{e_{f}}=1|x_{e_{i1}}=0,x_{e_{i2}}=0,...,x_{e_{ik}}=0)
		 \]
		 Hence, 
		 \[
		  Tr(\hat{b}|a,b)=Tr'(f(\hat{b})|a,f(b))
		\]
		\end{itemize}
		\end{enumerate}
		\end{proof}

\begin{lemma}\label{lemma:REq}
Let $b,\hat{b}$ be reachable belief states in B and let $a\in A$ be an action. Then $R(b,a,\hat{b})=R'(f(b),a,f(\hat{b}))$. 
\end{lemma}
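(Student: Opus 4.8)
The plan is to reduce the claim to the already-established transition equivalence (Lemma \ref{lemma:TrEq}) together with the explicit form of the reward function in a belief MDP for CTP. Recall from Section \ref{sec:beliefMDP}, equation \ref{eq:rewardFunction}, that in the belief MDP $M$ we have $R(b,a,\hat b) = C(a)$ whenever $Tr(b,a,\hat b)>0$ and $R(b,a,\hat b)=0$ otherwise, where $C(a)$ is the action cost; since CTP-FOR-DEP is a basic CTP variant with no remote sensing, the only actions are moves, so $C(a)=w(e)$ for $a=Move(e)$. The same characterization holds verbatim for $R'$ in $M'$.

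First I would observe that the action cost is the same number in $M$ and in $M'$. In the construction of $I'$ in the proof of Theorem \ref{theoremForward} the graph $G$ and the weight function $w$ are left completely unchanged; only the Bayesian network is replaced by the arc-free $W'$. Hence for every $a=Move(e)$ the cost $C(a)=w(e)$ is topology- and distribution-independent, and so $R'(f(b),a,f(\hat b))$ equals this \emph{same} constant $C(a)$ when $Tr'(f(b),a,f(\hat b))>0$ and equals $0$ otherwise.

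Second I would invoke Lemma \ref{lemma:TrEq}, which gives $Tr(\hat b \mid a,b) = Tr'(f(\hat b) \mid a,f(b))$ for all reachable $b,\hat b$ and all $a\in A$; in particular the two transition probabilities are simultaneously positive or simultaneously zero. Combining this with the two case splits above: in the positive case both $R(b,a,\hat b)$ and $R'(f(b),a,f(\hat b))$ equal $C(a)$, and in the zero case both equal $0$. Therefore $R(b,a,\hat b)=R'(f(b),a,f(\hat b))$ in every case, as required.

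I do not expect a genuine obstacle here; the entire content is that the belief-MDP reward for a triple $(b,a,\hat b)$ depends only on the action $a$ (whose cost is unaffected by the reduction) and on whether the transition is possible (which Lemma \ref{lemma:TrEq} has already matched between $M$ and $M'$). The one point deserving an explicit sentence is the invariance of $C(a)$ under the construction. The same reasoning also yields the state--action form $R(b,a)=R'(f(b),a)$ needed for the MDP equivalence, since $R(b,a)=C(a)$ exactly when some successor $\hat b$ with $Tr(b,a,\hat b)>0$ exists, and by Lemma \ref{lemma:TrEq} such a successor exists in $M$ iff its image under $f$ is a valid successor in $M'$.
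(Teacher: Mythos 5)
Your proposal is correct and follows essentially the same route as the paper: both reduce the reward equality to the transition equality of Lemma \ref{lemma:TrEq} via the definition of the reward function in equation \ref{eq:rewardFunction}. Your version is slightly more careful in making explicit that the action cost $C(a)$ is unchanged by the construction (since $G$ and $w$ are preserved), a point the paper leaves implicit.
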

\begin{proof}
From definition \ref{eq:rewardFunction} it follows that: \newline 
$R(b,a,\hat{b})=R'(f(b),a,f(\hat{b}))$ if and only if $Tr(\hat{b}|a,b)=Tr'(f(\hat{b})|a,f(b))$.  \newline
But we proved in lemma \ref{lemma:TrEq} that $Tr(\hat{b}|a,b)=Tr'(f(\hat{b})|a,f(b))$\newline
Thus, $R(b,a,\hat{b})=R'(f(b),a,f(\hat{b}))$.
\end{proof}

\begin{defn}\label{def:reachable}
We define the predicate $REACHABLE(b_n,b_0)$ to be true if and only if $b_n$ is reachable from $b_0$ in belief-MDP M. i.e. there exist $b_1,...,b_{n-1}$ such that $\prod \limits_{0\leq i\leq n-1,a\in A}Tr(b_i,a,b_{i+1})>0$
\end{defn}

\begin{lemma}
Let $b\in B$. Then $REACHABLE(b,b_0)$ is true if and only if $REACHABLE(f(b),f(b_0))$ is true.  
\end{lemma}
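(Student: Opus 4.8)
The plan is to prove both directions of the equivalence by induction on the length of a witnessing sequence of belief states, transferring positive transition probabilities between the belief-MDPs $M$ and $M'$ by means of Lemma~\ref{lemma:TrEq}. Two easy facts set the stage: $f$ is a bijection between $B$ and $B'$ (it is form-preserving and forms determine belief states, so $f^{-1}$ is available), and the initial belief states have the same form — location $s$ and all edges Unknown — so $f(b_0)=b_0'$. The action set $A$ is common to $M$ and $M'$ since both are built on the same graph $G$, so actions can be reused verbatim along transferred paths.

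For the ``only if'' direction I would start from a witness $b_0,b_1,\dots,b_n=b$ together with actions $a_0,\dots,a_{n-1}\in A$ with $Tr(b_i,a_i,b_{i+1})>0$ for every $i$, in the sense of Definition~\ref{def:reachable}. Since each prefix $b_0,\dots,b_i$ is itself a witness, every $b_i$ is reachable, so for each $i$ both $b_i$ and $b_{i+1}$ are reachable and Lemma~\ref{lemma:TrEq} applies, giving $Tr'(f(b_{i+1})\mid a_i,f(b_i))=Tr(b_{i+1}\mid a_i,b_i)>0$. Then $f(b_0)=b_0',f(b_1),\dots,f(b_n)=f(b)$ is a witness in $M'$, so $REACHABLE(f(b),f(b_0))$ holds.

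For the ``if'' direction I would start from a witness $b_0'=f(b_0),b_1',\dots,b_n'=f(b)$ with actions $a_0,\dots,a_{n-1}$ and $Tr'(b_i',a_i,b_{i+1}')>0$, put $b_i:=f^{-1}(b_i')$ (so $b_0$ is the true initial state and $b_n=b$), and prove by induction on $i$ that $b_i$ is reachable in $M$. The base case $i=0$ is immediate. For the step I would apply Lemma~\ref{lemma:TrEq} to the source $b_i$ (reachable by the induction hypothesis) and the destination $b_{i+1}$ to obtain $Tr(b_{i+1}\mid a_i,b_i)=Tr'(f(b_{i+1})\mid a_i,f(b_i))=Tr'(b_{i+1}'\mid a_i,b_i')>0$, so appending $b_{i+1}$ to a witnessing sequence for $b_i$ shows $b_{i+1}$ is reachable. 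Taking $i=n$ yields $REACHABLE(b,b_0)$.

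The point that needs care — and the step I expect to be the main obstacle — is that Lemma~\ref{lemma:TrEq} is stated for a source and a destination that are \emph{both} reachable, whereas in the induction step of the ``if'' direction I can only assume a priori that the source $b_i$ is reachable; reachability of $b_{i+1}$ is exactly what I am trying to establish, so a naive appeal to the lemma is circular. I would resolve this by observing that the proof of Lemma~\ref{lemma:TrEq} invokes the reachability hypothesis only for the source belief state — it is used solely to conclude, in the Unknown subcase, that the edges $e_{i1},\dots,e_{ik}$ preceding the traversed edge are all open in $b$ — so the transition equality $Tr(\hat b\mid a,b)=Tr'(f(\hat b)\mid a,f(b))$ in fact holds for every destination $\hat b$ as soon as $b$ is reachable. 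If one prefers not to reopen that argument, the alternative is to first restate Lemma~\ref{lemma:TrEq} with this weaker hypothesis and then carry out the induction above verbatim.
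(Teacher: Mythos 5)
Your proof is correct and takes essentially the same route as the paper, whose entire argument for this lemma is the one-line remark that it ``follows from Definition~\ref{def:reachable} and Lemma~\ref{lemma:TrEq}''; your write-up is simply the honest expansion of that remark into a two-directional induction on the witnessing sequence. The circularity you flag in the ``if'' direction --- Lemma~\ref{lemma:TrEq} as stated assumes \emph{both} belief states are reachable, while reachability of the destination is exactly what the induction step must produce --- is a real gap that the paper glosses over, and your resolution (the lemma's proof uses reachability only of the source, in the Unknown-edge subcase) is the correct repair.
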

\begin{proof}
Follows from definition \ref{def:reachable} and lemma \ref{lemma:TrEq}.
\end{proof}

\begin{defn}
Define the set $B_{reach}\subset B$ to be the set of all belief states $b\in B$ that satisfy $REACHABLE(b,b_0)$. Namely,
\[
B_{reach}=\setm{b}{b\in B,REACHABLE(b,b_0)}
\]
\end{defn}
Next, we define an analogue set for B',
\begin{defn}
Define the set $B'_{reach}\subset B$ to be the set of all belief states $f(b)\in B'$ , for $b\in B$, that satisfy $REACHABLE(f(b),f(b_0))$. Namely,
\[
B'_{reach}=\setm{f(b)}{f(b)\in B',b\in B, REACHABLE(f(b),f(b_0))}
\]
\end{defn}

Let $M_r=(B_{reach},A,Tr,R)$ be a belief MDP over belief state $B_{reach}$ and let $M'_r=(B'_{reach},A,Tr',R')$ be a belief MDP over belief state $B'_{reach}$ 

\begin{lemma}\label{lemma:isomorphism}
$M_r$ and $M'_r$ are isomorphism.
\end{lemma}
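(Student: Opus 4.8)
The plan is to show that the map $f$, restricted to $B_{reach}$, is the witnessing isomorphism between $M_r$ and $M'_r$. Concretely I would verify the three things an MDP isomorphism requires: that $f|_{B_{reach}}$ is a bijection onto $B'_{reach}$, that it carries the initial state to the initial state, and that it commutes with the transition and reward functions (the action set $A$ being shared by construction).

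First I would establish the bijection. Injectivity is immediate from the form: if $f(b_1)=f(b_2)$ then $\form{b_1}=\form{f(b_1)}=\form{f(b_2)}=\form{b_2}$, and since the form determines the belief state (Corollary \ref{mappingb}), $b_1=b_2$; thus $f$ is injective on all of $B$, in particular on $B_{reach}$. For the image, the reachability-preservation lemma ($REACHABLE(b,b_0)$ iff $REACHABLE(f(b),f(b_0))$) gives
\[
f(B_{reach})=\setm{f(b)}{b\in B,\ REACHABLE(f(b),f(b_0))}=B'_{reach},
\]
so $f|_{B_{reach}}$ is a bijection onto $B'_{reach}$. I would also note that $M_r$ and $M'_r$ are genuine sub-MDPs, i.e. transitions out of a reachable state land in a reachable state, which is immediate from Definition \ref{def:reachable}. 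The initial states are matched because $I$ and $I'$ share the same graph $G$ and start vertex $s$ and no edge is sensed at the start, so $\form{b_0}=\form{s,Unknown,\ldots,Unknown}=\form{b'_0}$ and hence $f(b_0)=b'_0$, with $b_0\in B_{reach}$ and $b'_0\in B'_{reach}$ trivially.

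Finally, structure preservation is exactly the content of the two preceding lemmas: for $b,\hat b\in B_{reach}$ and $a\in A$, Lemma \ref{lemma:TrEq} gives $Tr(\hat b\mid a,b)=Tr'(f(\hat b)\mid a,f(b))$ and Lemma \ref{lemma:REq} gives $R(b,a,\hat b)=R'(f(b),a,f(\hat b))$. Together with the bijection above and the common action set, this says precisely that $f|_{B_{reach}}$ is an isomorphism of $M_r$ onto $M'_r$. There is essentially no new computational content here; the one point needing care is the bookkeeping around reachability — making sure the reachability lemma pins the image of $f|_{B_{reach}}$ down to exactly $B'_{reach}$, and that restricting both sides to their reachable parts is what makes the earlier lemmas (stated only for reachable belief states) applicable.
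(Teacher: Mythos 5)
Your proposal is correct and follows essentially the same route as the paper: the paper's (much terser) proof likewise asserts that $f$ is a bijection between $B_{reach}$ and $B'_{reach}$ via the reachability definition and then invokes Lemma \ref{lemma:TrEq} and Lemma \ref{lemma:REq} for preservation of $Tr$ and $R$. You simply fill in the details the paper leaves implicit (injectivity via the form, the exact image, the initial states), which is a welcome but not substantively different elaboration.
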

\begin{proof}
Follows from definition \ref{def:reachable} that $F$ is a bijection over $B_{reach}$ and $B'_{reach}$. 
In addition, F preserves the function Tr , Tr' lemma \ref{lemma:TrEq} as well as R,R' lemma \ref{lemma:REq} .
\end{proof}

\begin{cor}\label{eqPolicies}
Let $\pi^*$ be the optimal policy of I, and $\pi'^*$ be the optimal policy of I'.
Then for every reachable belief state $b\in B_{reach}$ we have $\pi^*(b)=\pi'^*(f(b))$.
\end{cor}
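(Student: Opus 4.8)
The plan is to obtain the corollary directly from the isomorphism of Lemma \ref{lemma:isomorphism} by transporting the Bellman optimality equation (\ref{2}) across $f$. Since $M_r$ and $M'_r$ have the same action set $A$ and $f:B_{reach}\to B'_{reach}$ is a bijection with $Tr(\hat b\mid a,b)=Tr'(f(\hat b)\mid a,f(b))$ (Lemma \ref{lemma:TrEq}) and $R(b,a,\hat b)=R'(f(b),a,f(\hat b))$ (Lemma \ref{lemma:REq}), every quantity appearing in the Bellman backup at a state $b$ in $M_r$ coincides, term by term, with the corresponding quantity at $f(b)$ in $M'_r$.

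First I would run value iteration in parallel on the two MDPs and prove by induction on the iteration index $n$ that $V_n(b)=V'_n(f(b))$ for all $b\in B_{reach}$. The base case is immediate, since both value functions are initialised identically (e.g. to $0$, with terminal cost at goal belief states, which $f$ respects because $Loc$ is preserved). For the inductive step, $V_{n+1}(b)$ is the minimum over $a\in A$ of $\sum_{\hat b}Tr(\hat b\mid a,b)\,(R(b,a,\hat b)+V_n(\hat b))$; rewriting each summand via Lemmas \ref{lemma:TrEq} and \ref{lemma:REq} and the inductive hypothesis, and using that $f$ is a bijection so that $\hat b\mapsto f(\hat b)$ reindexes the successor states of $f(b)$ in $M'_r$, turns this expression into the Bellman update of $V'_{n+1}(f(b))$. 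Passing to the limit gives $V^*(b)=V'^*(f(b))$ for every reachable $b$.

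Then I would read off the policies. By definition $\pi^*(b)$ is a minimizer over $a\in A$ of $\sum_{\hat b}Tr(\hat b\mid a,b)\,(R(b,a,\hat b)+V^*(\hat b))$, and by the identity just proved this is literally the same function of $a$ as the one minimized by $\pi'^*$ at $f(b)$ in $M'_r$. Hence an action is optimal at $b$ in $M_r$ iff it is optimal at $f(b)$ in $M'_r$, which yields $\pi^*(b)=\pi'^*(f(b))$, i.e. the claim; combined with polynomial solvability of CTP-PATH-IND this closes the proof of Theorem \ref{theoremForward}.

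The hard part will not be the algebra of the backup but the bookkeeping: (i) optimal policies need not be unique, so strictly the statement should be read as ``the sets of optimal actions at $b$ and $f(b)$ coincide'' (equivalently, the optimal policies can be chosen to agree on $B_{reach}$), which one handles either by fixing a common deterministic tie-breaking rule or by phrasing the corollary in terms of action-optimality; and (ii) one must justify that solving $I$ (resp. $I'$) via the full belief-MDP $M$ (resp. $M'$) agrees, on reachable states, with solving the restricted MDP $M_r$ (resp. $M'_r$) — i.e. that unreachable belief states never influence the optimal policy on $B_{reach}$ — so that having $f$ be a bijection only between the reachable parts suffices. Both points are routine, but should be stated explicitly to make the reduction airtight.
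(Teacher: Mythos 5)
Your proposal is correct and takes essentially the same route as the paper: the corollary is deduced from the isomorphism of $M_r$ and $M'_r$ established in Lemma \ref{lemma:isomorphism}. The paper's own proof is just a one-line appeal to that isomorphism (``the problems are equivalent and their optimal solutions are equivalent''), so your value-iteration induction and your remarks on tie-breaking and on restricting to reachable belief states merely supply details that the paper leaves implicit.
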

\begin{proof}
Since $M_r$ and $M'_r$ are isomorphism, the problems are equivalent and their optimal solutions are equivalent.
\end{proof}

Therefore, we can transform any instance of CTP-FOR-DEP into an instance of CTP-PATH-IND, apply the algorithm which solves CTP-PATH-IND in polynomial time, and equivalent optimal solution is guaranteed (corollary \ref{eqPolicies}). 

Now, we show that determining the probability $P(x_{e_{i,k}}=1|x_{e_{i,1}}=0,x_{e_{i,2}}=0,...,x_{e_{i,k-1}}=0)$ for all nodes can be computed in polynomial time.
We use the Bayesian theorem to get:
\begin{eqnarray}\label{bayes1}
& &P(x_{e_{i,k}}=1|x_{e_{i,1}}=0,x_{e_{i,2}}=0,...,x_{e_{i,k-1}}=0)=\nonumber\\
& &P(x_{e_{i,k}}=1,x_{e_{i,1}}=0,x_{e_{i,2}}=0,...,x_{e_{i,k}}=0)\cdot P(x_{e_{i,k}}=1|x_{e_{i,1}}=0,x_{e_{i,2}}=0,...,x_{e_{i,k-1}}=0)\nonumber \\
\end{eqnarray}
We use the chain rule to get:
\begin{eqnarray}\label{bayes2}
& &P(x_{e_{i,k}}=1,x_{e_{i,1}}=0,x_{e_{i,2}}=0,...,x_{e_{i,k}}=0)=\nonumber\\
& &P(x_{e_{i,k}}=1|x_{e_{i,1}}=0,x_{e_{i,2}}=0,...,x_{e_{i,k-1}}=0)\cdot P(x_{e_{i,k}}=1,x_{e_{i,1}}=0,x_{e_{i,2}}=0,...,x_{e_{i,k-1}}=0)\nonumber\\
\end{eqnarray}
The variables in the Bayesian network are topologically ordered by their order in the path and hence each probability $P(x_{e_{ik}})$ can be iteratively computed given that its ancestors values have already been determined(using equations \ref{bayes1},\ref{bayes2}). Therefore, inferring the probability of each edge takes linear time and inferring the probability of all edges takes $O(|E|^2)$. Thus computing the optimal policy takes polynomial time.  
\end{proof}
\label{Forward Arcs}
\newpage
\section{CTP-PATH-DEP} 
\begin{definition}

\end{definition}
CTP-PATH-DEP is a special case of CTP-DEP where the associated Bayesian network has dependencies only between edges on the same path.
\begin{theorem}
CTP-PATH-DEP is NP-hard.
\end{theorem}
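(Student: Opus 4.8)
The plan is to prove NP-hardness by a polynomial reduction from 3-SAT; the same construction could be pushed to give \#P-hardness, but NP-hardness is all that is asserted. Given a 3-CNF formula $\phi$ over variables $x_1,\dots,x_n$ with clauses $C_1,\dots,C_m$, I would build a CTP-PATH-DEP instance $I_\phi=(G_\phi,W_\phi,w_\phi,s,t)$ and a threshold $\theta$ such that the optimal expected travel cost of $I_\phi$ is below $\theta$ iff $\phi$ is satisfiable; since ``is the optimal cost below $\theta$?'' is the natural decision version, this yields the theorem. It is worth keeping the preceding section in mind: CTP-FOR-DEP is polynomial precisely because the edges of a path are revealed in the topological order of the Bayes network, so every conditional probability the agent needs is read off directly; the hardness here must therefore be produced by \emph{backward} arcs, under which an edge is observed before its Bayes-network ancestors and the agent is forced into genuine inference.

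Concretely, $G_\phi$ would consist of a constant number of internally-disjoint $s$--$t$ paths: a ``safe'' path $P_{\mathrm{safe}}$ whose only edge is never blocked and has weight $\theta$ (so $\theta$ is always achievable), and a ``formula'' path $P_\phi$ on which the Bayes network $W_\phi$ --- a component living entirely on the edges of $P_\phi$, hence legal for CTP-PATH-DEP --- encodes, via deterministic CPTs realizing the circuit that evaluates $\phi$ on a uniformly random assignment, the event that the formula is satisfied. The edge weights along $P_\phi$ are chosen tiny, and the arcs are arranged so that as the agent walks $P_\phi$ away from $s$ it first observes ``output'' edges and only later the ``variable'' edges, making every conditional-probability query it must answer in order to decide between pressing on and retreating to $P_{\mathrm{safe}}$ an instance of a Bayes-network positivity query equivalent to satisfiability of $\phi$. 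The construction is clearly polynomial in $|\phi|$: $W_\phi$ is a circuit for $\phi$, and $G_\phi$ has $O(n+m)$ edges.

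I would then carry out, in order: (i) write down $G_\phi$, $W_\phi$, $w_\phi$, $\theta$ and check polynomiality; (ii) exhibit, in the satisfiable case, an explicit policy (``probe $P_\phi$, and fall back to $P_{\mathrm{safe}}$ on the first block'') and compute in closed form that its expected cost is strictly below $\theta$, the gap being a polynomially-representable quantity proportional to the probability that the random assignment satisfies $\phi$; (iii) show that in the unsatisfiable case every policy costs at least $\theta$ --- here one uses that distinct paths are probabilistically independent, so exploring $P_\phi$ can never help and the best the agent can do is commit to $P_{\mathrm{safe}}$ --- and that in the satisfiable case no interleaved exploration beats the probing policy; (iv) conclude that a polynomial algorithm for CTP-PATH-DEP would decide 3-SAT.

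The step I expect to be the real obstacle is (iii), together with the gadget design that feeds it. The disjoint-path topology is stingy: the only adaptive freedom the agent has is to partially walk down one path, back up to $s$, try another, and so on, so I must choose the weights on $P_\phi$ (and the blocking probabilities of its ``variable'' edges, and $\theta$) so that every such interleaving is dominated by one of the two pure strategies, and I must wire the backward arcs so that the belief the agent holds after seeing a random prefix of $P_\phi$ corresponds \emph{exactly} to a satisfiability query --- with no numerical slack that would let a cheap-but-suboptimal route slip under $\theta$ in the unsatisfiable case. Pinning down these numbers and ruling out the interleaved strategies is the substance of the proof; once that is in place, the rest is routine manipulation within the belief-MDP formalism of Section~\ref{sec:beliefMDP}.
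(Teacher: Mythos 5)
Your plan follows the paper's proof essentially verbatim: a 3-SAT reduction onto a two-disjoint-path instance consisting of a single always-open ``safe'' edge and a formula path whose Bayes network deterministically evaluates $\phi$ on a uniformly random assignment, with the ``output'' edge placed adjacent to $s$ so that it is observed before its Bayes-network ancestors, and a threshold on the optimal expected cost separating the satisfiable from the unsatisfiable case. The obstacle you flag in step (iii) is dispatched in the paper by giving every edge of the formula path other than its first edge weight zero, so the only non-dominated strategies are the two committing ones and the whole argument reduces to one closed-form comparison (the paper's inequality happens to run in the opposite direction to yours --- satisfiability makes the formula path risky and pushes the optimum onto the safe edge --- but that difference is immaterial).
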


\textbf{Proof outline}
By reduction from 3-SAT to CTP-PATH-DEP. 
\begin{proof}
Let $L$ be a set boolean variables $l_1,...,l_n$. Let the 3CNF formula $F$ be a conjunction of the clauses $C_1, C_2,..., C_k$ where each clause $C_i$ is a disjunction of three literals ${l'_i}^1, {l'_i}^2, {l'_i}^3$ and for each literal ${l'_i}^j$ it holds that ${l'_i}^j\in L$ or $\neg{{l'_i}^j}\in L$. 
We construct the instance $I=(G,W,w,s,t)$ of CTP-PATH-DEP from F, such that F is satisfiable if and only if the expected cost of the optimal policy is greater than some given constant.
I is defined as follows:
$G=(V,E)$ is a graph consisting two disjoint paths $p_1,p_2$, where

\begin{enumerate}
\item $p_1=\left\langle e_Y, e_{d1},...,e_{d(k-1)},e_{c_1},...,e_{c_k}, e_{l_1},...,e_{l_n}, e_R \right\rangle$ (The edges are ordered from the edge incident to s to the edge incident to t). Edges $e_{c_1},...,e_{c_k}$ correspond to clauses $C_1,...,C_k$ respectively, and edges $e_{l_1},...,e_{l_n}$ correspond to variable $l_1,...,l_n$ respectively. The correspondence will be define later in this proof.
\item $p_2$ consist of a single edge $e_L$.
\end{enumerate}

w is the weight function over the edges, is defined by:
\begin{itemize}
\item $w(e_Y)=1$
\item $w(e_L)=(1+\frac{k}{2^{n+1}})$
\item $w(e)=0$ for all other edges.
\end{itemize}

$W=(X,Y)$ is a Bayesian network. 

\begin{defn}
For every edge $e\in E$ in path $p_1$, we define the variable $x_e\in X$ to be the variable corresponded to edge $e$ such that $x_e=0$ if and only if $e$ is Open.
\end{defn}
%


The set of node $X$ of $W$ is a union of the following sets:
\begin{itemize}
\item $X_Y=x_Y$. $X_Y$ is a set that contains the single variable $x_Y$.
\item $X_R=x_R$. $X_R$ is a set that contains the single variable $x_R$.
\item $X_l=\{x_{l_1},x_{l_2},...,x_{l_n}\}$. $X_l$ is a set that contains all nodes that correspond to variables $l_1,...,l_n$.
\item $X_c=\{x_{c_1},x_{c_2},...,x_{c_k}\}$. $X_c$ is a set that contains all nodes that correspond to variables $c_1,...,c_k$..
\item $X_d=\{x_{d_1},x_{d_2},...,x_{d_{k-1}}\}$. $X_c$ is a set that contains all nodes that correspond to variables $d_1,...,d_{k-1}$..
\end{itemize}  

Namely, $X=X_Y\cup X_R \cup X_l \cup X_c \cup X_d$.


The arcs in $Y$ are defined by the followng sets:  
\begin{itemize}
\item $Y_{Rl_i}=\{\form{x_R,x_{l_i}}\}$. An arc from node $x_R$ to node $x_{l_i}$.
\item $Y_{Rc_i}=\{\form{x_R,x_{c_i}}\}$. An arc from node $x_R$ to node $x_{c_i}$.
\item $Y_{Rd_i}=\{\form{x_R,x_{d_i}}\}$. An arc from node $x_R$ to node $x_{d_i}$.
 
\item $Y_{li}=\{\form{x_{l_i^1},x_{c_i}},\form{x_{l_i^2},x_{c_i}},\form{x_{l_i^3},x_{c_i}}\}$.
 A set of three arcs from each variable node $x_{l_i^j}$ (for $1\leq j \leq 3$) to clause node $x_{c_i}$ such that  $l_i^j$ is the variable corresponding to literal ${l'_i}^j$.  For instance $C_5=\{{l'_5}^1\vee \neg {l'_5}^2 \vee \neg {l'_5}^3\}$ then $l_5^1={l'_5}^1,{l'_5}^2=\neg {l'_5}^2$, and $l_5^3=\neg {l'_5}^3$.
\item $\forall_{1\leq i\leq k}$ $(x_{ci}, x_{di})\in Y$ - Arc from each node $x_{ci}$ to a corresponding node $x_{di}$
\item $\forall_{1\leq i\leq k}$ $(x_{di}, x_{Y})\in Y$ - Arc from each node $x_{di}$ to node $x_Y$
\end{itemize}

The condition probabilities of W are as follows:

\begin{enumerate}
\item[1] $P(x_{e_R=0})=0.5$ ($x_R$ is an independent variable).
\item[2] For every variable node $x_i$ it holds that $P(x_i=0|x_R=0)=1$, i.e. if $x_R=0$ then path $p_1$ is always open with probability 1. 
\item[3] Given $x_R=1$, W is specified as follows:
\begin{enumerate}
\item $x_{ci}=0)\Leftrightarrow \bigwedge_{j=1}^{3}{x_{l_{ij}}=0}$
\item $x_{d1}=0\Leftrightarrow x_{c_1}=0$ 
\item $\forall i>1\; x_{d(i+1)}=0\Leftrightarrow x_{c_i}=0 , x_{di}=0$ 
\item $x_Y=1 \Leftrightarrow \bigwedge_{i}{x_{di}=0}$ 
\end{enumerate}
\end{enumerate}

The reduction maps each variable of F to a variable of W such that,

\begin{itemize}
\item Each boolean SAT variable $l_i$ is mapped to a binary variable in the Bayes network $x_{li}$, such that $x_{li}=0$ if and only if $l_i=T$ 
\item Each clause $C_i$ is mapped to binary variable $x_{ci}$  such that $x_{ci}=0$ if and only if $C_i=T$.
\end{itemize}

\begin{figure}[h]
        \centering
        \centerline{\includegraphics[width=6.5in,height=4.0in]{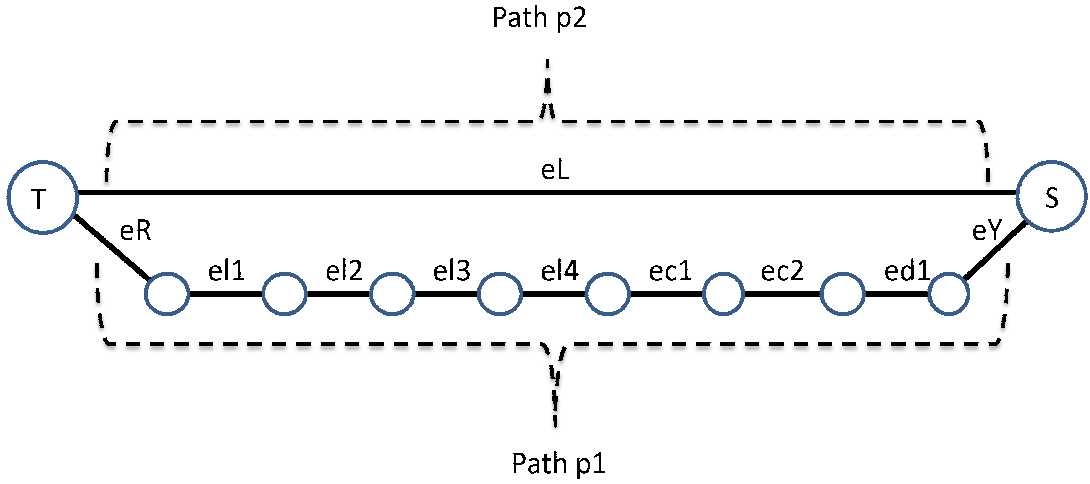}}
        \caption{Graph G. An instance I is constructed based on reduction from formula $F=C1\wedge C2$ where $C1=l1\vee l2\vee l3, C2=l2\vee l3\vee l4$. In this case path $p_1=\left\langle e_Y, e_{d1},e_{c_1},e_{c_2}, e_{l1},e_{l2},e_{l3},e_{l4},e_R \right\rangle$, and path $p_2=\left\langle e_L \right\rangle$}
        \label{fig:graphG}
\end{figure}

\begin{figure}[h]
        \centering
        \centerline{\includegraphics[width=6.5in,height=4.0in]{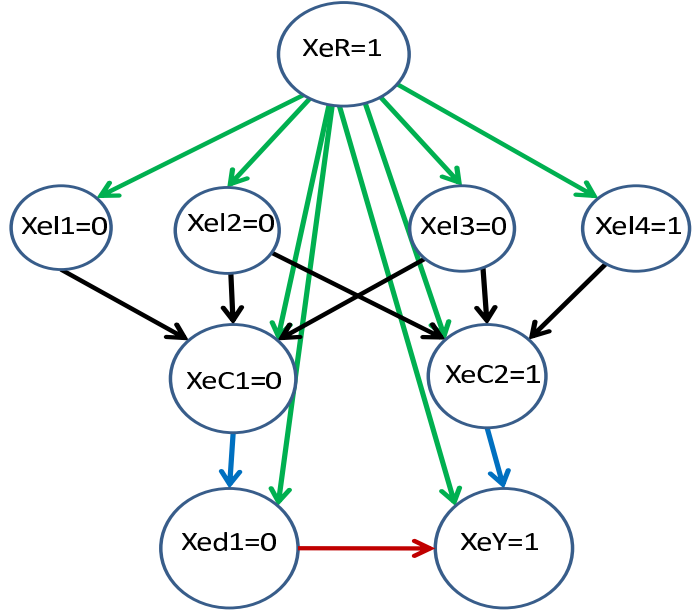}}
        \caption{Bayes network W of instance I based on reduction from formula F (which presented in figure \ref{fig:graphG}) in which its instantiations are: $l1=T,\;l2=T,\;l3=T,\;l4=F\Rightarrow C1=T,\;C2=F$. In case that $x_R=1$(presented in the figure), variables in W would be as follows: $x_{l1}=0,x_{l2}=0,x_{l3}=0,x_{l4}=1,x_{c_1}=0,x_{c_2}=1,x_{d1}=0,x_{Y}=1$. (otherwise if $x_R=0$ all variables associated with edges in $p_1$ would have been 0) }
        \label{fig:BayesNet.eps}
\end{figure}

\begin{lemma}
Given $x_R=1$, then F is satisfiable $\Leftrightarrow x_Y=0$.
\end{lemma}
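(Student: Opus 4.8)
The plan is to unpack the conditional probability structure of $W$ when $x_R = 1$ and trace the chain of biconditionals $x_{l_i} \mapsto x_{c_i} \mapsto x_{d_i} \mapsto x_Y$ to show that $x_Y = 0$ is equivalent to \emph{every} clause node $x_{c_i}$ being $0$, which by the reduction map is equivalent to $F$ being satisfied by the assignment encoded by the $x_{l_i}$. First I would record the ``upward'' direction: assume $F$ is satisfiable, fix a satisfying assignment, and set $x_{l_i} = 0$ iff $l_i = T$. By clause rule (a), $x_{c_i} = 0 \Leftrightarrow \bigwedge_{j=1}^3 x_{l_{ij}} = 0$; since the assignment satisfies $C_i$, at least one literal of $C_i$ is true, but the biconditional as stated actually forces $x_{c_i}=0$ to require \emph{all three} literal-variables to be $0$ --- here I would need to be careful that the intended reading matches ``clause is satisfied'' (literal variables set so the literal evaluates true), i.e. $x_{l_{ij}} = 0$ encodes ``literal ${l'_i}^j$ is true,'' so $\bigwedge_j x_{l_{ij}} = 0$ should be read through the literal-to-variable correspondence defined just above the lemma. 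Granting that reading, $x_{c_i} = 0$ for all $i$. Then by rules (b) and (c), an induction on $i$ gives $x_{d_i} = 0$ for all $i$: base case $x_{d_1} = 0 \Leftrightarrow x_{c_1} = 0$, and the step $x_{d_{i+1}} = 0 \Leftrightarrow (x_{c_i} = 0 \wedge x_{d_i} = 0)$ closes under the inductive hypothesis. Finally rule (d), $x_Y = 1 \Leftrightarrow \bigwedge_i x_{d_i} = 0$, is problematic as literally stated (it would give $x_Y = 1$, not $0$); I expect the intended rule is $x_Y = 0 \Leftrightarrow \bigwedge_i x_{d_i} = 0$, so that all clauses satisfied $\Rightarrow$ all $x_{d_i} = 0 \Rightarrow x_Y = 0$.

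For the converse, I would assume $x_Y = 0$ (still under $x_R = 1$) and run the same chain backwards. From rule (d), $x_Y = 0$ forces $\bigwedge_i x_{d_i} = 0$, so every $x_{d_i} = 0$. Then a downward induction using (b) and (c) forces every $x_{c_i} = 0$: $x_{d_1} = 0$ gives $x_{c_1} = 0$, and $x_{d_{i+1}} = 0$ together with $x_{d_i} = 0$ gives $x_{c_i} = 0$ (one must check the indices line up --- the rules cover $x_{c_1}$ through $x_{c_{k-1}}$ via the $x_{d_i}$'s, and $x_{c_k}$ needs to be pinned down by whichever $d$-node or by $x_Y$ directly; this indexing bookkeeping is a place to be careful). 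From $x_{c_i} = 0$ for all $i$ and rule (a), each clause $C_i$ has all (hence at least one) of its literals evaluating to true under the assignment $l_i := T \Leftrightarrow x_{l_i} = 0$, so $F$ is satisfied.

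The main obstacle is not mathematical depth but reconciling the exact biconditionals as printed --- particularly rule (a)'s ``$\bigwedge$ of all three'' versus the disjunctive nature of a clause, and rule (d)'s apparent sign --- with the intended semantics of the reduction. I would state the lemma's proof in terms of the intended conditional probability table (open-iff-all-subclause-conditions-hold for the chain, and $x_Y = 0$ iff all $x_{d_i} = 0$), note that under $x_R = 1$ every variable is deterministic given the $x_{l_i}$'s, and then the biconditional chase is routine. If rule (a) is genuinely a conjunction, then the construction must instead be read with $x_{c_i}$ playing the role of ``clause \emph{violated}'' and the literal encoding flipped accordingly; in the write-up I would pick one consistent convention and carry it through both directions, since the whole NP-hardness argument downstream (relating the cost threshold involving $w(e_L) = 1 + k/2^{n+1}$ to satisfiability) hinges on ``$F$ satisfiable $\Leftrightarrow$ $\exists$ assignment making $x_Y = 0$ when $x_R = 1$,'' which is exactly what this lemma must deliver.
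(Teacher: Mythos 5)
Your proposal is correct and follows essentially the same route as the paper's own proof, which simply chains the biconditionals $F$ satisfiable $\Leftrightarrow$ all $x_{c_i}=0$ $\Leftrightarrow$ all $x_{d_i}=0$ $\Leftrightarrow x_Y=0$ under $x_R=1$. The sign and connective issues you flag are real typos in the construction (the paper's own proof silently uses $x_Y=0\Leftrightarrow\bigwedge_i x_{d_i}=0$ despite rule (d) as printed), and your corrected reading is the intended one.
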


\begin{proof}
If $x_R=1$ then F is satisfiable $\Leftrightarrow C_1=T,C_2=T,...,C_n=T\Leftrightarrow x_{c_1}=0,x_{c_2}=0,...,x_{c_n}=0$
in addition, $x_{c_1}=0\Leftrightarrow x_{d1}=0$ and $\forall i>1\; x_{d(i+1)}=0\Leftrightarrow x_{c_i}=0 , x_{di}=0$. 
Thus, $x_{c_1}=0,x_{c_2}=0,...,x_{c_n}=0 \Leftrightarrow x_{d_1}=0,x_{d_2}=0,...,x_{d_n}=0 \Leftrightarrow x_Y=0$
\end{proof}

For simplicity $w(e_Y)$ is denoted by CY and $w(e_L)$ is denoted by CL. Note that $CL>CY$.

The construction of the reduction is computable in polynomial time since the graph G contains $O(n)$ vertices, $O(n)$ edges and the Bayes network W contains $O(n)$ nodes and $O(n)$ arcs. In addition, function $g$, which maps each variable in $F$ to variable in $W$, is computable in polynomial time as well.

The optimal policy is committing in a sense that after the agent chooses a path, it keeps following this path until reaching t, unless agent hits a blocked edge. This is caused due to the fact that if agent chooses to traverse $p_1$ first, after traversing the first edge $e_Y$, it is optimal to keep following $p_1$ toward $t$ since the rest of the edges in $p_1$ are 0, and thus if $p_1$ is traversable, no extra travel cost is paid. On the other hand, if $p_1$ is not traversable then the agent pays extra $CY$ regardless of how many edges did it traversed in $p_1$. 
Therefore the decision problem of the optimal policy here is simply whether to choose $p_1$ as a first path to try or $p_2$.

\begin{notation}
Let $\pi_{12}$ denote a committing policy that chooses $p_1$ as a first path to try, and $\pi_{21}$ denote a committing policy that chooses $p_2$ as a first path to try.
\end{notation}

\begin{lemma}
Let C be a constant, such that $1+\frac{k}{2^{n+2}}< C <1+\frac{k}{2^{n+1}}$, where k is the number of models in F and n is the number of boolean-SAT-variables in F. Let $\pi^*$ be the optimal policy of I. F is satisfiable if and only if $Exp(\pi^*)> C$
\end{lemma}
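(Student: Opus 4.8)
The plan is to evaluate $Exp(\pi^{*})$ in closed form and observe that it sits below the stated $C$ precisely when $F$ is unsatisfiable. (Throughout, write $k$ for the number of clauses of $F$, so that $w(e_L)=1+k/2^{n+1}$ as constructed, and $M$ for the number of satisfying assignments of $F$, so that $F$ is satisfiable iff $M\ge 1$.) By the committing property established just above, the only genuine decision the agent makes is which path to try first, so $\pi^{*}\in\{\pi_{12},\pi_{21}\}$ and $Exp(\pi^{*})=\min\{Exp(\pi_{12}),Exp(\pi_{21})\}$. Since $p_2$ is the single, never-blocked edge $e_L$, $Exp(\pi_{21})=CL=1+k/2^{n+1}$, so the work is in computing $Exp(\pi_{12})$.

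To compute $Exp(\pi_{12})$ I would condition on the independent variable $x_R$. With probability $\tfrac12$, $x_R=0$; then every edge of $p_1$ is open, the agent walks to $t$ and pays only $w(e_Y)=CY$ because every other edge of $p_1$ has weight $0$. With probability $\tfrac12$, $x_R=1$; then $e_R$ is blocked and $p_1$ is untraversable, so the cost is determined by what the agent observes at $s$, i.e.\ by the status of $e_Y$. Here the previous lemma is the key point, in a slightly strengthened form: given $x_R=1$, the variables $(x_{l_1},\dots,x_{l_n})$ are distributed as a uniformly random truth assignment (through $l_i=T\iff x_{l_i}=0$), the $x_{c_i},x_{d_i},x_Y$ are its deterministic images, and $x_Y=0$ holds exactly when this assignment satisfies $F$; hence $P(x_Y=0\mid x_R=1)=M/2^{n}$. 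When $e_Y$ is open the agent enters $p_1$, finds the obstruction at $e_R$ near $t$, returns to $s$ paying $2CY$ in total, and then takes $p_2$ at cost $CL$; when $e_Y$ is blocked it goes straight to $p_2$ at cost $CL$. Summing,
\[
Exp(\pi_{12})=\tfrac12\,CY+\tfrac12\Bigl(\tfrac{M}{2^{n}}(2CY+CL)+\bigl(1-\tfrac{M}{2^{n}}\bigr)CL\Bigr)=\tfrac12\,CY+\tfrac{M}{2^{n}}\,CY+\tfrac12\,CL,
\]
which with $CY=1$ and $CL=1+k/2^{n+1}$ equals $1+k/2^{n+2}+M/2^{n}$, an affine function strictly increasing in $M$.

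It remains to combine: $Exp(\pi^{*})=\min\{\,1+k/2^{n+2}+M/2^{n},\ 1+k/2^{n+1}\,\}$. If $F$ is unsatisfiable, $M=0$ and $Exp(\pi^{*})=1+k/2^{n+2}$; if $F$ is satisfiable, $M\ge 1$ forces $Exp(\pi^{*})>1+k/2^{n+2}$. So $1+k/2^{n+2}$ is exactly the threshold, and choosing $C$ just above it (within the stated range) yields $F$ satisfiable $\iff Exp(\pi^{*})>C$; as $I=(G,W,w,s,t)$ is built from $F$ in polynomial time, this proves CTP-PATH-DEP NP-hard. I expect the two real difficulties to be: (i) making the committing argument rigorous — confirming that it never pays to abandon $p_1$ after $e_Y$ is seen open but before reaching $e_R$, nor to make any exploratory detour, so that $\pi^{*}$ is genuinely one of $\pi_{12},\pi_{21}$; and (ii) the numerical calibration of $w(e_Y)$, $w(e_L)$ and $C$ so that $Exp(\pi^{*})$ falls strictly on the correct side of $C$ uniformly in $M$ (this is the only place the precise constants and powers of two are load-bearing, and the interval for $C$ must be taken near its lower endpoint).
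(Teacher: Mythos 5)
Your proposal follows the same route as the paper's own proof: invoke the committing property to reduce $\pi^*$ to a choice between $\pi_{12}$ and $\pi_{21}$, note $Exp(\pi_{21})=CL$, and evaluate $Exp(\pi_{12})$ by conditioning on $x_R$ and then on $x_Y$; your closed form $Exp(\pi_{12})=1+k/2^{n+2}+M/2^{n}$ coincides with the paper's $2\,PY\,CY+0.5(CL-CY)$ once the symbols are reconciled. The substantive difference is that you keep the number of clauses $k$ and the number of models $M$ separate, whereas the paper uses the single letter $k$ for both: $k$ is the number of clauses in the construction of $w(e_L)=1+k/2^{n+1}$ (which is what keeps the reduction polynomial-time), yet the proof then sets $P(x_Y=0\mid x_R=1)=k/2^{n}$, which is only correct if $k$ counts satisfying assignments. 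Your bookkeeping exposes a real consequence of that conflation: the paper concludes that in the satisfiable case $CL<Exp(\pi_{12})$, so that $\pi_{21}$ is optimal and $Exp(\pi^*)=CL>C$ for \emph{every} $C$ in the stated interval; with the two quantities separated, $CL<Exp(\pi_{12})$ holds only when $k<4M$, and when $M$ is small relative to $k$ the optimum is $\pi_{12}$ with cost $1+k/2^{n+2}+M/2^{n}$, which can drop below a $C$ chosen near the upper endpoint $1+k/2^{n+1}$. Your resolution --- identify $1+k/2^{n+2}$ as the exact threshold and take $C$ in $\bigl(1+k/2^{n+2},\,1+(k+\min(4,k))/2^{n+2}\bigr)$, i.e.\ near the lower endpoint of the stated range --- is correct and is what the lemma actually needs. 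The one point you should make explicit (the paper leaves it implicit in its CPT list as well) is that the literal variables $x_{l_1},\dots,x_{l_n}$ are i.i.d.\ uniform given $x_R=1$, since that is precisely what justifies $P(x_Y=0\mid x_R=1)=M/2^{n}$.
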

\begin{proof}
 $=>$ Suppose that F is satisfiable. The probability that $e_Y$ is open is 
\begin{equation}
P(x_Y=0)=P(x_Y=0|x_R=0)P(x_R=0)+P(x_Y=0|x_R=1) P(x_R=1)\nonumber\\
\end{equation}
	
by construction of W: 
\begin{eqnarray}
& P(x_Y=0|x_R=0)=1 \nonumber \\
& P(x_R=0)=P(x_R=1)=0.5 \nonumber \\
\end{eqnarray}
The probability $P(F=True)=\frac{k}{2^n}$ since there are k sets of literals of F such that its instantiation gives F=true, and the domain size is the number of all possible instantiations to $l_1,...,l_n$, which equals $2^n$. Thus,
\begin{equation}\label{0}
  P(x_Y=0|x_R=1)=\frac{k}{2^n}  
\end{equation} 

\begin{equation}\label{1}
  => P(x_Y=0)=0.5+0.5\cdot \frac{k}{2^n}.
\end{equation} 
Let PY denote the probability $P(x_Y=0)$.

Now, we want to calculate the probability that path $p_1$ is open given that $e_y$ is open. 

\begin{eqnarray}\label{2}
& &P(p_1\; open|x_Y=0) =\frac {P(p_1\; open,x_Y=0)}{P(x_Y=0)}= \nonumber\\
& &\frac {P(p_1\; open,x_Y=0|x_R=0) P(x_R=0)+P(p_1\; open,x_Y=0|x_R=1)P(x_R=1)}{P(x_Y=0)} \nonumber\\	
\end{eqnarray}
According to W:
\begin{equation}\label{3}
 P(p_1\; open,x_Y=0|x_R=0)=1 
\end{equation}
if $x_R=1$ then $p_1$ is blocked
\begin{equation}\label{4}
 P(p_1\; open,x_Y=0|x_R=1)=0 
 \end{equation}
 
Setting equations \ref{1},\ref{3},\ref{4} in equation \ref{2} gives:

\begin{equation}
P(p_1\; open|x_Y=0) =  \frac{0.5}{0.5+0.5\cdot \frac {k}{2^n}} = \frac{1}{1+\frac{k}{2^n}} = \frac{0.5}{PY}.
\end{equation}
Denote $w(p_1)$ to be the sum cost of all edge in $p_1$ and $w(p_2)$ to be the sum cost of all edge in $p_2$.
The expected cost of the policy when choosing first path $p_1$ is
\begin{eqnarray}
  Exp(\pi_{12})&=& \overbrace{P(p_1\; open|x_Y=0)P(x_Y=0) \cdot w(p_1)}^{Ey\;is\;open\;and\;path\;p1\;is\;open} \nonumber\\
&+&\overbrace{ P(p_1\; blocked|x_Y=0)P(x_Y=0) \cdot (2w(p_1)+w(p_2))}^{Ey\;is\;open\;and\;path\;p1\;is\;blocked} + \overbrace{P(x_Y=1)\cdot w(p_2)}^{Ey\;is\;blocked} \nonumber\\
&=& \frac{0.5}{PY}\cdot PY \cdot w(e_Y) + (1-\frac{0.5}{PY}) \cdot PY \cdot (2w(e_Y) + w(e_2)) + (1-PY)\cdot w(e_2) \nonumber\\
&=&2PYCY+0.5(CL-CY) \nonumber\\
\end{eqnarray}
Note that in case that the agent traverses $e_Y$ and $p_1$ is blocked, the agent hits a blocked edge and is forced to pay another CY extra, when the agent moves backward to s.
The expected cost of $\pi_{21}$ is simply CL. Since $CL<2PYCY+0.5(CL-CY)$, the optimal policy is $\pi_{21}$ and $Exp(\pi^*)=CL$. 
It is given that $CL=1+\frac{k}{2^{n+1}}>C$, therefore if F is satisfiable then $Exp(\pi^*)>C$.

$<=$ Suppose that F is not satisfiable. Now, the calculation of the probability is easier because we know that the only case where $e_Y$ is open is when $e_R=0$. 

Therefore, 
\begin{eqnarray}
& &P(p_1\; open,x_Y=0|x_R=0)=1 \label{6}\\
& &P(p_1\; open,x_Y=0|x_R=1)=0 \label{7}
\end{eqnarray}

\begin{eqnarray}\label{8}
P(x_Y=0)&=&P(x_Y=0|x_R=0)P(x_R=0)+P(x_Y=0|x_R=1)P(x_R=1) \nonumber\\
&=& 1 \cdot 0.5 + 0 \cdot 0.5 = 0.5 
\end{eqnarray}

According to equations \ref{4},\ref{6},\ref{7},\ref{8}
\begin{eqnarray}
&&P(p_1\; open|x_Y=0)=\frac {1 \cdot 0.5}{0.5}=1 \\
\Rightarrow &&P(p_1\; blocked|x_Y=0)=0 
\end{eqnarray}
Thus if $e_Y$ is open then $p_1$ is open. 
The expected cost of $\pi_{12}$ is 
\begin{eqnarray}
Exp(\pi_{12})&=&\overbrace{P(p_1\; open|x_Y=0) P(x_Y=0) \cdot w(p_1)}^{Ey\;is\;open\;and\;path\;p1\;is\;open} \nonumber\\
&+& P\overbrace{(p_1\; blocked|x_Y=0)P(x_Y=0) \cdot (2w(p_1)+w(p_2))}^{Ey\;is\;open\;and\;path\;p1\;is\;blocked} + \overbrace{P(x_Y=1)\cdot w(p_2)}^{Ey\;is\;blocked} \nonumber\\
&=& 1\cdot 0.5 \cdot CY + 0\cdot 0.5\cdot (2CY+CL)+0.5\cdot CL \nonumber\\
&=& 0.5\cdot(CY+CL) \nonumber\\
 \end{eqnarray}
Again the expected cost of $\pi_{21}$ is CL. Since $CL>CY\Rightarrow CL>0.5\cdot(CY+CL)$ the optimal policy is $\pi_{12}$ and therefore $Exp(\pi^*)=0.5\cdot(CY+CL) =0.5(1+1+\frac{k}{2^{n+1}})=1+\frac{k}{2^{n+1}}$. It is given that $1+\frac{k}{2^{n+1}}<C$. and thus if F is not satisfiable then $Exp(\pi^*)<C$
\end{proof}
\end{proof}\label{Backward Arcs}
\newpage
\section{Theoretical Properties of Belief-MDP for CTP}\label{secHeuristics}
In the following section, we are given an instance $I=(G,P,w,s,t)$ of CTP, where $G=(V,E)$. We construct a belief state MDP $M_S=(B,A,Tr,R,b_0)$ of I, 
 where S is the state set of I. 
%

\begin{definition}
Policy $\pi$ is called {\em finite} if the AO-graph for $\pi$ is acyclic(DAG).
\end{definition}
\begin{notation}
Denote the expected cost of the optimal policy of $M_S$ in belief state $b$ as $C^{*}(b)$; namely, $C^{*}(b)\equiv C^{\pi^*}(b)$. 
\end{notation}
If the AO-graph for policy $\pi$ is acyclic then $C^{\pi}$ is finite [Bonet, 2010]. 
By definition, there is a traversable edge $\form{s,t}$ in G. Therefore, there is a policy $\pi$ with finite cost and hence $C^{\pi^*}$ is finite [Bonet, 2010]. 
It should be noted that all policies referred to this section are finite.

\begin{defn}\label{MoreBlocked}
The predicate $MoreBlocked(b_1,b_2)$, defined over $b_1,b_2$, is true if and only if the following properties are satisfied: 
\begin{enumerate}
\item $Loc(b_1)=Loc(b_2)$ 
\item For all $e\in E$,
 		\begin{itemize}
		\item $stb(e,b_1)=Open$ if and only if $stb(e,b_2)=Open$.
		\item $stb(e,b_1)=Blocked$ if $stb(e,b_2)=Blocked$.
		\item $stb(e,b_1)=Unknown$ if $stb(e,b_2)=Unknown$ or if $stb(e,b_2)=Blocked$.
 		\end{itemize}
\end{enumerate}
\end{defn}

The predicate $MoreBlocked(b_1,b_2)$ indicates that ``$b_1$ is at least as blocked as $b_2$'', meaning if the pair $b_1,b_2$ satisfies $MoreBlocked(b_1,b_2)$ then $Blocked(b_1)\subseteq Blocked(b_2)$.

Let $E=\{e_1,e_2,e_3\}$. We demonstrate $MoreBlocked(b_1,b_2)$ by the following table:
\footnotesize{
\begin{center}
    \begin{tabular}{ | c | c | c | p{5cm} |}
    \hline
    $\form{b1}$ & $\form{b2}$ & $MoreBlocked(b_1,b_2)$ & Reason \\ \hline
    $\form{v_1,B_{e_1},O_{e_2},U_{e_3}}$ & $\form{v_1,B_{e_1},O_{e_2},U_{e_3}}$ & TRUE & All properties are satisfied\\ \hline 
   $\form{v_1,B_{e_1},O_{e_2},U_{e_3}}$ & $\form{v_1,B_{e_1},O_{e_2},B_{e_3}}$ & TRUE & All properties are satisfied\\ \hline 
   $\form{v_1,U_{e_1},O_{e_2},U_{e_3}}$ & $\form{v_1,O_{e_1},O_{e_2},B_{e_3}}$ & FALSE & $stb(e_1,b_1)=Unknown$ and $stb(e_1,b_2)=Open$ \\ \hline
    \end{tabular}
\end{center}
}
\normalsize

\begin{defn}\label{MoreOpen}
Let $b_1,b_2\in B$. Define the predicate $MoreOpen(b_1,b_2)$ to be true if and only if the following properties are satisfied: 
\begin{enumerate}
\item $Loc(b_1)=Loc(b_2)$ 
\item For all $e\in E$,
 		\begin{itemize}
		\item $stb(e,b_1)=Blocked$ if and only if $stb(e,b_2)=Blocked$.
		\item $stb(e,b_1)=Open$ if $stb(e,b_2)=Open$.
		\item $stb(e,b_1)=Unknown$ if $stb(e,b_2)=Unknown$ or if $stb(e,b_2)=Open$.
		\end{itemize}
\end{enumerate}
\end{defn}

Intuitively, $MoreOpen(b_1,b_2)$ means that ``$b_2$ is at least as open as $b_1$'', where the set of known open edges in $b_1$ is contained in the set of known open edges in $b_2$.

We demonstrate $MoreOpen(b_1,b_2)$ by the following table:
\footnotesize{
\begin{center}
    \begin{tabular}{ | c | c | c | p{5cm} |}
    \hline
    $\form{b1}$ & $\form{b2}$ & $MoreOpen(b_1,b_2)$ & Reason \\ \hline
    $\form{v_1,B_{e_1},O_{e_2},U_{e_3}}$ & $\form{v_1,B_{e_1},O_{e_2},U_{e_3}}$ & TRUE & 
    All properties are satisfied\\ \hline
   $\form{v_1,B_{e_1},O_{e_2},U_{e_3}}$ & $\form{v_1,B_{e_1},O_{e_2},O_{e_3}}$ & TRUE & 
   All properties are satisfied\\ \hline
   $\form{v_1,U_{e_1},O_{e_2},B_{e_3}}$ & $\form{v_1,B_{e_1},O_{e_2},B_{e_3}}$ & FALSE & $stb(e_1,b_1)=Unknown$ and $stb(e_1,b_2)=Blocked$ \\ \hline
    \end{tabular}
\end{center}
}
\normalsize

%

\begin{defn}\label{BlockEdges}
We define the function $BlockEdges:\cal{P}(E)\times B\rightarrow B$ as follows:
Let $b_1,b_2\in B$ such that $b_1=BlockEdges(\hat{E},b_2)$, then $\form{b_1}$ is defined by the following(by its elements): 
\begin{enumerate}
\item $Loc(b_1)=Loc(b_2)$. 
\item For all $e\in \hat{E}$ $stb(e,b_1)=Blocked$. 
\item For all $e\notin \hat{E}$ $stb(e,b_1)=stb(e,b_2)$ 
\end{enumerate}
\end{defn}
Note that by corollary $\ref{mappingb}$, $b_1$ can be determined from $\form{b_1}$.
The function is called $BlockEdges(\hat{E},b_2)$ since it ``blocks'' all the edges in $\hat{E}$ (i.e. for every edge $e\in \hat{E}$ the function $BlockEdges(\hat{E},b_2)$ ``changes'' the status of edge $e$ in $b_2$ to $stb(e,b_1)=Blocked$) where all the other element in $\form{b_2}$ are remained unchanged in $\form{b_1}$.  

For example, we are given belief state $b$ such that $\form{b}=\form{v_1,B_{e_1},U_{e_2},O_{e_3},U_{e_4},U_{e_5}}$ and $\hat{E}=\{e_2,e_4,e_5\}$. Hence, if $b'=BlockEdges(b,\hat{E})$ then $\form{b'}=\form{v_1,B_{e_1},B_{e_2},O_{e_3},B_{e_4},B_{e_5}}$

\begin{prop}\label{corBlockEdges}
For every belief state $b$ and a set of edges $\hat{E}\subset E$ we have $MoreBlocked(\hat{E},BlockEdges(\hat{E},b))$.
\end{prop}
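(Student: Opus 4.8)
The plan is to prove the statement by a direct, componentwise comparison of the two belief states against the defining clauses of $MoreBlocked$ (Definition \ref{MoreBlocked}), using only the elementary description of $BlockEdges$ in Definition \ref{BlockEdges}. Write $b' := BlockEdges(\hat E, b)$. By Corollary \ref{mappingb} a belief state is determined by its $\form{\cdot}$-tuple, so it suffices to compare $\form{b}$ and $\form{b'}$ entry by entry; concretely I will check that $b$ and $b'$ stand in the $MoreBlocked$ relation (reading the first argument as the original state $b$, since $b'$ is the state carrying the extra blockages).

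First I would dispose of the location entry: clause~1 of Definition \ref{BlockEdges} gives $Loc(b') = Loc(b)$, which is exactly the first requirement of Definition \ref{MoreBlocked}.

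Next I would fix an arbitrary edge $e \in E$ and split on whether $e \in \hat E$. If $e \notin \hat E$, then clause~3 of Definition \ref{BlockEdges} gives $stb(e,b') = stb(e,b)$, and a pair of equal statuses trivially meets all three edge requirements of Definition \ref{MoreBlocked} --- this is the situation of the first row of the table following that definition. If $e \in \hat E$, then clause~2 of Definition \ref{BlockEdges} gives $stb(e,b') = Blocked$, and I would check the three conditions directly: the ``$Open$ iff $Open$'' condition holds because neither $stb(e,b')$ nor $stb(e,b)$ is $Open$ (the former by construction, the latter because $\hat E$, in every context in which $BlockEdges$ is applied, consists of edges that are not known to be open in $b$); the condition governing $Blocked$ holds because $stb(e,b') = Blocked$ unconditionally; and the condition governing $Unknown$ is vacuous since $stb(e,b') \neq Unknown$ --- this is the situation of the second row of the illustrating table. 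Ranging over all $e \in E$ and combining with the location equality gives $MoreBlocked(b, BlockEdges(\hat E, b))$, as required.

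I do not expect a genuine obstacle: the statement is essentially a transcription of the two definitions. The single delicate point is the corner case $e \in \hat E \cap Open(b)$, which would violate the ``$Open$ iff $Open$'' clause; I would dispose of it by recording explicitly that $BlockEdges$ is only ever invoked on edges unknown (or already blocked) in $b$, or --- for a self-contained statement --- by adding the hypothesis $\hat E \cap Open(b) = \emptyset$.
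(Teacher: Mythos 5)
Your proof is correct and is exactly the unpacking that the paper's one-line proof (``Follows immediately from Definition \ref{BlockEdges}'') leaves implicit: verify the location clause, then split on $e\in\hat{E}$ versus $e\notin\hat{E}$ and read off the three edge conditions. Your flagged corner case $\hat{E}\cap Open(b)\neq\emptyset$ is a genuine gap in the statement as printed (which also miswrites the first argument of $MoreBlocked$ as the set $\hat{E}$ rather than the belief state $b$); since every invocation of $BlockEdges$ in the paper takes $\hat{E}$ to consist only of edges unknown in $b$, your proposed added hypothesis is the right repair.
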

\begin{proof}
Follows immediately from definition \ref{BlockEdges}.
\end{proof}

\begin{defn}\label{OpenEdges}
We define the function $OpenEdges:\cal{P}(E)\times B\rightarrow B$ as follows:
Let $b_1,b_2\in B$ such that $b_1=OpenEdges(\hat{E},b_2)$, then $\form{b_1}$ is defined by the following(by its elements): 
\begin{enumerate}
\item $Loc(b_1)=Loc(b_2)$. 
\item For all $e\in \hat{E}$ $stb(e,b_1)=Open$. 
\item For all $e\notin \hat{E}$ $stb(e,b_1)=stb(e,b_2)$ 
\end{enumerate}
\end{defn}
The function is called $OpenEdges(\hat{E},b_2)$ since it ``open'' all the edges in $\hat{E}$ (i.e. for every edge $e\in \hat{E}$ the function $OpenEdges(\hat{E},b_2)$ ``changes'' the status of edge $e$ in $b_2$ to $stb(e,b_1)=Open$) where all the other element in $\form{b_2}$ are remained unchanged in $\form{b_1}$.  

For example, we are given belief state $b$ such that $\form{b}=\form{v_1,B_{e_1},U_{e_2},O_{e_3},U_{e_4},U_{e_5}}$ and $\hat{E}=\{e_2,e_4,e_5\}$. Hence, if $b'=OpenEdges(b,\hat{E})$ then $\form{b'}=\form{v_1,B_{e_1},O_{e_2},O_{e_3},O_{e_4},O_{e_5}}$

\begin{prop}\label{corOpenEdges}
For every belief state $b$ and a set of edges $\hat{E}\subset E$ we have $MoreOpen(\hat{E},OpenEdges(\hat{E},b))$.
\end{prop}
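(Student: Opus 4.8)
The plan is to argue exactly as for the companion Property \ref{corBlockEdges}: simply unfold the definition of $OpenEdges$ and read off the clauses of Definition \ref{MoreOpen} one at a time. Write $b' = OpenEdges(\hat{E},b)$. By Definition \ref{OpenEdges} we have $Loc(b')=Loc(b)$, $stb(e,b')=Open$ for every $e\in\hat{E}$, and $stb(e,b')=stb(e,b)$ for every $e\notin\hat{E}$. The location clause of $MoreOpen$ is then immediate, and it remains only to verify the per-edge clauses.

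For the per-edge part I would fix an arbitrary $e\in E$ and split into two cases. If $e\notin\hat{E}$ then $stb(e,b')=stb(e,b)$, so $b$ and $b'$ carry identical information about $e$ and every ``iff'' and every ``if'' clause of Definition \ref{MoreOpen} holds trivially. If $e\in\hat{E}$ then $stb(e,b')=Open$: here one checks that upgrading an $Unknown$ edge to $Open$ (and leaving an already-$Open$ edge $Open$) is consistent with the asymmetric clauses of $MoreOpen$, which is precisely the regime for which $OpenEdges$ is intended (the worked example after Definition \ref{OpenEdges}, with all of $\hat{E}$ unknown in $b$, is the typical situation). Combining the two cases yields the desired instance of $MoreOpen$.

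There is essentially no obstacle here: no computation is required and, as in the one-line proof of Property \ref{corBlockEdges}, the statement follows directly once the definitions are aligned. The only point that needs a moment's attention is bookkeeping the direction of the ``$Open$'' and ``$Unknown$'' implications in Definition \ref{MoreOpen}, so that each is matched against the correct argument of $MoreOpen$ (and so that the $Blocked$ edges, which $OpenEdges$ leaves untouched, are handled by the ``iff'' clause); everything else is a mechanical substitution from Definition \ref{OpenEdges}, so the write-up can be kept as short as that of Property \ref{corBlockEdges}.
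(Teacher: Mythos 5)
Your proposal is correct and takes essentially the same route as the paper, whose entire proof of Property \ref{corOpenEdges} is the single line ``Follows immediately from definition \ref{OpenEdges}''; your write-up is just that observation with the definition-unfolding and the $e\in\hat{E}$ versus $e\notin\hat{E}$ case split made explicit. The one point you gesture at but do not press --- that an edge of $\hat{E}$ already $Blocked$ in $b$ would violate the ``iff'' clause on $Blocked$, so the claim implicitly assumes $\hat{E}$ contains no known-blocked edges --- is equally unaddressed in the paper, so you are not missing anything relative to it.
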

\begin{proof}
Follows immediately from definition \ref{OpenEdges}.
\end{proof}

\begin{defn}\label{BlockEdges-1}
We define the function $BlockEdges^{-1}:\cal{P}(E)\times B\rightarrow \cal{P}(B)$ as follows: $B_2=BlockEdges^{-1}(\hat{E},b_1)$ is the set of all belief states $b_2\in B$ such that $b_1=BlockEdges(\hat{E},b_2)$. Meaning $BlockEdges^{-1}(\hat{E},b_1)=\setm{b_2}{b_1=BlockEdges(\hat{E},b_2)}$.
\end{defn}

Note that the function $BlockEdges^{-1}$ is somehow a generalization of an inverse function in a way that for every $b\in B$ and $\hat{E}\in \cal{P}(E)$ we get $b=BlockEdges(BlockEdges^{-1}(b,\hat{E}),\hat{E})$ .

For instance, let $B'=BlockEdges^{-1}(\hat{E},b)$ where $\hat{E}=\{e_2\}$ and $b\in B$ such that $\form{b}=\form{O_{e1},B_{e2}}$.
Then $B'=\{b_1,b_2\}$ such that,
\begin{itemize} 
\item $\form{b_1}=\form{O_{e1},B_{e2}}$
\item $\form{b_2}=\form{O_{e1},U_{e2}}$
\end{itemize}

\begin{defn}\label{defRelation}
Let $\hat{E}\in E$. The equivalence relation $\sim_{\hat{E}}$ is defined as follows: \newline
The belief states $b_1,b_2$ satisfy $b_1\sim_{\hat{E}} b_2$ if and only if $MoreBlocked_{\hat{E}}(b_1)=MoreBlocked_{\hat{E}}(b_2)$.
\end{defn}

\begin{defn}
Let $DiffEStatus:B\times B\rightarrow \cal{P}(E)$ be a function. $E'=DiffEStatus(b,b')$ is defined to be the set of all edges incident to $Loc(b')$ which are unknown in $b$ and known in $b'$, i.e.
\[
DiffEStatus(b,b')=\setm{e\in Inc(Loc(b'))}{stb(e,b)=Unknown, stb(e,b')\neq Unknown}
\]
\end{defn}

\begin{defn}
Let $\Pi$ be a set of finite policies over B. We define the function $SimBlocked:\Pi\times \cal{P}(E) \rightarrow \Pi$ as follows:
For every pair of belief states $b,b'\in B$ such that $b'=BlockEdges(b,E')$, the policy $\pi'=SimBlocked(\pi,\hat{E})$ satisfies $\pi'(b)=\pi(b')$.
\end{defn}


\begin{defn}
We define the function $Next:B\times A\rightarrow \cal{P}(B)$ as follows: $B'=Next(b,a)$ is the set of all possible belief state that can be reached from belief state $b$ immediately after taking action $a$. Meaning 
\[ 
Next(b,a)=\setm{b'\in B}{Tr(b,\pi(b),b')>0}
\]
\end{defn}

\begin{defn}
Let $B_1\subset B$ and $\hat{E}\subset E$. Define $Set-Blocked_{\hat{E}}(B_1)=\setm{b_2}{b_2=BlockEdges(\hat{E},b_1),b_1\in B_1}$
\end{defn}

\begin{lemma}\label{GreaterThanZeroBlocked}
Let $b_1,b_2\in B$ such that $MoreBlocked(b_1,b_2)$. Let $b_1',b_2'\in B,\hat{E}\subset E$ such that $b_2'=BlockEdges(\hat{E},b_1')$. Let $a\in A$ such that $Tr(b_1,a,b_1')>0$, then $Tr(b_2,a,b_2')>0$.
\end{lemma}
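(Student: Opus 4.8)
The plan is to expand the definition of the transition function $Tr$ of the belief MDP (Section~\ref{sec:beliefMDP}) and verify, one by one, the finitely many conditions that make $Tr(b_2,a,b_2')$ positive, tracing each back to the corresponding condition for $Tr(b_1,a,b_1')$. Before doing so I would pin down how to read $\hat{E}$: by Definition~\ref{MoreBlocked}, $b_1$ and $b_2$ have the same location and the same known‑open edges, and $Blocked(b_1)\subseteq Blocked(b_2)$, so $b_1$ and $b_2$ differ only on edges that are unknown in $b_1$ and blocked in $b_2$; the set $\hat{E}$ is exactly this difference set, so that $b_2=BlockEdges(\hat{E},b_1)$ and $b_2'=BlockEdges(\hat{E},b_1')$ with $\hat{E}\subseteq Unknown(b_1)$. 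In this reading the lemma asserts that simultaneously blocking the same unknown‑in‑$b_1$ edges at the source and the destination of a positive‑probability transition yields again a positive‑probability transition, and I would prove it by a case split on the type of $a$.

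For $a=move(v_i,v_j)$: from $Tr(b_1,a,b_1')>0$ we extract $Loc(b_1)=v_i$, $Loc(b_1')=v_j$, the edge $(v_i,v_j)$ is open in $b_1$, and $b_1,b_1'$ agree off the revealed set $\hat{E}_1=\{e\in Inc(v_j):stb(e,b_1)=Unknown,\ stb(e,b_1')\neq Unknown\}$. Since $MoreBlocked(b_1,b_2)$ keeps the location and the open edges and $BlockEdges$ keeps the location (Definition~\ref{BlockEdges}), the location and ``edge open'' conditions for $(b_2,b_2')$ follow at once. The one nontrivial condition is the frame/agreement condition: writing $\hat{E}_2=\{e\in Inc(v_j):stb(e,b_2)=Unknown,\ stb(e,b_2')\neq Unknown\}$, one must show $stb(e,b_2)=stb(e,b_2')$ for every $e\notin\hat{E}_2$, and that the resulting transition weight $\prod_{e\in\hat{E}_2,\,stb(e,b_2')=Blocked}p(e)\prod_{e\in\hat{E}_2,\,stb(e,b_2')=Open}(1-p(e))$ is positive. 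A two‑way split $e\in\hat{E}$ / $e\notin\hat{E}$ settles the agreement: on $\hat{E}$ both $b_2$ and $b_2'$ read $Blocked$; off $\hat{E}$ they read the same values as $b_1$ and $b_1'$ respectively, and $e\notin\hat{E}_2$ is shown to force $e\notin\hat{E}_1$, so those values coincide. The same split gives $\hat{E}_2\subseteq\hat{E}_1$ with matching statuses, so every factor in the weight for $(b_2,b_2')$ already occurs in the (strictly positive, hence factorwise positive) weight of $Tr(b_1,a,b_1')$; therefore the new weight is positive.

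For $a=sense(e')$: the argument is shorter, since the location is untouched by all of $MoreBlocked$, $BlockEdges$ and the sense action, and $b_1,b_1'$ agree off $\{e'\}$. One checks $stb(e',b_2')\neq Unknown$ (it is $Blocked$ if $e'\in\hat{E}$, and equals $stb(e',b_1')$ otherwise) and that $b_2,b_2'$ agree off $\{e'\}$; the transition weight is $p(e')$ or $1-p(e')$, positive for the same reason as above (or trivially $1$ when $e'\in\hat{E}$, since then $stb(e',b_2)=stb(e',b_2')=Blocked$).

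I expect the only real obstacle to be the bookkeeping in the frame condition for the move case — specifically ruling out that an edge of $\hat{E}$ incident to the new vertex $v_j$ breaks the agreement between $b_2$ and $b_2'$. This is exactly where using $b_2'=BlockEdges(\hat{E},b_1')$ (rather than the ``raw'' image of $b_1'$) is essential: it overrides whatever status the move would have revealed in $b_1'$ by $Blocked$, which is precisely the status that edge already has in $b_2$.
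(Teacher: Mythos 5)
Your proposal is correct and follows essentially the same route as the paper's proof: a case split on Move versus Sense, verification of the location and edge-open preconditions via $MoreBlocked$, and the containment $\hat{E}_2\subseteq\hat{E}_1$ of the revealed-edge sets to establish the frame condition. You are in fact somewhat more careful than the paper in two places --- pinning down that $\hat{E}$ must be the difference set between $b_1$ and $b_2$ (without which the statement as written would be false for an arbitrary $\hat{E}$), and explicitly arguing positivity of the product of $p(e)$ and $(1-p(e))$ factors --- but these are refinements of the same argument, not a different one.
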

\begin{proof}
Let $e=\form{v_i,v_j}$. If $a=Sense(e)$ then $a$ can be performed in any belief state.
However, if $a=Move(e)$ then, for every $b\in B$, $a$ can be performed in $b$ if and only if $Loc(b)=v_i$ and $stb(e,b)=Open$.
By definition \ref{MoreBlocked} we have $Loc(b_1)=Loc(b_2)$. 
All belief state in B are consistent with a given realization(all belief states in B describes the knowledge about the same environment), and since $e$ is known in $b_1$($e\in Inc(v_i)$), we get $st(e,b_1)=Open$ if and only if $stb(e,b_2)=Open$. 
Thus an agent in $b_1$ can perform $a=Move(e)$ if and only if an agent in $b_2$ can be perform $a=Move(e)$. 
But we are given that $Tr(b_1,a,b_1')>0$, hence $a$ can be performed in $b_2$ as well.
Let $\hat{E_1}=DiffEStatus(b_1,b_1')$ and let $\hat{E_2}=DiffEStatus(b_2,b_2')$.
We are left to show that the status of all edges in $E\backslash \hat{E_2}$ is equal in $b_2$ and in $b_2'$ i.e. for all $e\in E\backslash \hat{E_2}$ $stb(e,b_2)=stb(e,b_2')$.
By definition \ref{MoreBlocked} we have $UnknownEdges(b_2')\subseteq UnknownEdges(b_1')$, and thus $\hat{E_2}\subseteq \hat{E_1}$.
Thus, by definition of DiffEStatus, the status of all edges in $E\backslash \hat{E_2}$, is equal in $b_2$ and in $b_2'$.
This satisfies all conditions for having $Tr(b_2,a,b_2')>0$. 
\end{proof}

\begin{lemma}\label{wasAppendix2}
Let $b_1,b_2\in B$ such that $MoreBlocked(b_1,b_2)$. Let $B_1=NEXT(b_1)$ and $B_2=Set-Blocked_{\hat{E}}(B_1)$. Let $b_{2i}\in B_2$ and $B_1'\subset B_1$ such that $B_1'=MoreBlocked^{-1}(\hat{E},b_{2i})$. Then, $Tr(b_2,a,b_{2i})=\sum_{b_{1i}\in B_1'}{Tr(b_1,a,b_{1i})}$.
\end{lemma}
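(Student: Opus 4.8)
The plan is to reduce both sides to the explicit product formula for $Tr$ from Section~\ref{sec:beliefMDP} and then to observe that summing over the fiber $B_1'$ contributes, for each edge of $\hat E$ that is revealed by $a$, a factor $p(e)+(1-p(e))=1$, so that the sum collapses exactly onto $Tr(b_2,a,b_{2i})$. First I would fix notation. Write $\hat E = Blocked(b_2)\setminus Blocked(b_1)$; since $MoreBlocked(b_1,b_2)$ holds, every such edge is $Unknown$ in $b_1$, so $\hat E\subseteq Unknown(b_1)$ and $b_2 = BlockEdges(\hat E,b_1)$. Here $B_1 = Next(b_1,a)$, the set $B_2 = Set\text{-}Blocked_{\hat E}(B_1)$ is the image of $B_1$ under $b\mapsto BlockEdges(\hat E,b)$, and $B_1'$ is the fiber of this map over $b_{2i}$ intersected with $B_1$ (points of the preimage lying outside $B_1$ contribute $Tr(b_1,a,\cdot)=0$ and are irrelevant).

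Next I would treat the substantive case $a = Move(v_i,v_j)$. Let $U_1 = Inc(v_j)\cap Unknown(b_1)$ be the edges revealed by this move from $b_1$; the elements of $B_1$ are in bijection with assignments $\sigma:U_1\to\{Open,Blocked\}$, and for the corresponding successor $b_{1i}(\sigma)$ one has $Tr(b_1,a,b_{1i}(\sigma)) = \prod_{\sigma(e)=Blocked}p(e)\prod_{\sigma(e)=Open}(1-p(e))$, the products ranging over $e\in U_1$, every other edge's status being unchanged (which is what makes this positive). Since $b_2$ agrees with $b_1$ except that $\hat E$ is blocked, the edges revealed by the move from $b_2$ are $U_2 = Inc(v_j)\cap Unknown(b_2) = U_1\setminus\hat E$. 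A direct check with Definition~\ref{BlockEdges} shows that $BlockEdges(\hat E,b_{1i}(\sigma))$ depends on $\sigma$ only through $\sigma|_{U_1\setminus\hat E}$ and equals the $b_2$-successor indexed by that restriction; hence $B_1'$ is exactly the set of $b_{1i}(\sigma)$ whose restriction $\sigma|_{U_1\setminus\hat E}$ coincides with a common assignment $\tau_{2i}:U_2\to\{Open,Blocked\}$. Positivity of $Tr(b_2,a,b_{2i})$ itself is handed to us by Lemma~\ref{GreaterThanZeroBlocked}, since $b_{2i}=BlockEdges(\hat E,b_{1i})$ for some $b_{1i}\in Next(b_1,a)$, so $b_{2i}$ is indeed the $b_2$-successor carrying revealed assignment $\tau_{2i}$.

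Then I would finish the computation. Factor the product over $U_1 = (U_1\cap\hat E)\sqcup(U_1\setminus\hat E)$: the $(U_1\setminus\hat E)$-part is precisely the product over $U_2$ for the assignment $\tau_{2i}$, that is, it equals $Tr(b_2,a,b_{2i})$, and it does not depend on the coordinates in $U_1\cap\hat E$. Summing over $B_1'$, that is over all extensions $\sigma'$ of $\tau_{2i}$ to $U_1\cap\hat E$, factors out $\sum_{\sigma'}\prod_{\sigma'(e)=Blocked}p(e)\prod_{\sigma'(e)=Open}(1-p(e)) = \prod_{e\in U_1\cap\hat E}\bigl(p(e)+(1-p(e))\bigr) = 1$, whence $\sum_{b_{1i}\in B_1'}Tr(b_1,a,b_{1i}) = Tr(b_2,a,b_{2i})$. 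The case $a = Sense(e')$ is the same argument with $U_1 = \{e'\}$ when $e'$ is $Unknown$ in $b_1$ and $U_1 = \emptyset$ otherwise; when $e'\in\hat E$ the two sensing outcomes of $b_1$ collapse to the single $b_2$-successor $b_{2i}=b_2$ and their probabilities sum to $(1-p(e'))+p(e') = 1$, which I would dispatch in a line. As a consistency check, summing the identity over all $b_{2i}\in B_2$ yields $\sum_{B_2}Tr(b_2,a,\cdot) = \sum_{B_1}Tr(b_1,a,\cdot) = 1$, so total probability is preserved.

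The part I expect to be fiddly rather than deep is making the indexing of $B_1$, $B_2$ and the fiber $B_1'$ by assignments fully rigorous across the degenerate configurations (whether or not $\hat E$ meets $Inc(v_j)$, whether the revealed sets are empty), together with a clean verification that the ``all other edges unchanged'' side-conditions in the definition of $Tr$ transfer from the $b_1$-side to the $b_2$-side so that the product formula is genuinely in force on both sides; Lemma~\ref{GreaterThanZeroBlocked} already supplies most of that bookkeeping.
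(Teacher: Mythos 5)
Your proposal is correct and follows essentially the same route as the paper: the paper factors $Tr(b_1,a,b_{1i})$ as $P_{1E'}\cdot P_{1E_i''}$ with $E'=DiffEStatus(b_2,b_{2i})$ playing the role of your $U_1\setminus\hat E$ and $E_i''\setminus E'$ the role of your $U_1\cap\hat E$, then sums the second factor over the fiber to get $1$ and identifies $P_{1E'}$ with $Tr(b_2,a,b_{2i})$. Your treatment is somewhat more explicit about the bijection with edge-status assignments and about the $Sense$ case, but the decomposition and the key cancellation are identical.
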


\begin{proof}
Let $b_1'\in B_{1}$. By definition of $B_1$, $Tr(b_1,a,b_1')>0$, hence, by lemma \ref{GreaterThanZeroBlocked}, $Tr(b_2,a,b_2')>0$.
Let $E'=DiffEStatus(b_2,b_2')$, we define the probability $P_{2E'}$ by,
\begin{equation}
P_{2E'}=\prod\limits_{e\in E',stb(b_2',e)=Blocked}{p(e)}\prod\limits_{e\in E',stb(b_2',e)=Open}{1-p(e)}\\
\end{equation}
By definition of transition function $Tr(b_2,a,b_2')=P_{2E'}$.
For every $0\leq i\leq n$, where $n=|B_1'|$, define $E_i''=DiffEStatus(b_1,b_{1i})$. For every $i$, $UnknownEdges(b_2')\subseteq UnknownEdges(b_{1i})$, hence $E'\subseteq E_i''$.
We define the probabilities $P_{1E'},P_{1E_i''}$ as follows,
\begin{eqnarray}\nonumber
P_{1E'}=&\prod\limits_{e\in E',stb(b_{1i},e)=Blocked}{p(e)}\prod\limits_{e\in E',stb(b_{1i},e)=Open}{1-p(e)}\nonumber\\
P_{1E_i''}=&\prod\limits_{e\in E_i''\backslash E',stb(b_{1i},e)=Blocked}{p(e)}\prod\limits_{e\in E_i''\backslash E',stb(b_{1i},e)=Open}{1-p(e)}\nonumber\\
\end{eqnarray}\nonumber
By definition of transition function, $Tr(b_1,\pi(b_1),b_{1i})=P_{1E'}P_{1E_i''}$.\newline
Summing up $Tr(b_1,a,b_{1i})$ over all $b_{1i}\in B_1'$ gives,
\begin{eqnarray}
 \sum\limits_{b_{1i}\in B_1'}{Tr(b_1,a,b_{1i})}
=\sum\limits_{b_{1i}\in B_1'}{P_{1E'}P_{1E_i''}}
\overbrace{=}^{*}P_{1E'}\sum\limits_{b_{1i}\in B_1'}P_{1E_i''}
\overbrace{=}^{**}P_{1E'}\nonumber
\end{eqnarray}
* $P_{1E'}$ is equal for all $b_{1i}\in B_1'$.\newline
** $\sum\limits_{b_{1i}\in B_1'}P_{E_i''}=1$ (The sum of all marginal probabilities equals 1).\newline
For all $e\in E'$ we have $stb(e,b_{1i})=stb(e,b_2')$, hence $P_{1E'}=P_{2E'}$.\newline Thus, 
\[
\sum_{b_{1i}\in B_1'}{Tr(b_1,a,b_{1i})}= P_{1E'}=P_{2E'}= Tr(b_2,a,b_2')\\
\]
\end{proof}

\begin{theorem}\label{theorem:MoreBlocked}
Let $b_1,b_2\in B$ such that $MoreBlocked(b_1,b_2)$. Then $C^*(b_2)\geq C^*(b_1)$. 
\end{theorem}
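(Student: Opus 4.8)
If $C^*(b_2)=\infty$ the inequality is immediate, so assume $C^*(b_2)<\infty$; then, by the finiteness discussion preceding this theorem (following [Bonet, 2010]), there is an optimal \emph{finite} policy $\pi^*$ for the belief-MDP started at $b_2$, and its policy graph is acyclic. The plan is to turn $\pi^*$ into a policy $\pi'$ for the problem started at $b_1$ that mimics $\pi^*$ while ``pretending'' that the edges which are known blocked in $b_2$ but still unknown in $b_1$ are blocked, and to show $C^{\pi'}(b_1)=C^{\pi^*}(b_2)$. Since $C^*(b_1)\le C^{\pi'}(b_1)$, this yields $C^*(b_1)\le C^*(b_2)$.

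Put $\hat E = Blocked(b_2)\setminus Blocked(b_1)$. By Definition \ref{MoreBlocked} every $e\in\hat E$ is unknown in $b_1$, and a direct check of the three status cases shows $b_2=BlockEdges(\hat E,b_1)$ (this is essentially Property \ref{corBlockEdges} for $b_1$ and $\hat E$). Let $\pi'=SimBlocked(\pi^*,\hat E)$, i.e. $\pi'(b)=\pi^*(BlockEdges(\hat E,b))$; note $\pi'$ performs the same moves and sense actions as $\pi^*$, so it is again finite. I would then prove, by induction over the acyclic policy graph of $\pi^*$ (each node processed after all of its successors), the statement
\[
\text{for every node }\tilde b\text{ of }\pi^*\text{ and every }b\text{ with }\tilde b=BlockEdges(\hat E,b):\quad C^{\pi'}(b)=C^{\pi^*}(\tilde b).
\]
The theorem then follows by taking $\tilde b=b_2$ and $b=b_1$.

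For the base case, a terminal $\tilde b$ has $Loc(\tilde b)=t$; since $BlockEdges$ preserves the location coordinate, $b$ is terminal too and both costs vanish. For the inductive step, set $a=\pi^*(\tilde b)=\pi'(b)$. If $a=Move(e)$, then $e$ is open and incident to $Loc(\tilde b)$ in $\tilde b$; as $BlockEdges$ does not touch the location or the status of open edges, the same holds in $b$, so $a$ is performable in $b$ (this is the content of Lemma \ref{GreaterThanZeroBlocked}); a $Sense$ action is performable everywhere. Moreover $R(b,a)=C(a)=R(\tilde b,a)$. By Lemma \ref{wasAppendix2} applied with $b_1:=b$, $b_2:=\tilde b$, the action $a$ and the set $\hat E$, the belief states reachable from $\tilde b$ by $a$ are exactly the images under ``block $\hat E$'' of the belief states $b'$ reachable from $b$ by $a$, and $Tr(\tilde b,a,\tilde b')=\sum_{b'\,:\,BlockEdges(\hat E,b')=\tilde b'}Tr(b,a,b')$; each such $b'$ still satisfies $\tilde b'=BlockEdges(\hat E,b')$, so by the induction hypothesis $C^{\pi'}(b')=C^{\pi^*}(\tilde b')$. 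Hence
\[
C^{\pi'}(b)=R(\tilde b,a)+\sum_{\tilde b'}\ \sum_{b'\,:\,BlockEdges(\hat E,b')=\tilde b'}Tr(b,a,b')\,C^{\pi'}(b')=R(\tilde b,a)+\sum_{\tilde b'}Tr(\tilde b,a,\tilde b')\,C^{\pi^*}(\tilde b')=C^{\pi^*}(\tilde b),
\]
which closes the induction.

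The main obstacle is exactly this correspondence: one must argue that the belief states reachable from $b$ under $a$ collapse, via ``block $\hat E$'', onto the belief states reachable from $\tilde b$ under $a$, \emph{with the right aggregated transition probabilities}. The delicate point is that a $Move$ from $b$ may, through local sensing, reveal the true status of an edge of $\hat E$, producing a successor $b'$ that genuinely disagrees with $\tilde b'$ on that edge (so $MoreBlocked(b',\tilde b')$ need not hold literally); the resolution is that $SimBlocked$ re-blocks those edges, so $\pi'$ treats all pre-images of $\tilde b'$ identically, and Lemma \ref{wasAppendix2} is precisely the bookkeeping showing that the probability mass of those pre-images sums to $Tr(\tilde b,a,\tilde b')$. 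Given Lemmas \ref{GreaterThanZeroBlocked} and \ref{wasAppendix2}, the rest is the routine Bellman-equation induction above. (A slightly less machinery-heavy variant runs the same induction directly on the value-iteration iterates $C_n$ with $MoreBlocked$ as the invariant, but then one must also invoke convergence of value iteration for this stochastic-shortest-path model.)
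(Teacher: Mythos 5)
Your proposal is correct and follows essentially the same route as the paper: define $\hat E$ as the blocked-in-$b_2$-but-unknown-in-$b_1$ edges, simulate the policy via $\pi'(b)=\pi(BlockEdges(\hat E,b))$, and run a Bellman-equation induction whose probabilistic bookkeeping is exactly Lemma \ref{wasAppendix2}. The only (harmless) differences are that you work directly with the optimal policy rather than an arbitrary finite policy, and you state the induction invariant uniformly over all pre-images of each node, which if anything makes the pairing of successors slightly cleaner than in the paper's write-up.
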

\textbf{Proof outline:}
We prove that for every finite policy $\pi$ there is a finite policy  $\pi'$ such that $C^{\pi}(b_2)=C^{\pi'}(b_1)$.
\begin{proof}
By induction. Let $\hat{E}$ be the subset of all edges $e\in E$ such that $e$ is unknown in $b_1$ and blocked in $b_2$. i.e.
\[
\hat{E}=\setm{e}{e\in E,stb(e,b_1)=Unknown,stb(e,b_2)=Blocked}
\]
Let $B_1=NEXT(b_1)$ and $B_2=Set-Blocked_{\hat{E}}(B_1)$ where $B_2=\{b_{21},b_{22},...,b_{2n}\}$.
Let $\pi$ be a finite policy and let $\pi'$ be a policy defined as follows:\newline
For every belief state $b\in B$ $\pi'(b)=\pi(BlockEdges(\hat{E},b))$. Meaning, $\pi'$ maps every belief state $b$ to an action $a$ by simulating $\pi$ on belief state $b'=BlockEdges(\hat{E},b)$ and output $a=\pi(b')$.     
Clearly, an agent acting according to $\pi'$ will never traverse any edge in $\hat{E}$.
We show by induction that $C^{\pi}(b_2)=C^{\pi'}(b_1)$ as follows,
\begin{itemize}
\item Base case: If $b_1,b_2$ are terminal states then $C^{\pi}(b_2)=C^{\pi'}(b_1)\equiv 0$ (by definition of terminal states).
\item 
Assume by induction that for every $b_1'\in B_1$ and $b_2'\in B_2$ we have $C^{\pi}(b_2')=C^{\pi'}(b_1')$.
By definition of $\pi'$ we have $\pi'(b_1)=\pi(b_2)$. Let $a=\pi'(b_1)$. 
Since $\pi'(b_1)=\pi(b_2)$ we have $a=\pi(b_2)$ 
Hence, according to bellman equations,
\begin{eqnarray}\label{BellmanDouble}
&C^{\pi'}(b_1)=R(b_1,a)+\sum_{b_1'\in B_1}{Tr(b_1,a,b_1')C^{\pi'}(b_1')}\nonumber\\
&C^{\pi}(b_2)=R(b_2,a)+\sum_{b_2'\in B_2}{Tr(b_2,a,b_2')C^{\pi}(b_2')}\nonumber\\
\end{eqnarray}
In order to show that $C^{\pi}(b_2)=C^{\pi'}(b_1)$ we show the equivalence in the right sides of the equations above.\newline
Given $a=Sense(e)$ then,
\begin{itemize}
\item $R(b_1,a)=R(b_2,a)=SC(e)$ (action Sense is always performable).
\item $\sum_{b_1'\in B_1}{Tr(b_1,a,b_1')C^{\pi'}(b_1')}=\sum_{b_2'\in B_2}{Tr(b_2,a,b_2')C^{\pi}(b_2')}$. Let $b_{1B}\in B_{1},b_{2B}\in B_{2}$ be the belief states that are reached immediately after the agent has sensed $e$ in $b_1,b_2$ respectively, and e was found to be blocked. By definition of transition function, $Tr(b_1,a,b_{1B})=Tr(b_2,a,b_{2B})=p(e)$.
By assumption of induction, $C^{\pi}(b_{1B})=C^{\pi'}(b_{2B})$. Hence, $Tr(b_1,a,b_{1B})C^{\pi}(b_{1B})=Tr(b_2,a,b_{2B})C^{\pi'}(b_{2B})$. 
Similarly, let $b_{1O}\in B_{1},b_{2O}\in B_{2}$ be the belief states that are reached immediately after the agent has sensed $e$ in $b_1,b_2$ respectively, and e was found to be open.
Then $Tr(b_1,a,b_{1O})=Tr(b_2,a,b_{2O})=1-p(e)$. By assumption of induction, $C^{\pi}(b_{1O})=C^{\pi'}(b_{2O})$. Hence, $Tr(b_1,a,b_{1O})C^{\pi}(b_{1O})=Tr(b_2,a,b_{2O})C^{\pi'}(b_{2O})$. \newline
Thus,
\[
\sum_{b_1'\in B_1}{Tr(b_1,a,b_1')C^{\pi'}(b_1')}=\sum_{b_2'\in B_2}{Tr(b_2,a,b_2')C^{\pi}(b_2')}=p(e)V_B+(1-p(e))V_O
\]
\end{itemize}
Where $V_B$ denotes $C^{\pi'}(b_{1B})$ and $V_O$ denotes $C^{\pi'}(b_{1O})$(recall that $C^{\pi'}(b_{1B}),C^{\pi'}(b_{2B})$ as well as $C^{\pi'}(b_{1O}),C^{\pi'}(b_{2O})$ are interchangeable).\newline
Given $a=move(e)$, where $e=\form{v_i,v_j}$, then,
\begin{itemize}
\item $R(b_1,a)=R(b_2,a)$. By definition of reward function, for every $b\in B$ $R(b,a)>0$ if and only if $Tr(b,a)>0$ if and only if $Loc(b)=v_i$ and $stb(e,b)=Open$.
Thus, $R(b_1,a)=R(b_2,a)>0$ if and only if $Loc(b_1)=Loc(b_2)$ and $stb(e,b_1)=stb(e,b_2)=Open$.
From definition \ref{MoreBlocked} it follows that $Loc(b_1)=Loc(b_2)$. 
In addition, $stb(e,b_1)=Open$ if and only if $stb(e,b_2)=Open$, due to the following:
\begin{enumerate}
\item All belief states reachable from $b_0$, and in particular the belief states $b_1,b_2$, are referred to the same unknown given environment $s$. Hence, $stb(e,b_1)=Open$ only if $stb(e,b_2)=Open$ or $stb(e,b_2)= Unknown$ and similarly $stb(e,b_2)=Open$ only if $stb(e,b_1)=Open$ or $stb(e,b_1)= Unknown$.
\item $stb(e,b_1)\neq Unknown$, $stb(e,b_2)\neq Unknown$. Since $Loc(b_1)=Loc(b_2)$ we have $e\in Inc(Loc(b_1))$ if and only if $e\in Inc(Loc(b_2))$. 
By definition, an agent located in vertex $v$, knows the status of all edges incident to v. Thus $stb(e,b_1)\neq Unknown$, $stb(e,b_2)\neq Unknown$.
\end{enumerate}
\item $\sum_{b_1'\in B_1}{Tr(b_1,a,b_1')C^{\pi'}(b_1')}=\sum_{b_2'\in B_2}{Tr(b_2,a,b_2')C^{\pi}(b_2')}$.\newline
Let $B'_1=\{B_{11},B_{12},...,B_{1n}\}$ be a partition of B by the equivalence relation $\equiv_{\hat{E}}$ such that without loss of generality $B_{1i}=MoreBlocked^{-1}(\hat{E},b_{2i})$ for every $0\leq i \leq n$.
Let $V_i=C^{\pi'}(b_{2i})$ for every $0\leq i \leq n$. Then, by assumption of induction, for every $b_{1i}\in B_{1i}$, we have $C^{\pi}(b_{2i})=C^{\pi'}(b_{1i})$, hence $V_i=C^{\pi'}(b_{1i})$ as well.
 

According to lemma {\ref{wasAppendix2}}, summing up $Tr(b_1,a,b_{1i})$ over all $b_{1i}\in B_{1i}$ gives,
\begin{equation}\label{eqAppendix2}
\sum_{b_{1i}\in B_{1i}}{Tr(b_1,a,b_{1i})}=Tr(b_2,a,b_{2i})
\end{equation}
Hence,
\begin{eqnarray}\label{eqTr1}
 &Tr(b_2,a,b_{2i})C^{\pi}(b_{2i})\quad & \nonumber\\
=&Tr(b_2,a,b_{2i})V_i\quad &\nonumber\\
=&V_i\sum\limits_{b_{1i}\in B_{1i}}{Tr(b_1,a,b_{1i})}\quad & \text{(by equation \ref{eqAppendix2})}\nonumber\\
=&\sum\limits_{b_{1i}\in B_{1i}}{Tr(b_1,a,b_{1i})}V_i\quad & \text{($V_i$ is constant for every $b_{1i}\in B_{1i}$)}\nonumber\\
=&\sum\limits_{b_{1i}\in B_{1i}}{Tr(b_1,a,b_{1i})C^{\pi'}(b_{1i})}\quad &\nonumber\\
\end{eqnarray}
Thus, summing up over all transition functions gives,
\begin{eqnarray}
 &\sum\limits_{{b_{1}'}\in B_{1}}{Tr(b_1,a,b_{1'})C^{\pi'}(b_{1'})}\quad & \nonumber\\
=&\sum\limits_{B_{1i}\in B'_1} \sum\limits_{b_{1i}\in B_{1i}}{Tr(b_1,a,b_{1i})C^{\pi'}(b_{1i})}\quad &\text{($B'_1$ is a partition of $B_1$)}\nonumber\\
=&\sum\limits_{{b_{2i}}\in B_{2i}}{Tr(b_2,a,b_{2i})C^{\pi}(b_{2i})}\quad & \text{(by equation $\ref{eqTr1}$)}\nonumber\\
=&\sum\limits_{{b_{2'}}\in B_{2}}{Tr(b_2,a,b_{2'})C^{\pi}(b_{2'})}\quad & \text{(without loss of generality $b_{2i}=b_{2'}$)}\nonumber\\
\end{eqnarray}
This completes the induction proof.
  \end{itemize}
\end{itemize}
We have shown that for every finite policy $\pi$ we can define a finite policy $\pi'$ which satisfies $C^{\pi}(b_2)=C^{\pi'}(b_1)$. Since the optimal policy is also finite, the equation $C^*(b_2)= C^{\pi'}(b_1)$ holds. 
Thus, in general $C^*(b_2)\geq C^*(b_1)$.
\end{proof}

In figure \ref{fig:ContBfig}, we demonstrate the ``simulation'' of policy $\pi'$ presented in theorem \ref{theorem:MoreBlocked}. Here, we are given a graph G=(V,E), where $V=\{s,v_1,v_2,v_3,t\}$ (abusing notation, we denote one vertex as s, and one as t), \newline $E=\{(s,v_1),(v_1,v_2),(v_1,v_3),(v_2,v_3),(v_1,t),(v_2,t),(v_3,t)\}$, where w, which is noted with each edge in the figure, represents the edge weight. 
        In addition, two belief states $b_1,b_2\in B$ are given with the following forms:
\newline $\form{b_1}=\form{s,O_{(s,v_1)},B_{(v_1,v_2)},O_{(v_1,v_3)},O_{(v_2,v_3)},O_{(v_1,t)},O_{(v_2,t)},U_{(v_3,t)}}$
\newline $\form{b_2}=\form{s,O_{(s,v_1)},B_{(v_1,v_2}),O_{(v_1,v_3)},O_{(v_2,v_3)},O_{(v_1,t)},O_{(v_2,t)},B_{(v_3,t)}}$.
\newline  On the upper left of the figure, the edges status are based on $b_1$ and on the upper right the edges status are based on  $b_2$ , where the green lines represent open edges, black lines represent blocked edges, and red lines represent unknown edges. Notice that $b_1,b_2$ satisfy $MoreBlocked(b_1,b_2)$ and thus $b_2=BlockEdges(b_1)$. The lower figures represent the execution of policy $\pi'$ on $b_1$ where $\pi'(b_1)=\pi(b_2))$ and $\pi^*$ on $b_2$. We see the equivalence between the policies(the same sequence of actions). Notice that agent acting according to $\pi'$(as shown by the doted line), does not perform the action $move(v_1,t)$ although it is optimal, since $\pi'$ treats all edges in $Blocked(b_2)$ as blocked in $b_1$(edge $(v_1,t)$ in this figure)

\begin{figure}[h!]
        \centering
        \centerline{\includegraphics[width=3.5in,height=2.5in]{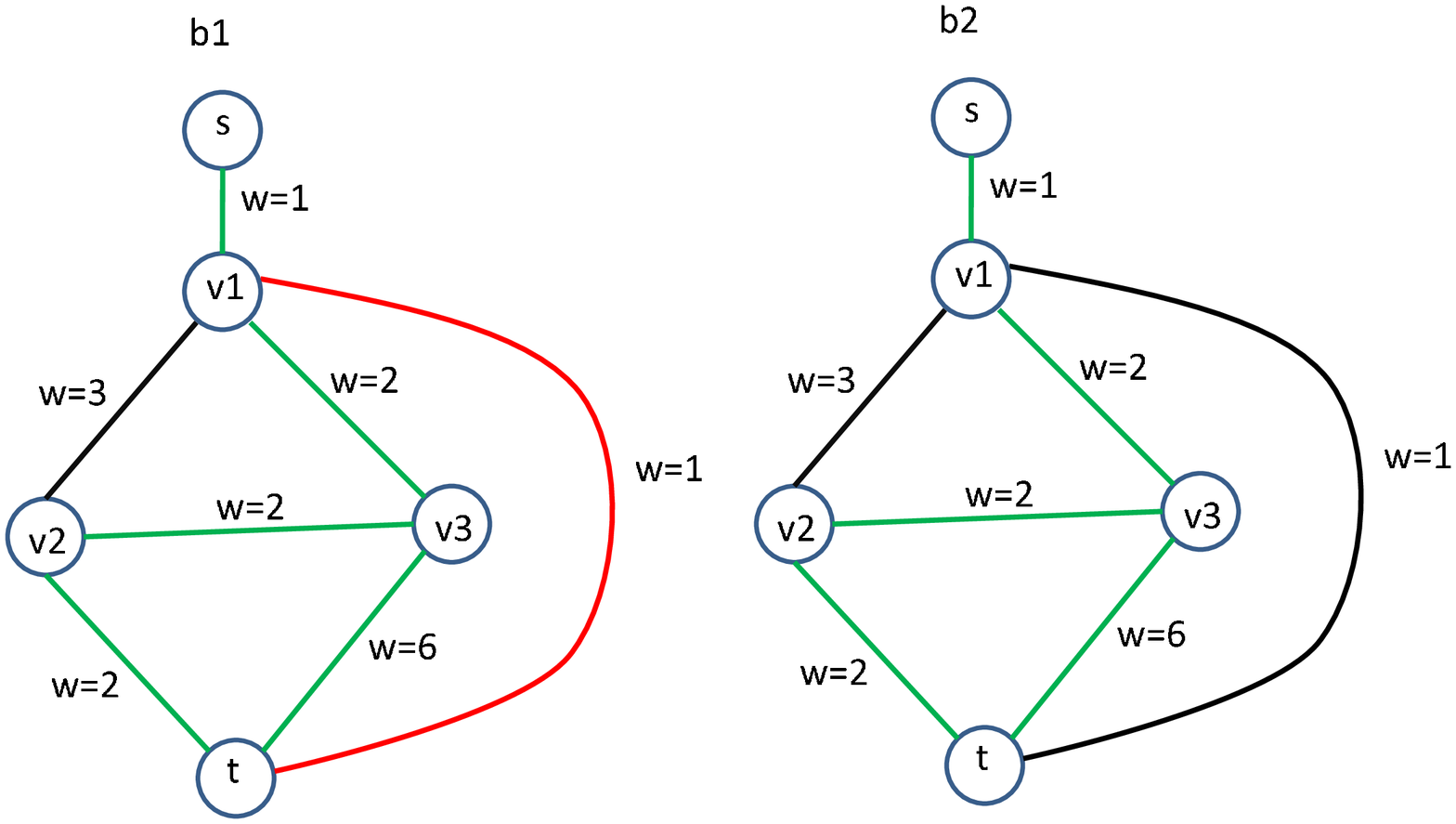}}
        \centerline{\includegraphics[width=3.5in,height=2.5in]{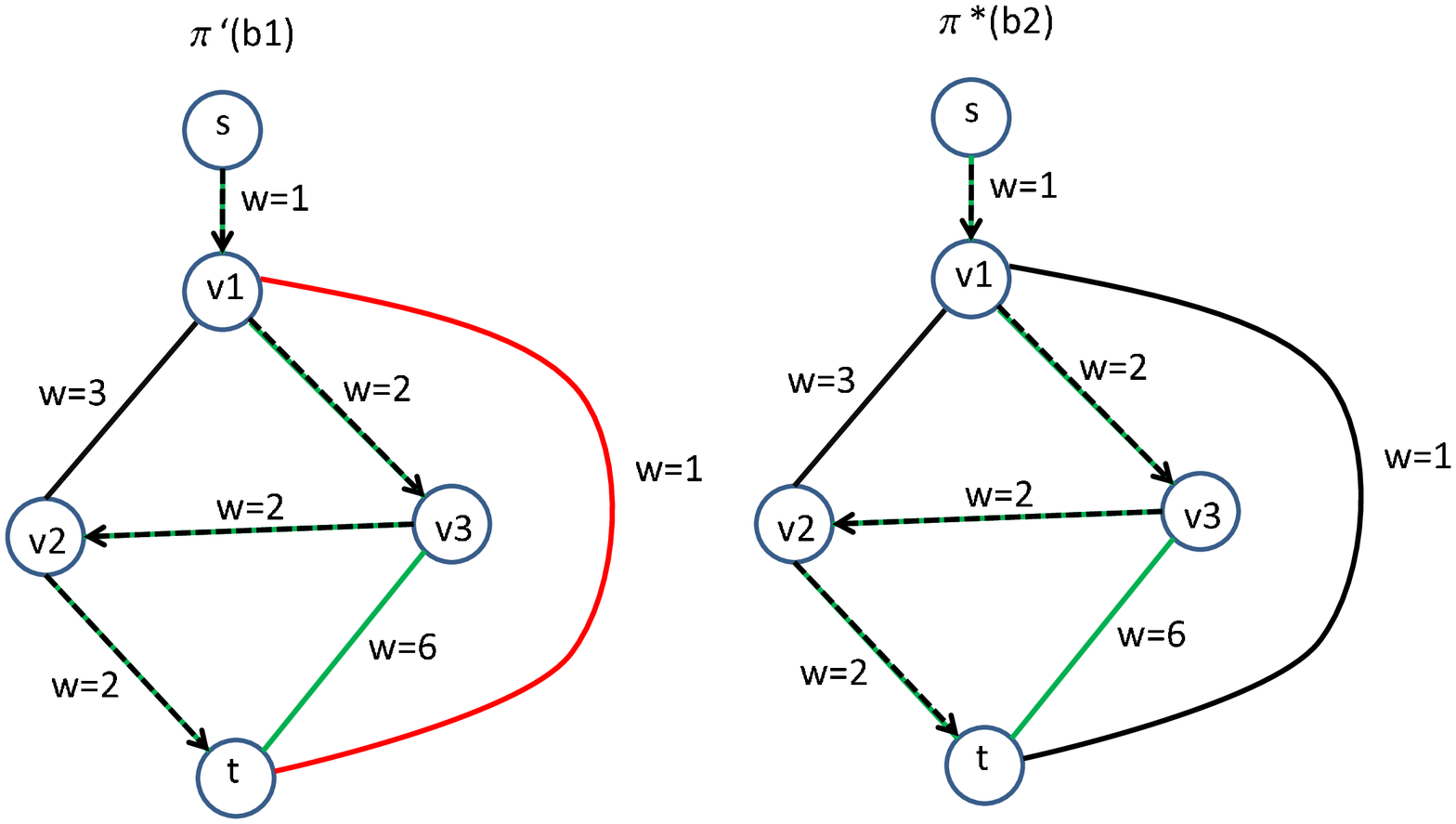}}
        \caption{Demonstration of SIMBlocked}
        \label{fig:ContBfig}
\end{figure} 

\begin{cor}\label{corMoreBlocked}
Suppose that $MoreBlocked(b_1,b_2)$ is true for $b_1,b_2\in B$, then if $h(b_1)$ is an admissible heuristic(optimistic) of $b_1$, then $h(b_1)$ is an admissible heuristic of $b_2$ as well.
\end{cor}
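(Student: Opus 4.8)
The plan is to obtain this as an immediate consequence of Theorem~\ref{theorem:MoreBlocked}. First I would fix the convention: since the belief-MDP $M_S$ for CTP is a cost-minimization problem, a heuristic $h$ is \emph{admissible (optimistic)} at a belief state $b$ precisely when $h(b)\le C^*(b)$, i.e.\ it is a lower bound on the optimal expected remaining cost. With this reading, the hypothesis ``$h(b_1)$ is admissible for $b_1$'' means $h(b_1)\le C^*(b_1)$, and the goal is to establish $h(b_1)\le C^*(b_2)$.

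The single substantive step is to invoke Theorem~\ref{theorem:MoreBlocked}: because $MoreBlocked(b_1,b_2)$ holds, we have $C^*(b_2)\ge C^*(b_1)$. Chaining this with the hypothesis gives $h(b_1)\le C^*(b_1)\le C^*(b_2)$, which is exactly the assertion that $h(b_1)$ is an admissible (optimistic) estimate of $C^*(b_2)$. That completes the argument.

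There is essentially no obstacle here; all the real work lives inside Theorem~\ref{theorem:MoreBlocked}, whose proof constructs, for every finite policy $\pi$ at $b_2$, a finite policy $\pi'$ at $b_1$ of equal cost (the $SimBlocked$-style simulation that never traverses the extra blocked edges), and then uses finiteness of the optimal policy to pass from policies to $C^*$. The only point requiring care is keeping the inequality in the correct direction and being explicit that ``optimistic'' for a minimization problem means a lower bound, rather than an upper bound; once that is pinned down the corollary is a one-line deduction. I would also add a closing remark that this is the property that licenses the corresponding pruning rule in Gen-PAO: a heuristic value already available at a ``less blocked'' belief state may be reused verbatim as an admissible bound at any ``more blocked'' belief state, without recomputation.
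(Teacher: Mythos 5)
Your proposal is correct and matches the paper's argument, which simply states that the corollary follows from Theorem~\ref{theorem:MoreBlocked}; you have merely made explicit the chain $h(b_1)\le C^*(b_1)\le C^*(b_2)$ and the lower-bound reading of ``admissible,'' both of which are the intended content. No issues.
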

\begin{proof}
Follows from theorem \ref{theorem:MoreBlocked}
\end{proof}

\newpage

In the following statement, we use theorem \ref{theorem:MoreBlocked} to show that if $MoreOpen(b_1,b_2)$ then  $C^*(b_2)$ is a lower bound of $C^*(b_1)$.

\begin{lemma}\label{MoreOpenTheorem}
Let $b,b_{open}\in B$ such that $MoreOpen(b,b_{open})$. Then, $C^*(b)\geq C^*(b_{open})$. 
\end{lemma}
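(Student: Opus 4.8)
The plan is to induct on the size of the set $\hat{E} := Open(b_{open}) \setminus Open(b)$, i.e.\ the edges that are unknown in $b$ but known to be open in $b_{open}$. From $MoreOpen(b,b_{open})$ and Definition~\ref{MoreOpen} one first checks the routine set-algebra: $Loc(b)=Loc(b_{open})$, $Blocked(b)=Blocked(b_{open})$, $\hat{E}\subseteq Unknown(b)$, and $b_{open}=OpenEdges(\hat{E},b)$. The base case $\hat{E}=\emptyset$ forces $b=b_{open}$ and is trivial. For the inductive step, pick $\hat{e}\in\hat{E}$, write $p:=p(\hat{e})$ (we may assume $p<1$, since otherwise $\hat{e}$ is never open and the belief state $b_{open}$ does not arise), and set $b_O:=OpenEdges(\{\hat{e}\},b)$ and $b_B:=BlockEdges(\{\hat{e}\},b)$.

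The heart of the argument is a conditioning inequality:
\[
C^*(b)\ \ge\ (1-p)\,C^*(b_O)+p\,C^*(b_B).
\]
To prove it I would take an optimal finite policy $\pi^*$ for $b$ (which exists and is finite by the discussion preceding Definition~\ref{MoreBlocked}) and split $C^*(b)=C^{\pi^*}(b)$ according to the realized status of the independent unknown edge $\hat{e}$, obtaining $C^{\pi^*}(b)=(1-p)c_O+p\,c_B$, where $c_O$ (resp.\ $c_B$) is the expected cost of $\pi^*$ conditioned on $\hat{e}$ being open (resp.\ blocked). The policy obtained by running $\pi^*$ from $b_O$ while ignoring the extra initial knowledge that $\hat{e}$ is open is a legitimate finite policy for $b_O$, and its expected cost equals $c_O$ exactly (the bookkeeping identifying $E_{r\sim b}[\,\cdot\mid\hat{e}\ \text{open}\,]$ with $E_{r\sim b_O}[\,\cdot\,]$ uses that $\hat{e}$ is probabilistically independent of the other edges); hence $c_O\ge C^*(b_O)$, and symmetrically $c_B\ge C^*(b_B)$, which gives the displayed inequality.

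Now I would invoke Theorem~\ref{theorem:MoreBlocked}. Since $stb(\hat{e},b)=Unknown$, $stb(\hat{e},b_B)=Blocked$, and all other edge statuses agree, $MoreBlocked(b,b_B)$ holds, so $C^*(b_B)\ge C^*(b)$. Moreover $MoreOpen(b_O,b_{open})$ holds with $Open(b_{open})\setminus Open(b_O)=\hat{E}\setminus\{\hat{e}\}$, a set of size $|\hat{E}|-1$, so the induction hypothesis gives $C^*(b_O)\ge C^*(b_{open})$. Substituting both bounds,
\[
C^*(b)\ \ge\ (1-p)\,C^*(b_O)+p\,C^*(b_B)\ \ge\ (1-p)\,C^*(b_{open})+p\,C^*(b),
\]
and rearranging (using $p<1$) yields $C^*(b)\ge C^*(b_{open})$, completing the induction.

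I expect the only real obstacle to be the rigorous justification of the conditioning inequality: one must define the transplanted policy $\pi^*$-from-$b_O$ precisely as a genuine mapping on the belief states reachable from $b_O$ (obtained by forgetting the status of $\hat{e}$ until $\pi^*$ would have locally re-observed it), confirm that it stays finite, and verify the probability identity $E_{r\sim b}[\,\mathrm{cost}(\pi^*,r)\mid\hat{e}\ \text{open}\,]=E_{r\sim b_O}[\,\mathrm{cost}(\pi^*\text{-from-}b_O,r)\,]$ with care. Everything else reduces to the predicate/function set-algebra checks on $MoreOpen$, $MoreBlocked$, $OpenEdges$, $BlockEdges$ indicated above, together with the single application of Theorem~\ref{theorem:MoreBlocked}.
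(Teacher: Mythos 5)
Your proof is correct and follows essentially the same route as the paper's: condition on the status of a single unknown edge $\hat{e}$, bound the blocked branch via Theorem \ref{theorem:MoreBlocked}, and rearrange algebraically, with the multi-edge case handled by induction. Your version is in fact slightly more careful than the paper's: the paper asserts the conditioning step as an exact equality $C^*(b)=p(\hat{e})\,C^*(b_{blocked})+(1-p(\hat{e}))\,C^*(b_{open})$ (whose $\leq$ direction is questionable, since an agent in $b$ cannot act on knowledge of $\hat{e}$'s status), whereas you prove only the $\geq$ inequality that is actually needed and you make the induction on $|\hat{E}|$ explicit where the paper merely says it is ``trivial.''
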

\begin{proof}
Let $b,b_{open},b_{blocked}\in B$ such that $b,b_{open},b_{blocked}$ differs only by the status of edge $\hat{e}$, where $\hat{e}$ is Unknown in $b$, Open in $b_{open}$ and Blocked in $b_{blocked}$.
We prove that $C^*(b)\geq C^*(b_{open})$. 
\begin{itemize}
\item By the law of total probability we can express $C^*(b_1)$ as follows:
\begin{equation}\label{meanP}
C^*(b_1)=P(\hat{e})C^*(b_{blocked})+(1-P(\hat{e}))C^*(b_{open})
\end{equation}
From lemma \ref{MoreBlocked} we get that $C^*(b_1)\leq C^*(b_{blocked})$. Thus, there is $R \geq 0$ such that, 
\[
C^*(b_{blocked})=C^*(b_1)+R
\]
We can express equation \ref{meanP} as follows:
\[
C^*(b_1)=p(\hat{e})(C^*(b_1)+R)+(1-p(\hat{e}))C^*(b_{open})
\]
Substracting $p(\hat{e})C^*(b_1)$ from both sides and then dividing both sides by $(1-p(\hat{e}))$, we get:
\[
C^*(b_1)=R\frac{p(\hat{e})}{1-p(\hat{e})}+C^*(b_{open})
\]  
Since $R(\frac{p(\hat{e})}{1-p(\hat{e})})\geq 0$ we get 
\[
C^*(b_1)\geq C^*(b_{open})
\]
Trivially, it can be shown by induction that $C^*(b)\geq C^*(b_{open})$, for any set of edges $\hat{E}$ such that edges in $\hat{E}$ are unknown in $b$ and open in $b_{open}$.  
\end{itemize}
\end{proof}

\begin{cor}\label{corMoreOpen}
Suppose that $b_1,b_2$ satisfy $MoreOpen(b_1,b_2)$, then if $h(b_2)$ is an admissible heuristic(optimistic) of $b_2$, then $h(b_2)$ is an admissible heuristic of $b_1$ as well.
\end{cor}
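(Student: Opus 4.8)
The plan is to derive this corollary directly from Lemma \ref{MoreOpenTheorem}, exactly in the spirit of the proof of Corollary \ref{corMoreBlocked}. Recall that an admissible (optimistic) heuristic $h$ for a belief state $b$ is by definition one satisfying $h(b) \le C^*(b)$, i.e.\ it never overestimates the true optimal expected cost. So the only fact I need is that passing from $b_2$ to $b_1$ cannot decrease the optimal expected cost, which is precisely what Lemma \ref{MoreOpenTheorem} supplies under the hypothesis $MoreOpen(b_1,b_2)$.

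First I would invoke Lemma \ref{MoreOpenTheorem} with the roles $b := b_1$ and $b_{open} := b_2$: since $MoreOpen(b_1,b_2)$ holds by hypothesis, the lemma gives $C^*(b_1) \ge C^*(b_2)$. Next, by the assumed admissibility of $h(b_2)$ as a heuristic for $b_2$, we have $h(b_2) \le C^*(b_2)$. Chaining the two inequalities yields $h(b_2) \le C^*(b_2) \le C^*(b_1)$, hence $h(b_2) \le C^*(b_1)$, which is exactly the statement that $h(b_2)$ is an admissible heuristic for $b_1$.

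There is essentially no obstacle here: the substantive work is already carried out in Lemma \ref{MoreOpenTheorem} (and, beneath it, in Theorem \ref{theorem:MoreBlocked}), and what remains is a one-line chain of inequalities using the definition of admissibility. The only point requiring care is bookkeeping the direction of the $MoreOpen$ relation --- making sure that $b_2$ is the ``more open'' state, so that $C^*(b_2)$ is the \emph{smaller} of the two optimal costs and therefore it is a lower bound for $b_2$ that safely transfers \emph{up} to $b_1$. As a sanity check one can note the duality with Corollary \ref{corMoreBlocked}, where a heuristic admissible for the less-blocked (cheaper) state transfers up to the more-blocked state; here, symmetrically, a heuristic admissible for the more-open (cheaper) state transfers up to the less-open state.
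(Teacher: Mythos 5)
Your proof is correct and follows the same route as the paper, which simply states that the corollary follows immediately from Lemma \ref{MoreOpenTheorem}; you have merely made explicit the chain $h(b_2)\le C^*(b_2)\le C^*(b_1)$ and correctly identified the roles $b:=b_1$, $b_{open}:=b_2$ in the lemma.
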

\begin{proof}
Follows immediately from lemma \ref{MoreOpenTheorem}
\end{proof}

In the rest of the section we provide some new definitions and lemmas in order to prove another lower bound to the cost of optimal policy on belief state $b_1$ by a cost of the optimal policy on another belief state $b_2$ where, in contrast to the previous lemmas, the locations of the agents in $b_1$ and in $b_2$ are different.

\begin{defn}\label{DiffLoc}
Let $b_1,b_2\in B$. We define the predicate $DiffLoc(b_1,b_2)$ to be true if and only if $Loc(b_1)\neq Loc(b_2)$ and for every edge $e\in E$ $stb(e,b_1)=stb(e,b_2)$.
\end{defn}


\begin{defn}
Define the set $D_B$ to be the set of all pair $b_1,b_2\in B$ such that $DiffLoc(b_1,b_2)$. Meaning $D_B=\setm{<b_1,b_2>}{b_1,b_2\in B, DiffLoc(b_1,b_2)}$. We call $D_B$ the \textit{DiffLoc} of B.
\end{defn}

\begin{defn}
Define the set $Open_b\subseteq E$ to be the set of all edges that are known to be open in belief state $b$. Meaning $E_b=\setm{e}{e\in E,b\in B,stb(e,b)=Open}$  
\end{defn}

\begin{defn}
Let $\cal{P}$ be the set of all paths in G and let $D_B$ be the \textit{DiffLoc} of B. We define the function $shortestPath:D_B\rightarrow \cal{P}$ such that for $<b_1,b_2>\in D_B$ $p=shortestPath(<b_1,b_2>)$ is the shortest path between $v_1=Loc(b_1)$ and $v_2=Loc(b_2)$ in graph $G'=(V,E_{b_1})$.(Note that $G'=(V,E_{b_1}$ is a subgraph of G=(V,E) since $E_{b_1}\subseteq E$)
\end{defn}
Note that $E_{b_1}=E_{b_2}$ for every $<b_1,b_2>\in D_B$, since the status of edges specified by $b_1$ and $b_2$ are equal.

\begin{defn}
Let $\cal{P}$ be the set of all paths in G. We define a path cost function $C_P:\cal{P}\rightarrow \cal{R}$ as follows: Let $p$ be a path, then $C_P(p)=\sum_{e \in p}{c(e)}$.
\end{defn}

\begin{definition}
We define the set $KE_b\subseteq E$ to be the set of all known edges in belief state $b$. Meaning $KE_b=\setm{e}{e\in E,b\in B,stb(e,b)=Open\vee stb(e,b)=Blocked}$. $KE_b$ is called the knowledge in b.
\end{definition}

\begin{lemma}
The value of information in the Canadian Traveler Problem is never less than zero.
\end{lemma}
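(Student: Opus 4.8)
First I would pin down what ``value of information'' means here. Fix a belief state $b$ and a set $\hat{E}\subseteq Unknown(b)$ of edges whose status is to be revealed to the agent for free; revealing $\hat{E}$ leaves the agent in the belief state $b_r$ obtained from $b$ by setting the edges of $\hat{E}$ to the statuses prescribed by $r\in\{Open,Blocked\}^{\hat{E}}$, which happens with probability $P_r=\prod_{e:\,r_e=Blocked}p(e)\prod_{e:\,r_e=Open}(1-p(e))$. Define the value of information of $\hat{E}$ at $b$ as $VOI(\hat{E},b)=C^{*}(b)-\sum_{r}P_r\,C^{*}(b_r)$, the expected reduction in optimal cost-to-go produced by the free revelation. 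Since a $Sense(e)$ action reveals exactly one currently-unknown edge, the lemma is the case $|\hat{E}|=1$, and a trivial induction on $|\hat{E}|$ reduces the general claim to that case. So fix a single unknown edge $\hat{e}$, write $p=P_b(\hat{e},b)$, $b_{o}=OpenEdges(\{\hat{e}\},b)$, $b_{x}=BlockEdges(\{\hat{e}\},b)$; the goal is
\[
C^{*}(b)\;\ge\;p\,C^{*}(b_{x})+(1-p)\,C^{*}(b_{o}).
\]

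\textbf{Passing to realizations.} The plan is to use the ``fixed Canadian map'' view. A finite policy $\pi$ together with a full status assignment $\omega$ consistent with $b$ (agreeing with $b$ on $Open(b)$ and $Blocked(b)$) executes deterministically, yielding a well-defined total cost $cost(\pi,\omega)$; unrolling the Bellman equation~(\ref{1}) down the acyclic AO-graph of $\pi$, together with the form of $Tr$ and of $R$ (equation~(\ref{eq:rewardFunction})), gives $C^{\pi}(b)=\sum_{\omega}\Pr_b[\omega]\,cost(\pi,\omega)$, where $\Pr_b[\omega]=\prod_{e\in Unknown(b),\,\omega_e=Blocked}p(e)\prod_{e\in Unknown(b),\,\omega_e=Open}(1-p(e))$. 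Applying this to the optimal finite policy $\pi^{*}$ of $b$ and splitting realizations on the status of $\hat{e}$,
\[
C^{*}(b)=C^{\pi^{*}}(b)=p\cdot\mathbb{E}_{\omega}\!\left[cost(\pi^{*},\omega)\mid\omega_{\hat{e}}=Blocked\right]+(1-p)\cdot\mathbb{E}_{\omega}\!\left[cost(\pi^{*},\omega)\mid\omega_{\hat{e}}=Open\right].
\]

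\textbf{Running $\pi^{*}$ from the revealed states.} The heart of the argument is to show $\mathbb{E}_{\omega}[cost(\pi^{*},\omega)\mid\omega_{\hat{e}}=Open]\ge C^{*}(b_{o})$, and symmetrically for $Blocked$. From $\pi^{*}$ I would build a finite policy $\pi_{o}$ valid from $b_{o}$, by a simulation device of the same flavour as $SimBlocked$ in the proof of Theorem~\ref{theorem:MoreBlocked}: $\pi_{o}$ runs a private copy of $\pi^{*}$, feeding it the current belief state with $\hat{e}$ ``hidden'' (reset to $Unknown$) for as long as the execution has not physically observed $\hat{e}$; the moment the execution reaches an endpoint of $\hat{e}$ or senses it, the private belief state records $\hat{e}=Open$ exactly as the true one does, and the two remain identical thereafter, so $\pi_{o}$ is well defined and finite. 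On any realization $\omega$ with $\omega_{\hat{e}}=Open$, the execution of $\pi_{o}$ from $b_{o}$ then follows the very same action sequence that $\pi^{*}$ from $b$ follows on $\omega$, so $cost(\pi_{o},\omega)=cost(\pi^{*},\omega)$; averaging gives $C^{\pi_{o}}(b_{o})=\mathbb{E}_{\omega}[cost(\pi^{*},\omega)\mid\omega_{\hat{e}}=Open]$, and optimality gives $C^{*}(b_{o})\le C^{\pi_{o}}(b_{o})$. The analogous construction yields $C^{*}(b_{x})\le\mathbb{E}_{\omega}[cost(\pi^{*},\omega)\mid\omega_{\hat{e}}=Blocked]$. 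Substituting both inequalities into the displayed identity gives $C^{*}(b)\ge p\,C^{*}(b_{x})+(1-p)\,C^{*}(b_{o})$, i.e.\ $VOI(\hat{e},b)\ge 0$, and the induction on $|\hat{E}|$ finishes the proof.

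\textbf{The main obstacle.} The one genuinely delicate point is the hidden-$\hat{e}$ simulation: I must verify that $\pi_{o}$ is a bona fide finite policy on every belief state reachable from $b_{o}$, and, crucially, that on each $\hat{e}$-open realization its trajectory really coincides with $\pi^{*}$'s --- which comes down to the fact that $\pi^{*}$ never acts on information about $\hat{e}$ before it would physically observe $\hat{e}$, a property of the local-sensing (Det-POMDP) structure of CTP already used in the $MoreBlocked$/$MoreOpen$ lemmas. Everything else is bookkeeping with the Bellman unrolling and the optimality inequalities. I would also note that this argument yields only the inequality $C^{*}(b)\ge p\,C^{*}(b_{x})+(1-p)\,C^{*}(b_{o})$: the corresponding equality is false in general, since information can have strictly positive value, so the ``law of total probability'' step in the proof of Lemma~\ref{MoreOpenTheorem} is to be understood as this $\ge$ direction.
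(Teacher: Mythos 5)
Your proposal is correct and follows essentially the same route as the paper's own proof: simulate the optimal policy of the less-informed belief state on the more-informed one by hiding/ignoring the revealed edge, note that on each fixed realization the two executions produce identical action sequences and costs, and then invoke optimality of $C^*$ at the revealed states. Your write-up is simply a more careful version of that argument --- in particular, you correctly average over the two possible sense outcomes $b_o, b_x$ with weights $1-p, p$, a step the paper compresses into the single (slightly imprecise) assertion $C^{\pi^*_{b_1}}(b_2)=C^*(b_1)$.
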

\begin{proof}
Let $b_1,b_2\in B$ such that $b_2$ is reached from $b_1$ immediately after performing SENSE(e) (Suppose hypothetically that an agent in $b_1$ is allowed to perform action SENSE(e) once, on any edge $e\in E$, with no cost) and we get $KE_{b_1}\subseteq KE_{b_2}$. Hence, this lemma is true if and only if $C^*(b_1)\geq C^*(b_2)$ (by definition of value of information). Since $KE_{b_1}\subseteq KE_{b_2}$, we can simulate any policy of $b_1$ on $b_2$ by ``ignoring'' the information received from SENSE(e) in $b_2$ and in particular the optimal policy $\pi^*$. 
Therefore, we can define the policy $\pi_{b_1}^*$ such that $\pi_{b_1}^*(b_2')=\pi^*(b_1')$ for every belief state $b_1'$ reachable from $b_1$ and belief state $b_2'=b_{{1'}_e=st_e}$ where $st_e=SENSE(e)$.  Since $b_1$ and $b_2$ are referred to the same physical environment, the execution of $\pi^*$ on $b_1$ will be equal to the execution of $\pi_{b_1}^*$ on $b_2$(will produce the same sequence of actions). Hence, $C^{\pi^*_{b_1}}(b_2)= C^*(b_1)$, and in general $C^*(b_1)\geq C^*(b_2)$.
\end{proof}

\begin{lemma}\label{diffLoc}
Let $b_1,b_2\in B$ such that $<b_1,b_2>\in D_B $, then $C^*(b_1)+C_p(shortestPath(<b1,b2>))\geq C^*(b_2)$.
\end{lemma}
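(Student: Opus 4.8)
The plan is to exhibit, for an agent starting in $b_2$, an explicit (not necessarily optimal) policy $\pi$ whose expected cost equals $C^*(b_1)+C_p(\mathrm{shortestPath}(\langle b_1,b_2\rangle))$; since $C^*(b_2)$ is by definition the minimum expected cost over all policies from $b_2$, the claimed inequality is then immediate.

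First I would fix $p=\mathrm{shortestPath}(\langle b_1,b_2\rangle)$, the shortest path from $v_2=Loc(b_2)$ to $v_1=Loc(b_1)$ inside $G'=(V,E_{b_1})$. Since $DiffLoc(b_1,b_2)$ gives $stb(e,b_1)=stb(e,b_2)$ for every edge $e$, we have $E_{b_1}=E_{b_2}$, so every edge on $p$ is known to be open in $b_2$ as well; hence each $Move$ along $p$ is performable, and because the environment is static these edges stay open throughout the traversal. (Existence of $p$ is guaranteed because $v_1$ and $v_2$ are each reachable from $s$ through known-open edges, so they lie in the same component of $G'$.)

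Next I would build $\pi$ in two phases. Phase 1: traverse the edges of $p$ one at a time until the agent is located at $v_1$; this incurs travel cost exactly $C_p(p)$, and each intermediate move can only reveal additional edges, so any belief state $\tilde b$ reached at $v_1$ satisfies $KE_{b_1}=KE_{b_2}\subseteq KE_{\tilde b}$. Phase 2: from $\tilde b$ onward, run the ``ignore-the-extra-knowledge'' simulation of the optimal policy $\pi^*$ of $b_1$ — exactly the construction $\pi^*_{b_1}$ used in the proof that the value of information is non-negative. Because $b_1$ and $\tilde b$ (and all belief states reachable from them) refer to the same physical environment, and $\pi$ in Phase 2 discards any information absent from the corresponding $b_1$-belief, the Phase-2 execution produces the same action sequence as the execution of $\pi^*$ from $b_1$, so its expected continuation cost is exactly $C^*(b_1)$. (If one prefers not to re-derive the simulation, the same bound follows from the value-of-information lemma applied to each possible $\tilde b$: $Loc(\tilde b)=Loc(b_1)$ and $KE_{b_1}\subseteq KE_{\tilde b}$ give $C^*(\tilde b)\le C^*(b_1)$, whence $\sum_{\tilde b}\Pr(\tilde b\mid b_2,p)\,C^*(\tilde b)\le C^*(b_1)$.)

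Combining the two phases, $C^{\pi}(b_2)=C_p(p)+C^*(b_1)$ (or $\le$, in the iterated-lemma version), so $C^*(b_2)\le C^{\pi}(b_2)=C^*(b_1)+C_p(\mathrm{shortestPath}(\langle b_1,b_2\rangle))$, which is the statement. The one genuine subtlety — the main obstacle — is the bookkeeping for Phase 1: traversing $p$ is not information-free, so one must argue the extra knowledge is harmless, and that is precisely what the preceding value-of-information lemma provides; the rest is routine.
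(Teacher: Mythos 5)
Your proposal is correct and follows essentially the same route as the paper: both construct the two-phase policy (traverse $shortestPath(\langle b_1,b_2\rangle)$ to $Loc(b_1)$, then continue optimally) and both invoke the value-of-information lemma to argue that the extra knowledge $KE_{b_1}\subseteq KE_{\tilde b}$ gained during the traversal can only help, giving $C^*(\tilde b)\le C^*(b_1)$. The only cosmetic difference is that the paper phrases the final step as a proof by contradiction against the optimality of $\pi^*$ on $b_2$, whereas you conclude directly from $C^*(b_2)\le C^{\pi}(b_2)$; the substance is identical.
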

\textbf{Proof outline:} In the next lemma we show that $C^*(b_1)+C_p(shortestPath(<b1,b2>))\geq C^*(b_2)$. This gives us a lower bound to $C^*(b_1)$ since $C^*(b_1)\geq C^*(b_2) - C_p(shortestPath(<b1,b2>))$. We show this by
defining a policy $\hat{\pi}$ such that when executing $\hat{\pi}$ on $b_2$ we have the following: An agent under $\hat{\pi}(b_2)$ moves through the shortest path(under assumption that all unknown edges in $b_2$ are blocked) to the location referred by $b_1$ $(Loc(b_1))$, reaching belief state b', and then under $\hat{\pi}(b')$ the agent is followed by the execution of the optimal policy $\pi^*$. 

\begin{proof}
Assume(by negation) that 
\begin{equation}\label{negAssumption}
C^*(b_1)+C_p(shortestPath(<b1,b2>))< C^*(b_2).
\end{equation} 
Let $v_1=Loc(b_1)$ and $v_2=Loc(b_2)$.
We define a new policy $\hat{\pi}$ such that executing it on $b_2$ gives the following:
\begin{enumerate}
\item An agent under $\hat{\pi}(b')$ traverses the path $p=shortestPath(<b1,b2>)$ straightforward from $v_2$ to $v_1$(which is always possible since all edges in p are open).  Let $b'$ be the belief state that the agent reaches when arriving $v_1$.
\item Immediately after reaching $b'$, the agent under $\hat{\pi}(b')$ acts according to the optimal policy until reaching t. Meaning, 
for any belief state b'' reachable from $b'$ $\hat{\pi}(b'')=\pi^*(b'')$.
\end{enumerate}
Clearly, 
\begin{equation}\label{diff1}
C^{\hat{\pi}}(b_2)=\overbrace{C_p(shortestPath(<b1,b2>))}^{Cost\; of\; traversing\; p}+\overbrace{C^*(b')}^{Exp.\;cost\;of\;\pi^*\;on\;b'}
\end{equation}
We claim that $KE_{b_1}\subseteq KE_{b'}$. This result from:
\begin{enumerate}
\item $KE_{b_1}=KE_{b_2}$. The knowledge in $b_1$ and $b_2$ is equal by definition of element pairs $<b_1,b_2>\in D_B$. 
\item $KE_{b_2}\subseteq KE_{b'}$. An agent in $b_2$ that follows the shortest path p  may obtain information if a vertex in path p(a vertex that is incident to two edges in p) is incident to an edge that has not been sensed yet.
 \end{enumerate}
 Since an agent A1 in $b_1$ and an agent A2 in $b'$ are at the same physical state $s\in S$, and the knowledge of A1 about s is a subseteq of the knowledge of A2 about s $(KE_{b_1}\subseteq KE_{b'})$ we get $C^*(b')\leq C^*(b_1)$(followed by the lemma of value of information).
 
Thus, 
\begin{equation}
C^*(b')+C_p(shortestPath(<b1,b2>))\leq C^*(b_1)+C_p(shortestPath(<b1,b2>))
\end{equation} 
 by equation \ref{diff1} we get,
 \begin{equation}
C^{\hat{\pi}}(b_2)\leq C^*(b_1)+C_p(shortestPath(<b1,b2>))
 \end{equation} 
Following assumption \ref{negAssumption} we get, 
\begin{equation}
C^{\hat{\pi}}(b_2)\leq C^*(b_2) 
\end{equation} 
which is a contradiction to the optimality of policy $\pi^*$. 
Hence, 
\begin{equation}
C^*(b_1)+C_p(shortestPath(<b1,b2>))\geq C^*(b_2)
\end{equation} 
\end{proof}

\begin{cor}\label{corDiffLoc}
Let $b_1,b_2\in B$ such that $<b_1,b_2>\in D_B $, then if $h(b_1)$ is a lower bound to $C^*(b_1)$, then $h(b_1)$ is a lower bound to $C^*(b_2)+C(shortestPath<b_1,b_2>)$ as well.
\end{cor}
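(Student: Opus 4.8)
Since this is stated as a corollary of Lemma~\ref{diffLoc}, the plan is to get it essentially for free by applying that lemma to the pair with its two arguments interchanged, and then chaining with the hypothesis on $h$. The first thing I would do is observe that the predicate $DiffLoc$ is symmetric: both clauses of Definition~\ref{DiffLoc} (namely $Loc(b_1)\neq Loc(b_2)$, and $stb(e,b_1)=stb(e,b_2)$ for every $e\in E$) are symmetric in $b_1,b_2$. Hence $\form{b_1,b_2}\in D_B$ implies $\form{b_2,b_1}\in D_B$, so Lemma~\ref{diffLoc} applies to $\form{b_2,b_1}$ and yields $C^*(b_2)+C_p(shortestPath(\form{b_2,b_1}))\geq C^*(b_1)$.

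Next I would argue that the shortest-path cost is insensitive to the order of the endpoints. As already noted in the excerpt (immediately after the definition of $shortestPath$), $E_{b_1}=E_{b_2}$ because $b_1$ and $b_2$ assign the same status to every edge; therefore the subgraph $G'=(V,E_{b_1})$ used to compute $shortestPath(\form{b_1,b_2})$ is literally the same undirected graph used to compute $shortestPath(\form{b_2,b_1})$. In an undirected graph a shortest path from $Loc(b_1)$ to $Loc(b_2)$ is also a shortest path in the reverse direction, so $C_p(shortestPath(\form{b_1,b_2}))=C_p(shortestPath(\form{b_2,b_1}))$. Substituting this into the inequality from the previous paragraph gives $C^*(b_1)\leq C^*(b_2)+C_p(shortestPath(\form{b_1,b_2}))$.

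Finally, combining with the hypothesis $h(b_1)\leq C^*(b_1)$ produces $h(b_1)\leq C^*(b_2)+C_p(shortestPath(\form{b_1,b_2}))$, which is exactly the claim. I do not expect a real obstacle here — all the work is already carried by Lemma~\ref{diffLoc}; the only point that genuinely needs a line of justification is that the lemma's single hypothesis ($\form{b_1,b_2}\in D_B$) survives swapping the two belief states, and that the path-cost term is unchanged by the swap, both of which follow from the symmetry of $DiffLoc$ and the fact that $G'$ is undirected.
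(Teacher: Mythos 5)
Your proposal is correct and matches the paper's approach: the paper's own proof of this corollary is just the single line ``Follows immediately from lemma \ref{diffLoc}.'' In fact your write-up is more careful than the paper's, since you make explicit the two details the paper silently relies on --- that Lemma \ref{diffLoc} must be applied to the swapped pair $\form{b_2,b_1}$ (the direct application only bounds $C^*(b_2)$, not $C^*(b_1)$), and that this is legitimate because $DiffLoc$ is symmetric and the shortest-path cost in the undirected subgraph $G'=(V,E_{b_1})=(V,E_{b_2})$ is the same in either direction.
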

\begin{proof}
Follows immediately from lemma \ref{diffLoc}
\end{proof}

We want to define two relations which will be used in the next section:

%
%
%
%
%
\chapter{Generalizing PAO*}
\section{General Propagation AO*}\label{sec:Gen-PAO}
In many cases, PAO* lowers dramatically the running time by reducing the state space, however, it assumes that each vertex is connected to at most {\em one} unknown edge, such that each AND node in the AND/OR graph has at most two successors.
We present the {\em generalized propagation AO*} algorithm  (Gen-PAO in short), a generalization of PAO*, which does not assume any preknown knowledge of the graph (except the edges incident to $s$ which are always defined as Open). 
Gen-PAO solves the Sensing-CTP as well. Each sensing action is associated with a sensing AND node, where each sense node has only two children nodes for the two possible statuses of the sensed edge (Open/Blocked). This variant is extremely harder than the basic CTP since the agent can sense any unknown edge in any state and hence, the branching factor of the OR nodes is significantly larger.  
\subsection{Gen-PAO Heuristics}
Similarly to AO* and PAO*, each iteration of Gen-PAO is based on two phases: Expansion and Propagation. Gen-PAO differs from AO* and PAO* only in the Propagation phase (i.e, the Main and Expand method as presented in algorithm \ref{alg1} are part of Gen-PAO as well). However, in the propagation phase, Gen-PAO propagates the heuristic values not only upwards to the ancestors as AO*, but to the entire state space, incorporating three novel heuristics: HBlocked, HOpen, and HDiffLoc (line 13). The heuristic HBlocked is based on the predicate MoreBlocked (definition $\ref{MoreBlocked}$), HOpen is based on the predicate MoreOpen (definition $\ref{MoreOpen}$), and HDiffLoc is based on the predicate Diffloc(definition $\ref{DiffLoc}$).
Let $Z$ be the set of belief states that expanded by Gen-PAO and let $b\in Z$. The heuristics are defined as follows: 
 \begin{itemize}
\item HBlocked(b):\textit{ If there is a belief state $b'\in Z$ that satisfies $MoreBlocked(b,b')$	and $h(b) > h(b')$ then $h(b') \gets h(b)$}.
\item HOpen(b):\textit{ If there is a belief state $b'\in Z$ that satisfies $MoreOpen(b',b)$	and $h(b') < h(b)$ then  $h(b') \gets h(b)$}.
\item HDiffLoc(b):\textit{ If there is a belief state $b'\in Z$ that satisfies $DiffLoc(b,b')$	and $h(b') < h(b)-CSP(b,b')$ then  $h(b') \gets h(b)-CSP(b,b')$}, where CSP is the cost of the shortest path from b to b'.
\end{itemize}
\begin{figure}[h!]
        \centering
        \centerline{\includegraphics[width=3.5in,height=2.5in]{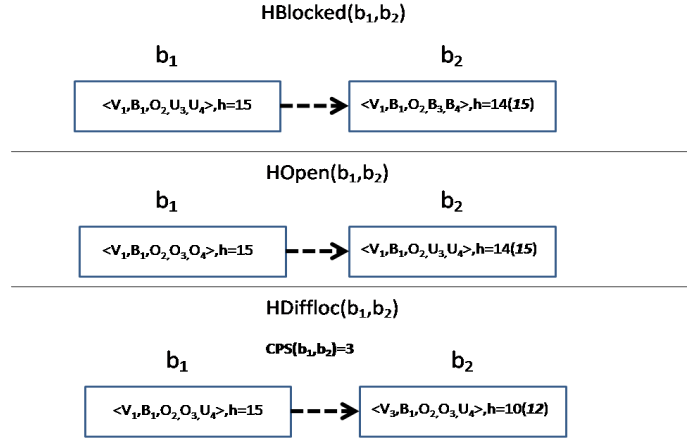}}
        \caption{Demonstration of the heuristic methods}
        \label{fig:heuristicMethods}
\end{figure} 
Belief states of which values are updated due to propagation from b are called {\em propagated belief state of b}.  
Notice that $HBlocked(b),HOpen(b)$, and $HDiffLoc(b)$ always raise up the heuristic value of the propagated belief states of b. However, 
due to corollaries $\ref{corMoreBlocked},\ref{corMoreOpen}$, and $\ref{corDiffLoc}$ respectively, heuristics $HBlocked(b),HOpen(b)$, and $HDiffLoc(b)$ are admissible, and thus they are upper bounded by $V^*(b)$.\newline
Figure \ref{fig:heuristicMethods} illustrate an update of belief state $b_2$ by the three heuristic methods (the the new heuristic values of $b_2$ are notified in parenthesis) 


\begin{algorithm}
\caption{Gen-PAO}
\label{alg2}
procedure Propagate(Graph $G_{AO}$,  NodeSet Z) 
\begin{algorithmic}[1]
\WHILE{$Z\neq \phi$}
\STATE select $z_i\in Z$ such that $z_i$ has no children in Z;
\IF {$z_i$ is AND node}
\STATE $ f(z_i)\gets \sum\limits_{\substack{z_j\in successor(z_i)\\ a=(z_i,z_j)}}[tr(z_i,a,z_j)h(z_j)+c(a)]$; 
\IF {all successors of $z_i$ are marked SOLVED}
\STATE MarkSolved($z_i$)
\ENDIF
\ENDIF
\IF {$z_i$ is OR node}
\STATE $f(z_i)\gets \min\limits_{\substack{z_j\in successor(z_i)\\ a=(z_i,z_j)}}{[tr(z_i,a,z_j)h(z_j)+c(a)]}$;
\STATE $MarkAction(z_i,z_j)$; 
\IF {SOLVED($z_j$)}
\STATE MarkSolved($z_i$);
\ENDIF
\STATE invokeHeuristics(Graph $G_{AO}$,  Node $z_i$);
\ENDIF
\STATE remove $z_i$ from Z;
\ENDWHILE
\end{algorithmic}
\end{algorithm}

The heuristic methods are invoked when a value of belief state is updated(procedure Propagate, line 13). The heuristics methods are ineffective on a major part of the expanded states (i.e.  most of the expanded states $b'\in Z$  do not satisfy the predicates $MoreBlocked(b,b'),MoreOpen(b,b')$, and $DiffLoc(b,b')$, for a given expanded state $b$, and their values are not updated by their compatible heuristic methods).
In order to reduce the number of expanded states that are checked for update, we use two data structures:
$BlockedStructue$ and $OpenStucture$. For defining these structures we define new equivalence relations:
The equivalence relations $\equiv_o$ and $\equiv_b$ on belief states $b_1,b_2$ are defined as follows:
\begin{defn}\label{similarOpen}
$b_1\equiv_o b_2$ if and only if: 
\begin{enumerate}
\item $Loc(b_1)=Loc(b_2)$ 
\item $Open(b_1)=Open(b_2)$
\end{enumerate}
\end{defn}
Similarly, 
\begin{defn}\label{similarBlocked}
$b_1\equiv_b b_2$ if and only if: 
\begin{enumerate}
\item $Loc(b_1)=Loc(b_2)$ 
\item $Blocked(b_1)=Blocked(b_2)$
\end{enumerate}
\end{defn} 

 Each of these structures is a hash table that contains the entire expanded state space $Z$ (more precisely the hash table {\em refers} to $Z$), where the entires of each table divide $Z$ into equivalence classes called ``buckets''. Namely, the set of buckets $\{m_{o1},...,m_{on}\}$, in $BlockedStructue$, partitions $Z$ in to equivalence classes by the relation $\equiv_o$, while the set of buckets $\{m_{b1},...,m_{bn}\}$, in $OpenStucture$, partitions $Z$ into equivalence classes by the relation $\equiv_b$. By definition above, $HBlocked(b)$, never updates the heuristic value $h(b')$ if b and b' do not share the same bucket of $BlockedStructue$, and similarly, $HOpen(b)$ never update a value of b' if b and b' do not share the same bucket of $OpenStructue$.\newline
Procedure $propBlocked$ (Algorithm \ref{alg2}) implements the heuristic $HBlocked$. $classBlocked(b)$ (line 1) returns the set $Z_B$ of all expanded nodes whose belief states are in the same bucket of $BlockedStructue$ as the belief state of node $z$.
Similarly, procedure $propOpen$ is implementation of the heuristic $HOpen$. $classOpen(b)$ (line 1) returns the set of all expanded nodes $Z_O$ whose belief states that are at the same bucket of $OpenStructue$ as the belief state of node $z$.
%
\begin{algorithm}
\caption{Heuristic methods for Gen-PAO}
\label{alg2}

procedure propBlocked(NodeSet Z, Node z) 
\begin{algorithmic}[1]
\STATE $Z_B \gets classBlocked(Z); $
\FORALL{($z_i\in Z_B|z_i\neq z$)}
\IF {$MoreBlocked(z,z_i)$ and $h(z)>h(z_i)$}
\STATE $h(z_i)\gets h(z)$;
\ENDIF
\ENDFOR
\end{algorithmic}
procedure propOpen(NodeSet Z,Node s)
\begin{algorithmic}[1]
\STATE $Z_O \gets stateStructureOpen(Z)$
\FORALL{($z_i\in Z_O|z_i\neq z$)}
\IF {$MoreOpen(z_i,z)$  and $h(z)>h(z_i)$}
\STATE $z_i\gets z$;
\ENDIF
\ENDFOR
\end{algorithmic}
\end{algorithm}

\subsection{Eliminating Duplicate Nodes}\label{sec:Gen-PAO-Union}
In most cases Gen-PAO expands the same node more than once. This may lead to a large expense of memory and run time when it is generated on large graphs. Taking this into consideration, we introduce the Gen-PAO-EDN (short for Gen-PAO Eliminating Duplicate Nodes) algorithm, a variation of Gen-PAO, that maintains a single OR node for every state, by eliminating all duplicate OR nodes (more precisely, preventing the expansion of duplicate nodes) which shares the same state into one OR node. 
There are two key differences between Gen-PAO-EDN an Gen-PAO:
\begin{itemize}
\item Gen-PAO maintains one representative OR node for each state. When Gen-PAO-EDN expands an AND node, it creates a new OR node only if its associated state is not represented by any OR node in the AND/OR graph. Otherwise, if a representative OR node to this state already exists, then the expanded AND node becomes an additional parent of the representative OR node. 
\item The AND/OR graph may contain cycles(not a tree as in AO* and Gen-PAO). A special type of cycle, called {\em strongly connected} (defined below), induces loops in the propagation phase if the cycle is a subgraph of the partial solution. 
\begin{defn}
Let $AO$ be an AND/OR graph, $A_1,...A_n\in AO$ be AND nodes, and $O_1,...,O_n\in AO$ be OR nodes. A cycle $O_1->A_1->O_2->A_2->...->O_n->A_n->O_1$ is strongly connected if for every $1\leq i\leq n$ $A_i$ is the preferred son of $O_i$. 
\end{defn}
If the propagate method enters a strongly connected circle $C$ (which occurs when the propagation goes upwards to the ancestors), the heuristic values are re-updated every iteration, where each update raises up a bit the values of the nodes in . \newline
In some point on of the following eventually happens:
\begin{enumerate}
\item The value of one of the AND nodes in $C$ is raised up to a level that it ceased to be the preferred successor of its OR parent. Namely, in some point in the process of update, there is an AND node $n$, with a sibling $n'$, such that $h(n')<h(n)$.
Then $n'$ becomes the preferred son. Hence, the cycle is no longer a strong connected and the loop ends.
\item The propagation process in $C$ raises up the values of the nodes in $C$ until the values are converged to a certain finite limit.
\end{enumerate}
Clearly, if the values of nodes in $C$ are not converged then case 1 must hold.
In case 2, the propagation may enter into an endless loop if the values are not converged in any finite iteration. In order to overcome this,
each time the value of a node $n\in Z$ is updated, we check the delta $\Delta(n)=h(n)-h_{prev}(n)$, and stop the loop if $\Delta(n)<\epsilon$, where $\epsilon$ is a small positive constant which is chosen before the run, and $h_{prev}(n)$ is the value of $n$ before the update. 
It should be noted that $\epsilon$ is defined to be so small, that it does not change the propagation process(i.e if case 2 holds then case 1 does not hold even if the loop would have never been stopped).\newline
As we proposed a unifying approach to Gen-PAO, we now propose a unifying approach to the AO*. The algorithm AO-EDN is an improvement of the AO* algorithm in which unifies the OR nodes that associate with the same state. In fact AO-EDN is the same algorithm as Gen-PAO-EDN despite that it does not include heuristic HBlocked and HOpen in the propagation phase. 
\end{itemize}
 
%
\begin{figure}[h!]
        \centering
        \centerline{\includegraphics[width=6.5in,height=5in]{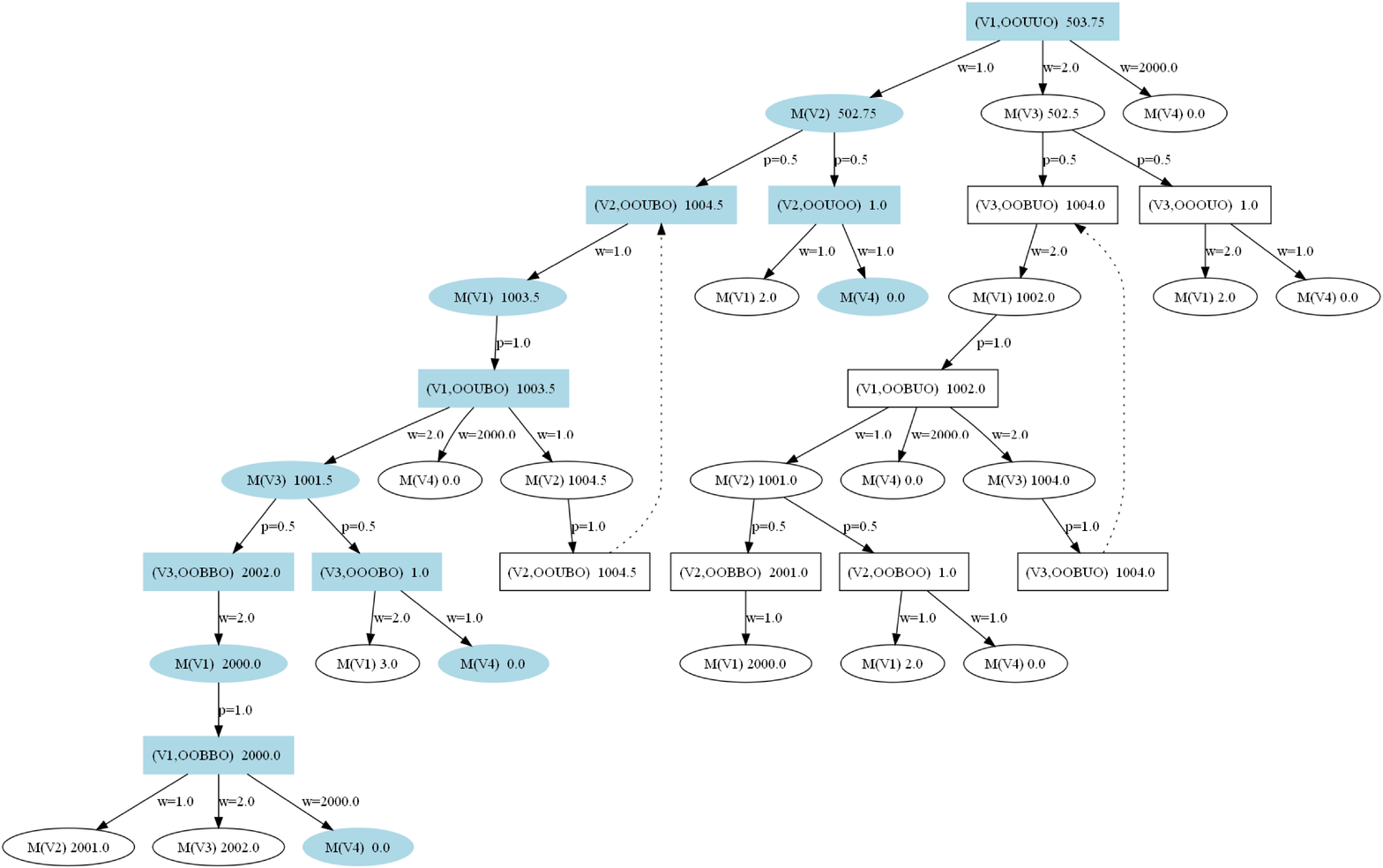}}
        \caption{AND/OR graph for $G_{4V5E}$}
        \label{fig:4V5E}
\end{figure} 


\newpage
\chapter{Empirical Results}

In order to evaluate our scheme we implemented alternative algorithms for the Gen-PAO and compared them by their execution 
time and by the size of their generated AND/OR graph (defined as the number of its nodes).
Note that although the size of the AND/OR graph and the run time of Gen-PAO-EDN may decrease as a result of the heuristic propagation and nodes unification, still the algorithms described in this section requires a time exponential in the number of unknown edges, which makes this approach prohibitive for graphs with large sets of unknown edges.

\section{Varying the Uncertainty of the Graph}

In the first two experiment we explored how the uncertainty of the graph affect the performance of 
 AO* (Section \ref{sec:AO*}),  GenPAO (Section \ref{sec:Gen-PAO}), and AO-EDN (Section \ref{sec:Gen-PAO-Union}). The performance of each algorithm was measured for different graph sizes where each graph had different number of unknown edges.
  To ensure that the experiments could be performed within a reasonable time frame, the parameters were chosen so that a single run takes no more than few minutes.

Figure \ref{fig:UnionComparison} compares the performance of the algorithms above on instances of basic-CTP.
 Figure \ref{fig:ComparisonUnionTime} and figure \ref{fig:ComparisonUnionSize} show respectively the change in the execution time and in the size of AND/OR graph as the number of unknown edges ascend from 2 to 12. This comparison indicates that Gen-PAO has a significant advantage in execution time over AO* since the embedded heuristics in Gen-PAO lowers dramatically the size of the AND/OR graph. Moreover, Gen-PAO has a slight advantage in execution time over AO-EDN although the size of the AND/OR graph generated by AO-EDN is smaller than the graph generated by Gen-PAO. The increased execution time of AO-EDN is incurred by the overhead of the iterative propagation in the redundancy elimination process (Section \ref{sec:Gen-PAO-Union}) in which depends on the value of default edge (default edge cost was chosen to be 100).
\begin{figure}[h!]
        \centering
         \subfloat[Execution time ]{\label{fig:ComparisonUnionTime}\includegraphics[width=3in,height=3in]{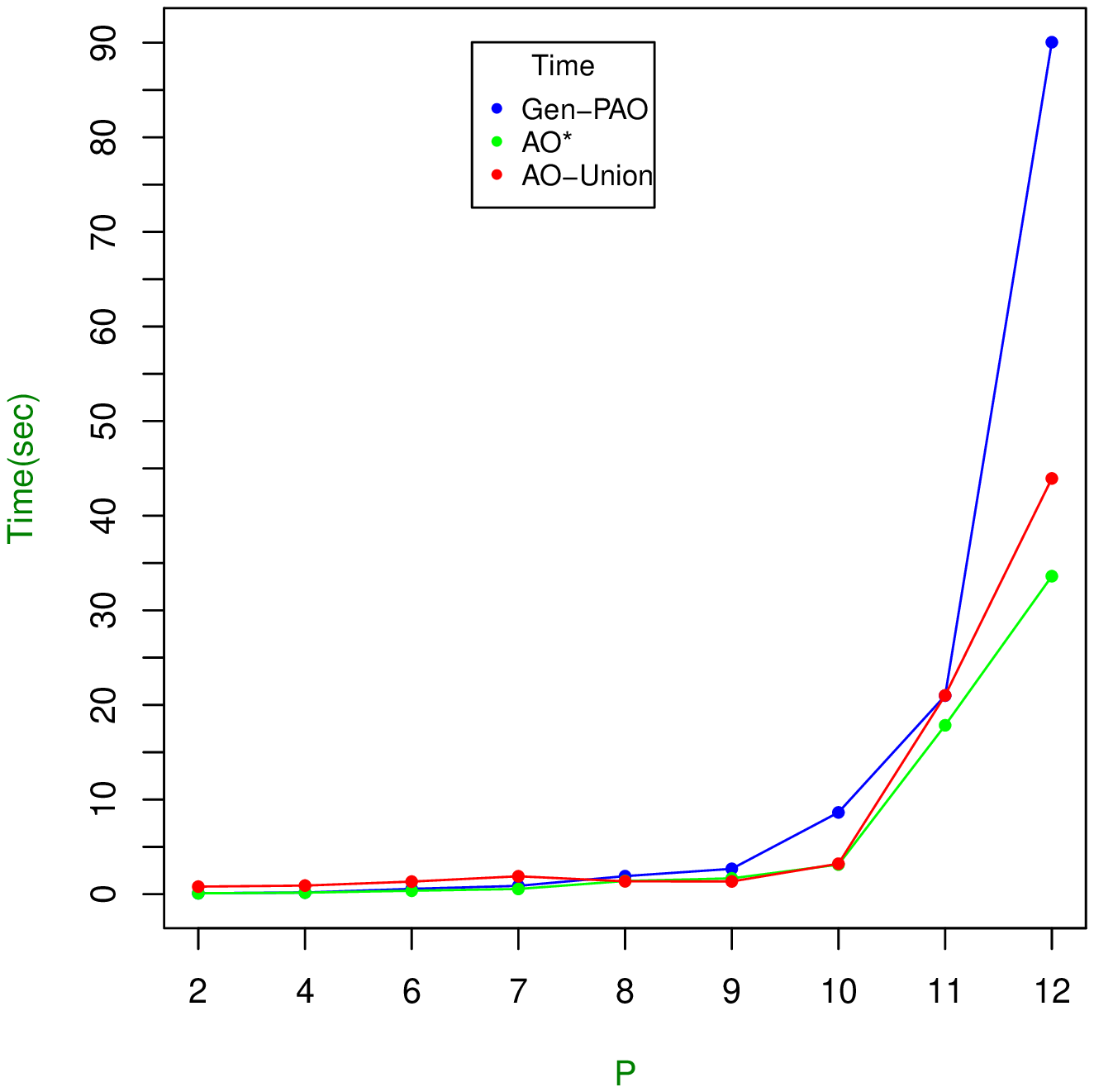}}
        \subfloat[AND/OR graph size]{\label{fig:ComparisonUnionSize}\includegraphics[width=3in,height=3in]{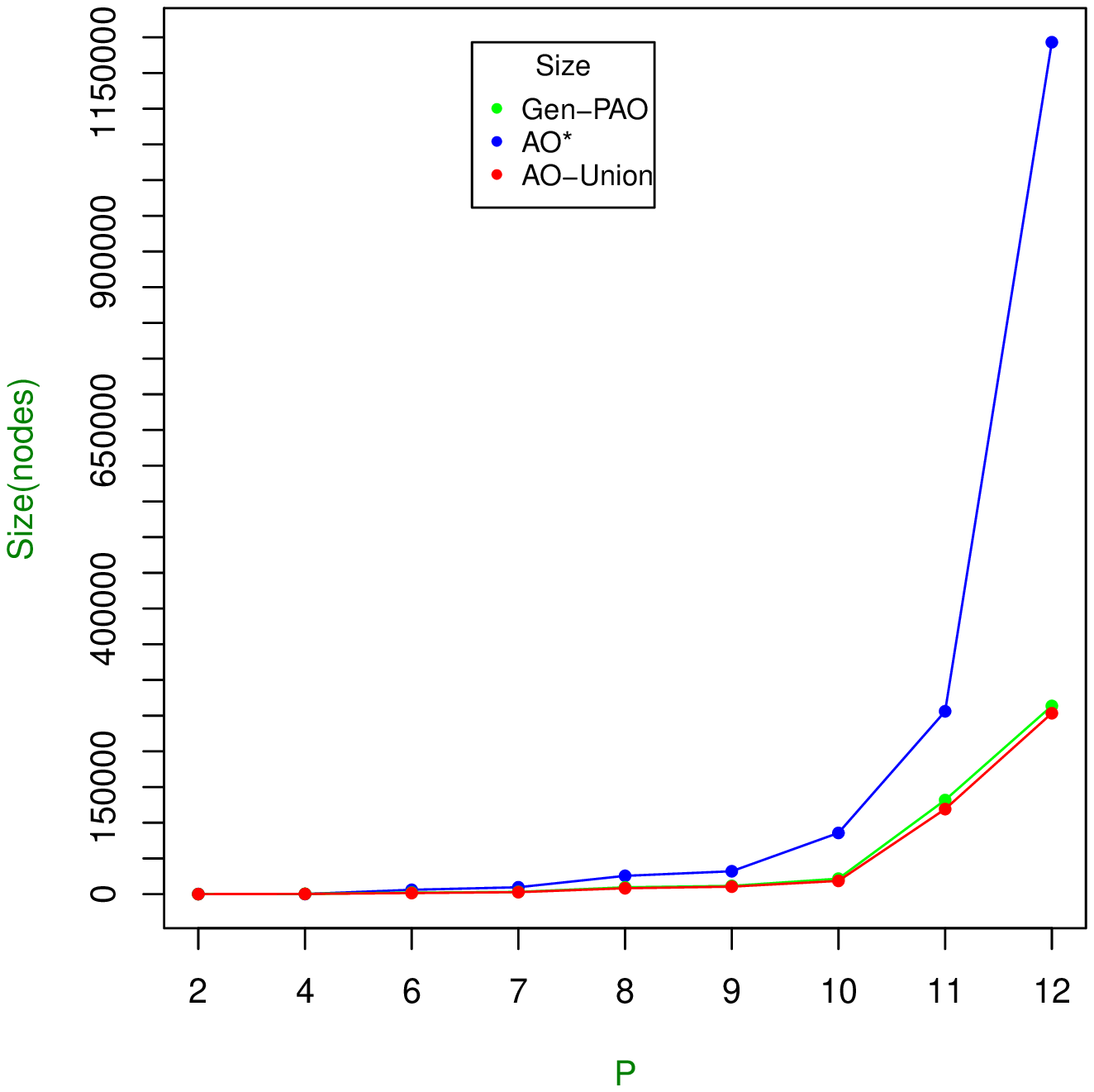}}
					   \caption{Performance of AO*, Gen-PAO and AO-EDN on instances of basic-CTP}
        \label{fig:UnionComparison}
\end{figure} 

Figure \ref{fig:UnionComparisonSense} shows the comparison between AO-EDN and Gen-PAO on instances of Sense-CTP (the sensing cost was fixed to 0.5 for all edges). AO* was discarded from this comparison due to an extremely large execution time. 
 In contrast to previous comparison, here AO-EDN outperforms Gen-PAO in AND/OR graph size (figure \ref{fig:ComparisonUnionSenseSize}) as well as in execution time (figure \ref{fig:ComparisonUnionSenseTime}).  The elimination of redundancy nodes provides an advantage despite the overhead, since the number of expansions saved by the unification increases considerably as the number of unknown edges ascends. The plot does not contain more than 7 unknown edges since Gen-PAO consumes all the RAM on larger graphs.

\begin{figure}[h!]
        \centering
				 \subfloat[Execution time of sensing-CTP (SC=0.5)] {\label{fig:ComparisonUnionSenseTime}\includegraphics[width=3in,height=3in]{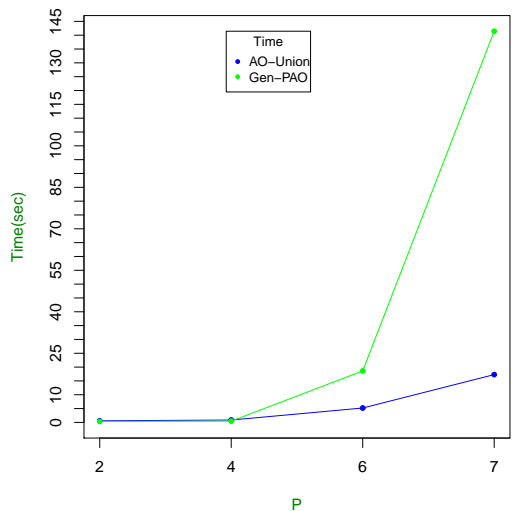}}
				  \subfloat[AND/OR graph size of sensing-CTP (SC=0.5)]  {\label{fig:ComparisonUnionSenseSize}\includegraphics[width=3in,height=3in]{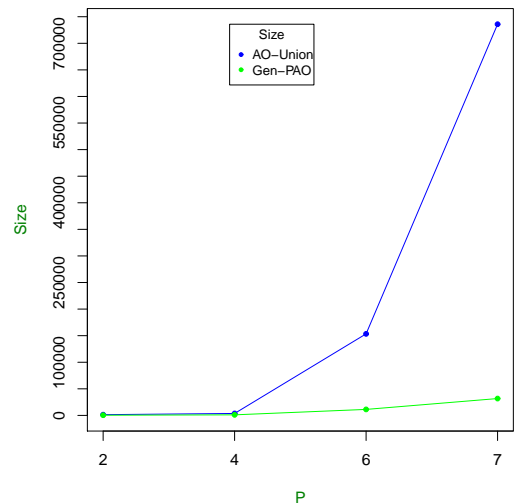}}
				   \caption{Performance of  Gen-PAO and AO-EDN on instances of Sense-CTP}
        \label{fig:UnionComparisonSense}
\end{figure} 

It should be mentioned that since the performances of Gen-PAO-EDN (Section \ref{sec:Gen-PAO-Union}) and Gen-PAO are almost identical on instances of  basic-CTP and Sense-CTP, the performance of  Gen-PAO-EDN is not presented. 
\section{Gen-PAO Heuristic Estimate}
\subsection{Experimental Setting}
We now define a variant of the Canadian Traveler Problem called {\em Expensive Edges CTP} (Exp-CTP in short). Exp-CTP is defined as CTP, except that  each edge $e\in E$ can be $expensive/cheap$ instead of $blocked/unblocked$. 
Formally, Expensive-Edge-CTP is a 6 tuple $I=(G,P,w,s,t,DC)$ where $G=(V,E)$ is a graph, P and w are respectively the probability and cost functions over the edges, $s, t \in V$ are the start and goal vertices, and $DC$ is a positive real number.  $P(e)$ denote the probability that $e$ is cheap and $1-P(e)$ denote the probability that $e$ is expensive. An agent can traverse edge $e\in E$ whether its cheap or expensive. However, if the agent traverses $e$ and $e$ is cheap then it pays $w(e)$, and if $e$ is expensive then it pays $DC$, where $DC$ (short for Detour cost) is a fixed cost which is higher than any edge cost(except the cost of the default edge $\form{s,t}$). In fact Exp-CTP can be defined as a subclass of CTP as well, where every unknown edge $\form{v_i,v_j}$ in G, has a parallel path $l_{ij}=\form{\form{v_i,v_k},\form{v_k,v_j}}$ called \textit{detour path} such that the path cost of $l_{ij}$ is DC and $l_{ij}$ is always traversable. Namely,
\begin{itemize}
	\item  $w(\form{v_i,v_k})=DC$ and $w(\form{v_k,v_j})=0$ 
	\item  $P(\form{v_i,v_k})=1$ and $P(\form{v_k,v_j})=1$ 
\end{itemize}
To evaluate the performance of  Gen-PAO heuristics we implemented four alternative algorithms for Gen-PAO-EDN, where on each algorithm, different heuristic was embedded in the propagation phase. Since the heuristics has almost no impact when Gen-PAO-EDN is applied on instances of basic-CTP and sense-CTP, the algorithms were executed on instances of Exp-CTP.
The implemented algorithms are as follows:

\begin{itemize}
	\item PAO-Blocked - Gen-PAO-EDN which propagates the heuristic values according to HBlocked (Section \ref{sec:Gen-PAO}).
	\item PAO-Open - propagates the heuristic values according to HOpen (Section \ref{sec:Gen-PAO}).
	\item PAO-All - propagates the heuristic values according to HOpen and HBlocked.
	\item PAO-None - basic propagation with no heuristic included. 
\end{itemize}
\subsection{Varying the Sensing Cost}
In order to learn the effect of the sensing cost on the algorithms performance we conducted several runs using different fixed sensing cost(the sensing cost was equal for of all edges) on a graph that consists 8 vertices and 13 edges (10 edges are unknown).
In all experiments, the probability of all unknown edges was fixed to 0.5. 
Figure \ref{fig:7V13ESize} shows the change in the size of AND/OR graph as the sensing cost ascends from 0.1 to 1.1. This result indicates that the size of AND/OR graph (generated by all variants of Gen-PAO) decreases, as the sensing cost increases . We believe that this can be attributed to the increased number of expanded states in the AND/OR graph incurred by the low sensing cost, in which makes the sensing action worthwhile. In particular, there exists a limit $m$, such that every sensing cost below $m$ makes the Sense actions always preferable over the Move actions. This causes many expansions of Sense nodes and expansion of new belief state (that are not reachable without preforming Sense) which results in a large AND/OR graph. 
The comparison of the algorithms shows that PAO-None generates a relatively small AND/OR graph for low sensing cost, while PAO-Blocked and PAO-All has advantage on high sensing cost. This is also true for larger graphs that contain larger sets of unknown edges. 
We believe that this effect can be explained by the fact that on low levels of sensing cost, it is worthwhile to sense unknown edges, in which improves the estimate accuracy of the heuristic values(on low cost). The high accuracy level of the heuristic estimate leads to low rates of pruning since the heuristics HBlocked and HOpen are based the gap between the real and estimated value, which is small in this case. Thus, a large AND/OR graph was obtained.
 A comparison of the run time (figure \ref{fig:7V13ETime}) shows
that the run time extends as the size of the AND/OR graph increases. The reason for this positive correlation is obvious: the increased size of the graph leads to larger computation time required for expanding the states, as well as for propagating the heuristic values to a larger set of states.  
\begin{figure}[h!]
        \centering
            \subfloat[AND/OR graph size]{\label{fig:7V13ESize}\includegraphics[width=3in,height=3in]{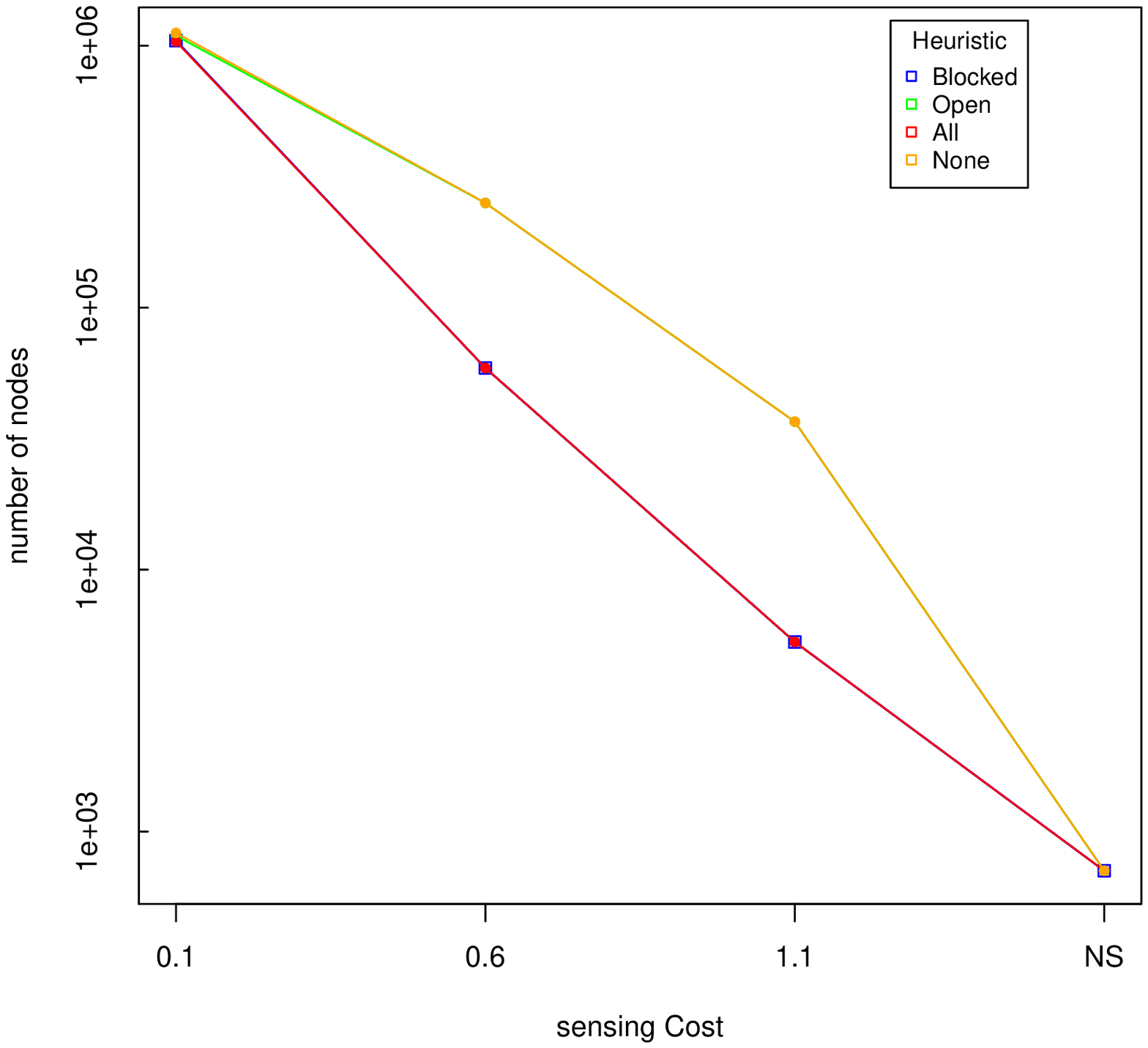}}
         \subfloat[Execution time]{\label{fig:7V13ETime}\includegraphics[width=3in,height=3in]{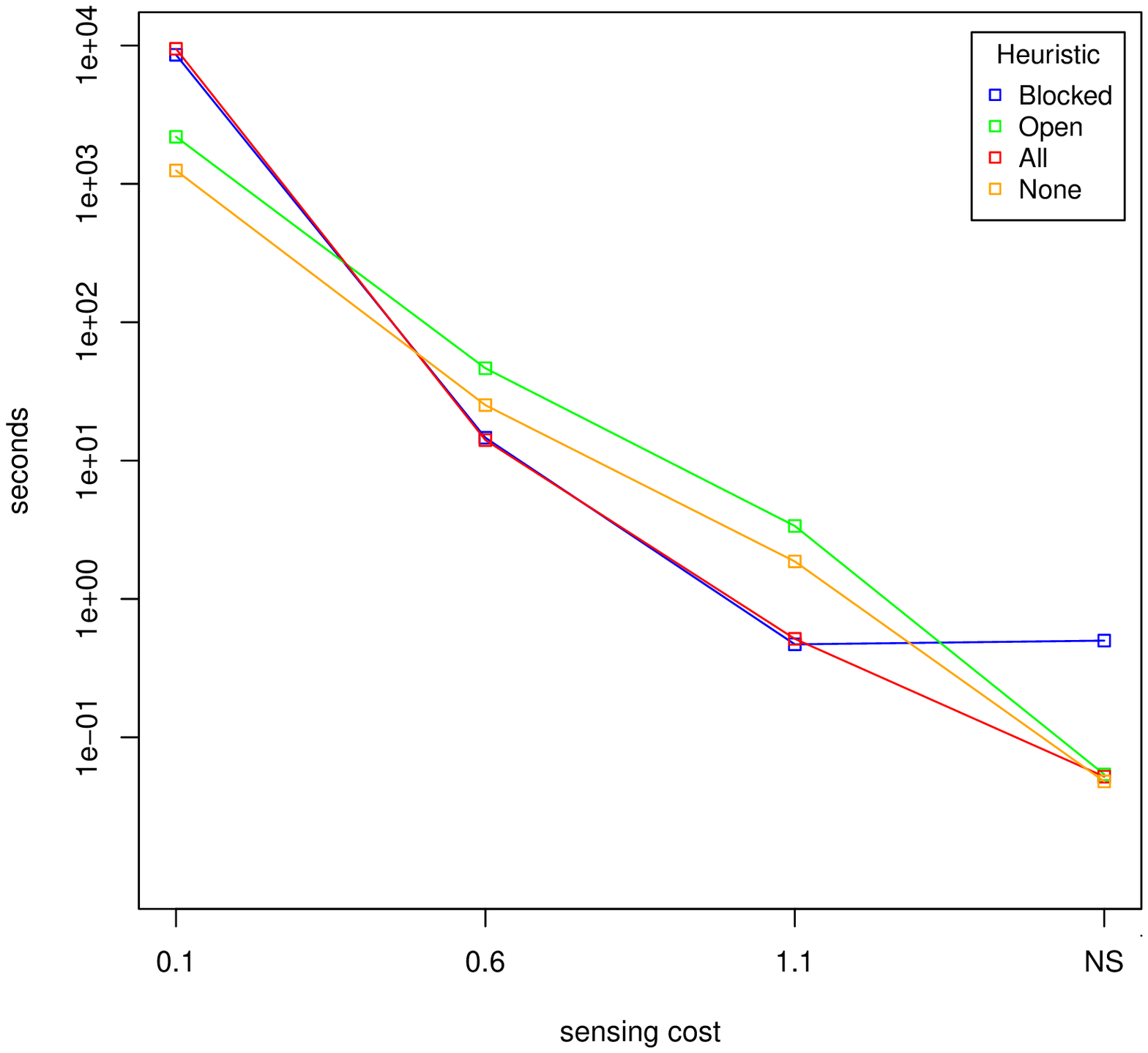}} 
        \caption{Execution time and AND/OR graph size generated by PAO-Blocked, PAO-Open, PAO-All and PAO-None for different values of sensing cost}
        \label{fig:GenPAOSensing}
\end{figure} 
\subsection{Varying the Open Probability}
In this experiment we investigated the effect of distribution over the edges on the performance of variants of Gen-PAO-EDN (Section \ref{sec:Gen-PAO-Union}).
In order to perform simple experiment that analyzes this effect, we configured the graph such that all unknown edges was open with the same value of fixed probability, called $\textit{open probability}$, which is given as an input. 
Figure \ref{fig:GenPAOProb} illustrates the performance of different heuristics on a graph that consists 19 unknown edges for DC=7 and DC=9 . Figures \ref{fig:Exp7Size} and \ref{fig:Exp9Size} show the change in the size of the AND/OR graph size as the open probability ascends from 0.1 to 0.9. These results indicates that for all algorithms there exists a certain value of open probability $p$ (p=0.5 on figure \ref{fig:Exp7Size} and p=0.3 on figure \ref{fig:Exp9Size}) such that for any value of open probability $p'$ (called low open probability) smaller than $p$ the size of the AND/OR graph increases as $p'$ rises, while for any value of open probability $p''$ larger than $p$ (called high open probability) the size of the AND/OR graph decreases as the $p''$ rises. We call $p'$ \textit{low open probability} and $p''$ \textit{high open probability}

This can be explained by the following reasons(referred to AND/OR graphs generated by all algorithms):
\begin{itemize}
\item On high open probability most of the decision nodes (OR nodes) decides correctly their best action node when first expanded without changing their decision afterwards, and thus, relatively large portion of the expanded states is also a part of the optimal policy graph and the AND/OR graph is relatively small. However, as the open probability lowers, the AND/OR graph size increases since the heuristic estimates are less accurate and more alternative actions are considered for the optimal policy. This leads to an excessive expansion of nodes and a larger AND/OR graph.
\item On low open probability, as the open probability lowers, the graph becomes ``more blocked'', and the default path becomes preferable. In such cases, all variants of Gen-PAO-EDN tend to prune action nodes that are not associated with the default path (sensing or traversing edges that are not in the default path) and, as a result, a smaller AND/OR graph is obtained.
\end{itemize}
\begin{figure}[h!]
        \centering
        \subfloat[AND/OR graph size (DC=7)]{\label{fig:Exp7Size}\includegraphics[width=3in,height=3in]{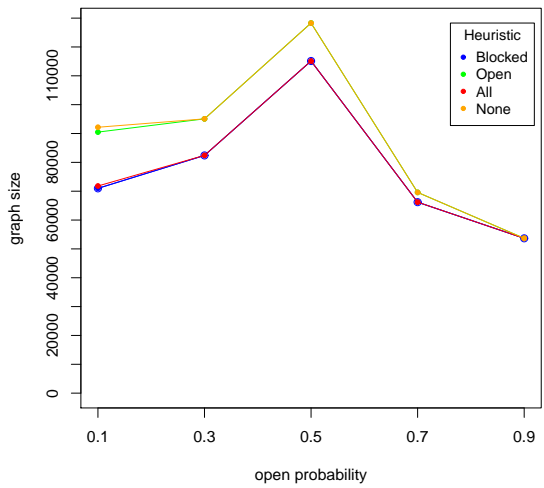}}
        \subfloat[Execution time (DC=7) ]{\label{fig:Exp7Time}\includegraphics[width=3in,height=3in]{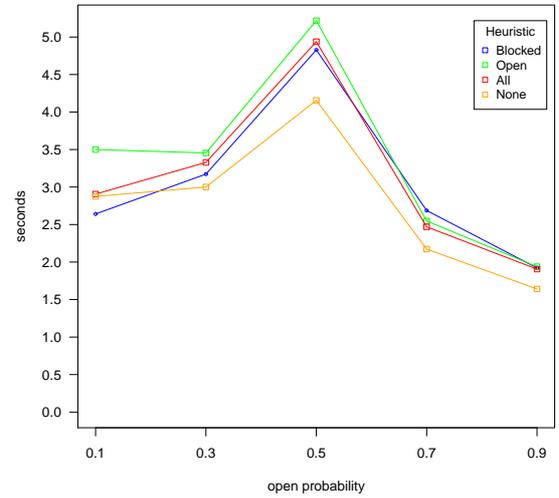}}
				 \\
				  \subfloat[AND/OR graph size (DC=9) ]{\label{fig:Exp9Size}\includegraphics[width=3in,height=3in]{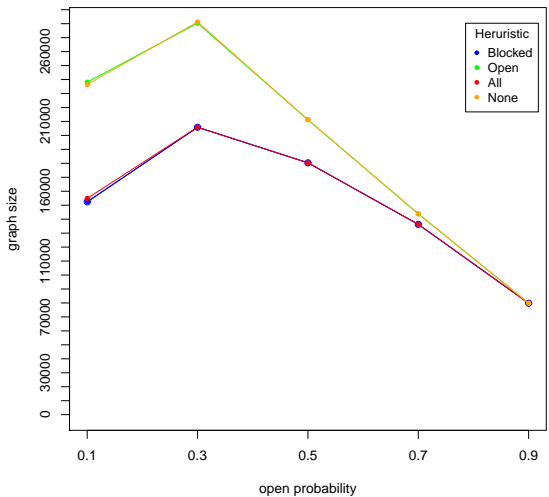}}
          \subfloat[Execution time (DC=9) ]{\label{fig:Exp9Time}\includegraphics[width=3in,height=3in]{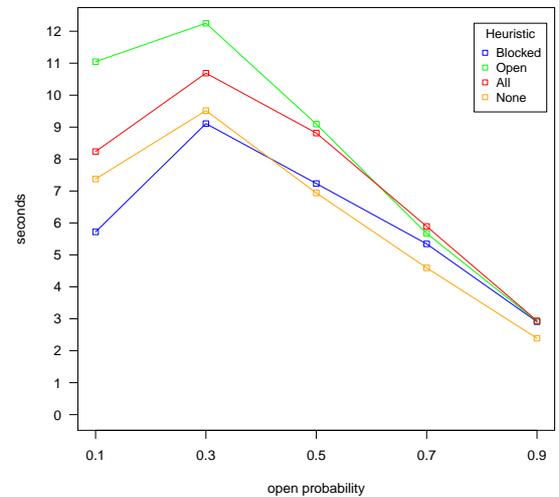}}
        \caption{Execution time and AND/OR graph size generated by PAO-Blocked, PAO-Open, PAO-All and PAO-None for different}
        \label{fig:GenPAOProb}
\end{figure}  
The comparison between the heuristic of Gen-PAO-ELN shows advantage of PAO-Blocked and PAO-All on low open probabilities. This is due to the high pruning rate incurred by HBlocked on low open probability, where the gap between the heuristic values the real values are high. Again HBlocked is effective since the chances that heuristic value of different belief state will be updated are high (see conditions of HBlocked in section \ref{sec:Gen-PAO}). 

%

Figures \ref{fig:Exp7Time} and \ref{fig:Exp9Time} show the time spent by the four algorithms. As in previous experiments, there is a tight correlation between the execution time and the AND/OR graph size. The size of the AND/OR graphs generated by PAO-Blocked and PAO-All are smaller then PAO-None on all levels of open probability, however, the advantage on runtime of PAO-Blocked and PAO-All occurs on low open probability. 

\section{Value of Clairvoyance}
In order to get some general indication of the total value of information, we checked the $\textit{ratio}$ (see Papadimitriou 1991), denoted by $RV$, on instances of basic CTP and Exp-CTP. $RV$ is defined as $\frac{C^*}{AS}$ where $C^*$ is the expected cost of the optimal policy and AS is the expected cost of the optimal policy given that the graph is fully observable (can be also described as the expected cost of the policy \textit{Always Sense} when the sensing cost is 0 (see [Bnaya,Felner and Shimony]). Formally, Let $l_1,l_2,...,l_n$ be the paths in the graph ordered by their path cost, $P_i$ be the probability that path $l_i$ is traversable, and $C_i$ be the path cost of path $l_i$ then $AS$ can be described as follows:
\[
AS=\sum\limits_{i=1}^{n}\prod\limits_{j=1}^{i-1}(1-P_j){P_iC_i}
\]
We performed experiments on instance of basic CTP for different values of open probabilities and values of the default edge. Results for graph 7V11E (figure \ref{fig:VOI7V11Eblocked}) shows that $RV$ is relatively high on low values of the default edge (where default edge cost is 20). This can be explained by the fact that $AS$ is relatively low since the agent would not traverse the default edge if there exists an open path to the target (in addition to the default edge) however $C^*$ is almost high as the cost of default edge since it is usually worthwhile to traverse the default edge when MaxEdge is low (note that $C^*$ is always lower than  MaxEdge). In addition, $RV$ is high on low open probabilities (i.e. on $p\in [0.1,0.3]$) , since the the graph ``tends'' to be blocked and the default edge is preferable over the ``cheap'' paths. Tough on extremely high cost of the default edge (not illustrated in the figure), i.e. on $MaxEdge>300$ , RV is low even on low open probability (around 1.3), since the agent takes the default edge only if there is no open path other then the default edge.

%

\begin{figure}[h!]
        \centering
        \includegraphics[width=5in,height=3in]{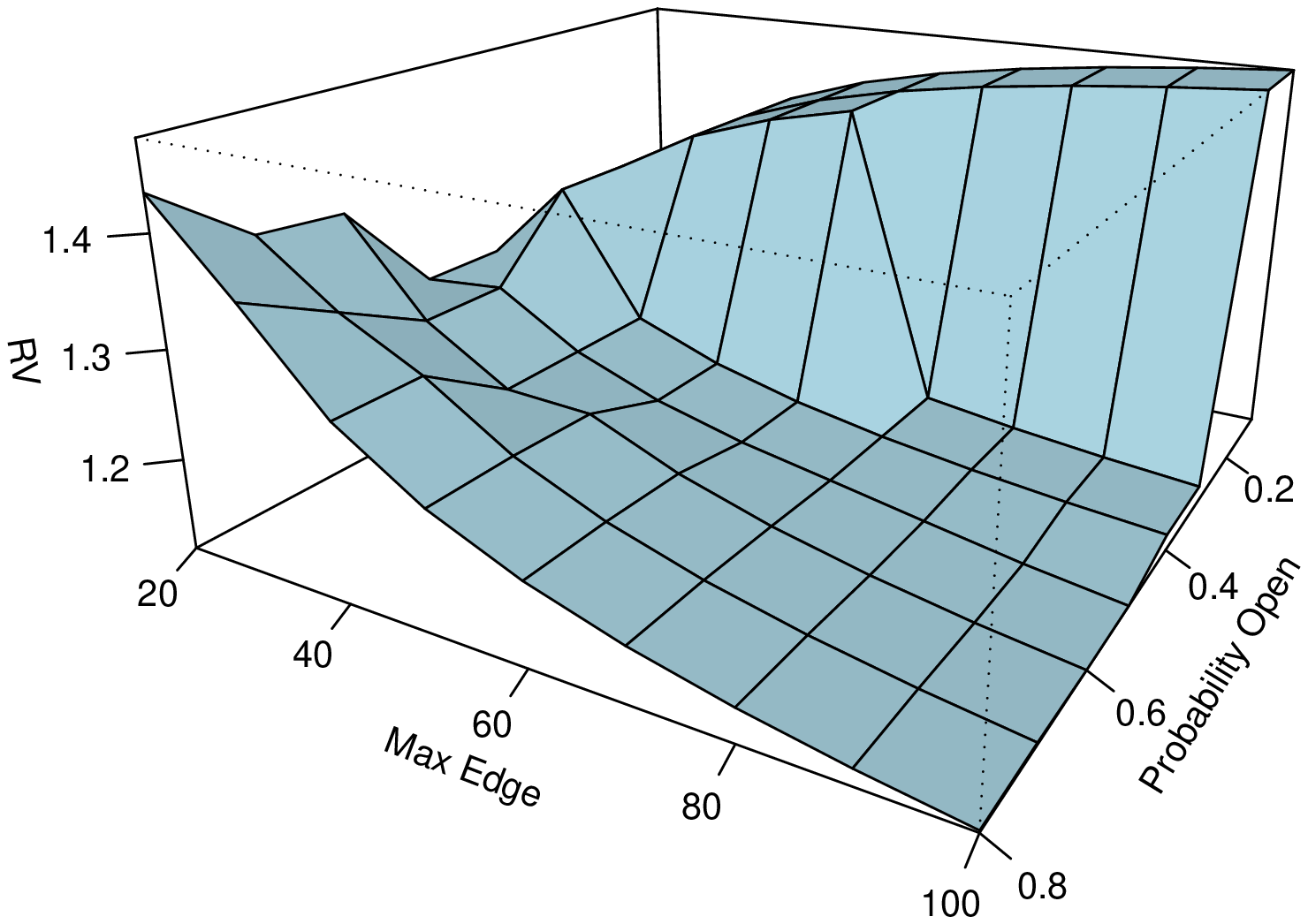}
         \caption{ratio value for graph 7V11E}
        \label{fig:VOI7V11Eblocked}
\end{figure} 

An analogue experiment was performed on instanfce of Exp-CTP for the same graph as used on previous experiment. RV was measured for different values of DC  and open probabilities while default edge cost remained fixed (default edge cost is 200). Figure \ref{fig:VOI7V11Eexpensive} shows that the result is qualitatively similar to the results of the previous experiment, however lower value of RV were obtained in all domain. The reason for this similarity is the same as in the previous experiment, despite that now, the agent prefers to traverse the detour path instead of the default edge. RV is lower than in previous experiment since the paths cost, on average, is higher (it is sometimes required to pay DC several times) and thus $AS$ is higher. 
\begin{figure}[h!]
        \centering
        \includegraphics[width=5in,height=3in]{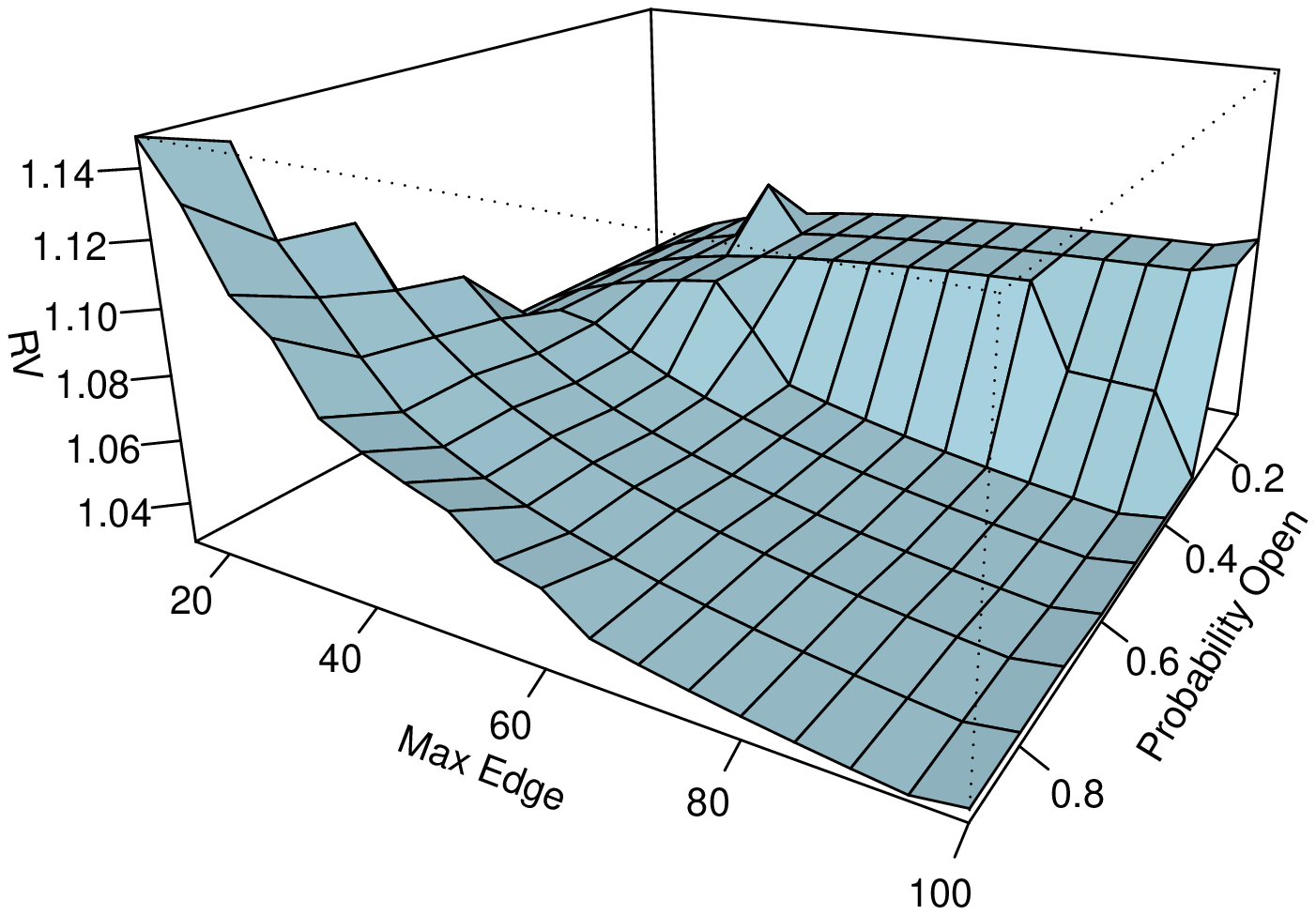}
         \caption{Ratio RV for graph 7V11E}
        \label{fig:VOI7V11Eexpensive}
\end{figure}

\begin{figure}[h]
        \centering
        \centerline{\includegraphics[width=6in,height=9in]{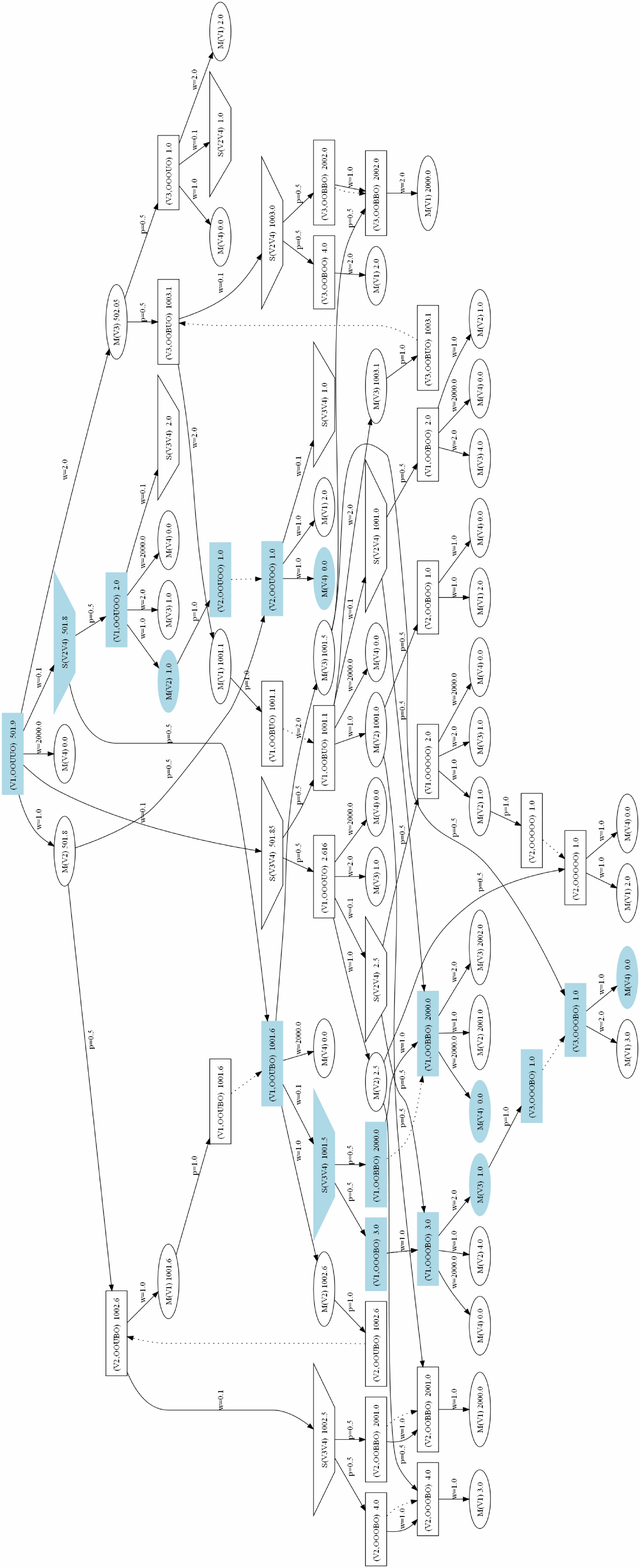}}
        \caption{AND/OR graph for $G_{4V5E}^s$}
        \label{fig:4V5ESensing}
\end{figure} %

\chapter{Summary}

\section{Contributions}
In this thesis we explored the Canadian traveler problem theoretically and empirically. In the context of theoretical analysis the following theorems has been proved:
\begin{itemize}
	\item Correlated-CTP is at least as hard as Sensing-CTP. 
	\item CTP-PATH-DEP is NP-hard. 
	\item CTP-FOR-DEP is solvable in polynomial time. 	
	\item Properties of Belief MDP for CTP.
\end{itemize}
The main aspect of the practical analysis is the framework of Gen-PAO, where its main contributions are:
\begin{itemize}
	\item Gen-PAO extends the PAO* algorithm such that it is not restricted to special types of graphs.
	\item	Gen-PAO optimally solves instances Exp-CTP and sensing CTP in addition to basic CTP. 
	\item Two heuristics HBlocked and HOpen have been proposed. HBlocked and HOpen can be plugged in Gen-PAO and in some cases reduces the size of the AND/OR graph and the execution time.
	\end{itemize}
In addition, we analyzed the parameter RV for instances of Exp-CTP and basic CTP and showed its general behivior.
\section{Future work}
There is a lot remained to be done in theoretical analysis of the CTP, and in particular classifying other subclasses of the CTP.
On the practical aspect,  Gen-PAO can be further modified to solve other type of CTP such as Correlated CTP and multi-agent CTP.
Moreover, we believe that Gen-PAO can be further enhanced by aiming it to other type of POMDP problems. 
It might be worth consideration to improve the performance of Gen-PAO by implementing heuristics that specialize in specific type of graphs.

\bibliographystyle{abbrv}
\bibliography{Refrences}
\appendix{Table of Notations}
\begin{center}
    \begin{tabular}{| l | p{10cm} |}
    \hline
    Notation & Denotation \\ \hline
    $st(e,s)$ & The status of edge $e\in E$, in state $s \in S$. \\ \hline
   	$sts(E,s)$ & The set of all edges status in $E$, in state $s \in S$.\\ \hline
   	$Estatus(s)$ & The set of all edges status which represented by state $s\in S$\\ \hline
    $Loc_S(s)$ & vertex $v\in V$ in which the agent is located, in state $s\in S$ \\ \hline
    $Loc(b)$ & Loc(s) in which $b(s)>0$ \\ \hline
   	$stb(e,b)$ & $
\begin{array}{ll}
O_e &\quad if\; B(e,b) $ - Edge e is know to be Open$ \\
B_e &\quad if\; O(e,b) $ - Edge e is know to be Blocked$\\
U_e &\quad\; otherwise $ - The status of edge e is Unknown$
\end{array}$\\ \hline
 $Unknown(b)$ & The set of all edges $e\in E$ in which $stb(e,b)=U_e$\\ \hline
 $Blocked(b)$ & The set of all edges $e\in E$ in which $stb(e,b)=B_e$\\ \hline

		$P(e,b)$ & $\sum_{i}{b(s_i)}$ such that $st(e,s_i)=B_e$ - the probability that edge e is blocked given b\\ \hline
			$\form{b}$ & $\form{stb(e_1,b),stb(e_2,b),...,stb(e_n,b),Loc(b)}$ - the ``form'' of belief state b.\\ \hline 
$Inc(v)$ & The set of edges incident to a vertex v\\ \hline  
\end{tabular}
\end{center}

\begin{center}
    \begin{tabular}{| l | p{10cm} |}
    \hline
    Predicate & Denotation \\ \hline
		$B(e,b)$ &   True if and only if for every $s\in S$ in which $b(s)>0$ $st(e,s)=B$. Predicate is true if edge e is known to be blocked \\ \hline
		$O(e,b)$ &   True if and only if for every $s\in S$ in which $b(s)>0$ $st(e,s)=O$. Predicate is true if edge e is known to be open\\ \hline
    $V(v,b)$ &   True if and only if for every $s\in S$ in which $b(s)>0$ $Loc(s)=v$. Predicate is true if agent is located in vertex v\\ \hline
	 	$pathB(p,b)$ &  True if $\exists e_i\in p$ such that $B(e_i,b)$. Predicate is true if some edge in path p is known to be blocked\\ \hline
		$unreachable(v,b)$ & True if $Loc(b)=u$ and all paths $\bigcup_{i=1}^{k1}{p_{i}}\in G$ from u to v $pathB(p_{i},b)$. Predicate is true if agent is located in vertex u and all paths from u to v are known to be blocked  \\ \hline
		$unreachable(e,b)$ & True if $Loc(b)=v$ and all paths $\bigcup_{i=1}^{k1}{p_{i}}\in G$ from u to	$v_1\; pathB(p_{i},b)$, all path $\; \; \bigcup_{i=1}^{k2}{p'_{i}}\in G$ from u to $v_2\; pathB( p'_{i},b)$. Predicate is true if agent is located in vertex v and all paths from v to incident vertices of e are known to be blocked\\ \hline
  	 $Obs(b,a)$ & True if and only if b' is reached after taking a in b and there exist at least one edge e in which $stb(b,e)\neq stb(b',e)$.\\ \hline
  	 $REACH_M(b_n,b_1)$ & True if and only if $b_n$ is reachable from $b_1$ in belief-MDP M. i.e. there exist $b_1,...,b_{n-1}$ such that $\prod \limits_{0\leq i\leq n-1,a\in A}Tr(b_i,a,b_{i+1})>0$\\ \hline
  		 \end{tabular}
\end{center}

\end{document}